\pdfoutput=1
\documentclass[10pt]{article}

\usepackage{amsmath,amsfonts,bm}

\def\eqref#1{equation~\ref{#1}}

\def\1{\bm{1}}

\DeclareMathAlphabet{\mathsfit}{\encodingdefault}{\sfdefault}{m}{sl}
\SetMathAlphabet{\mathsfit}{bold}{\encodingdefault}{\sfdefault}{bx}{n}

\newcommand{\R}{\mathbb{R}}

\usepackage[top=1in, bottom=1in, left=1.25in, right=1.25in]{geometry}
\usepackage{authblk}                    \usepackage{xurl}      

\usepackage{subcaption}
\usepackage{natbib}                     \usepackage{amsmath, amssymb, amsthm}   \usepackage{graphicx}                   \usepackage{hyperref}                   \usepackage{enumitem}                   \usepackage{bm}                         \usepackage[nice]{nicefrac}
\usepackage{microtype}

\usepackage{algorithm}
\usepackage{algorithmic}
\usepackage{booktabs}
\usepackage{xcolor}                     
\definecolor{darkblue}{rgb}{0.0,0.0,0.65}
\definecolor{darkred}{rgb}{0.68,0.05,0.0}
\definecolor{darkgreen}{rgb}{0.0,0.29,0.29}
\definecolor{darkpurple}{rgb}{0.47,0.09,0.29}

\usepackage{hyperref}
\hypersetup{
   colorlinks = true,
   citecolor  = darkblue,
   linkcolor  = darkred,
   filecolor  = darkblue,
   urlcolor   = darkblue,
 }

\usepackage[nameinlink,noabbrev]{cleveref}

\usepackage{caption}

\usepackage{wrapfig}

\newtheorem{lemma}{Lemma}
\newtheorem{proposition}{Proposition}

\theoremstyle{definition}

\newcommand{\norm}[1]{\lVert #1 \rVert}

\newcommand{\ip}[2]{\langle#1,#2\rangle}
\newcommand{\Sym}{\mathbb{S}}

\renewcommand{\eqref}[1]{(\ref{#1})}

\title{Implicit Bias in Matrix Factorization and its \\Explicit Realization in a New Architecture}

\author[1]{Yikun Hou}
\author[2]{Suvrit Sra}
\author[3]{Alp Yurtsever}
\affil[1,3]{Umeå University, Sweden}
\affil[2]{Technical University of Munich, Germany}
\affil[1]{\texttt{yikun.hou@umu.se}}
\date{}

\begin{document}

\maketitle

\begin{abstract}
Gradient descent for matrix factorization exhibits an implicit bias toward approximately low-rank solutions. While existing theories often assume the boundedness of iterates, empirically the bias persists even with unbounded sequences. 
This reflects a dynamic where factors develop low-rank structure while their magnitudes increase, tending to align with certain directions. To capture this behavior in a stable way, we introduce a new factorization model: $X\approx UDV^\top$, where $U$ and $V$ are constrained within norm balls, while $D$ is a diagonal factor allowing the model to span the entire search space. Experiments show that this model consistently exhibits a strong implicit bias, yielding truly (rather than approximately) low-rank solutions. Extending the idea to neural networks, we introduce a new model featuring constrained layers and diagonal components that achieves competitive performance on various regression and classification tasks while producing lightweight, low-rank representations.
\end{abstract}

\section{Introduction}
The Burer--Monteiro (BM) factorization~\citep{burer2003nonlinear} is a classical technique for obtaining low-rank solutions in optimization. 
One can view it as a simple neural network that uses a single layer of hidden neurons under linear activation. 
Given the factorization $X=UV^\top$ where $U \in \mathbb{R}^{d \times r}$ and $V \in \mathbb{R}^{c \times r}$, $U$ and $V$ can be interpreted as the weights of the first and second layers, and $r$ as the number of hidden neurons. 
As a result, matrix factorization has become a convenient model for investigating the principles underlying the empirical success of neural networks. 

For instance, \citet{gunasekar2017implicit} demonstrate that gradient descent (with certain parameter selection) on BM factorization tends to converge toward approximately low-rank solutions even when $r=d$. Based on this observation, they conjecture that \emph{``with small enough step-sizes and initialization close enough to the origin, gradient descent on full-dimensional factorization converges to the minimum nuclear norm solution.''} 

In a follow-up work, \citet{razin2020implicit} present a counterexample demonstrating that implicit regularization in BM factorization \emph{cannot be explained by minimal nuclear norm}, or in fact any norm. Specifically, they present instances where the gradient method applied to BM factorization yields a diverging sequence, and all norms thus grow toward infinity.
Intriguingly, despite this divergence, they found that the rank of the estimate decreases toward its minimum. 

In fact, it is not uncommon for diverging sequences to follow a structured path. 
A prime example is the Power Method, the fundamental algorithm for finding the largest eigenvalue and eigenvector pair of a matrix.
Starting from a random initial point $x_0$, the Power Method iteratively updates the estimate by multiplying it with the matrix. 
This process amplifies the component of the vector that aligns with the direction of the dominant eigenvector more than the other components, progressively leading $x_k$ to align with this eigenvector.
In practice, $x_k$ is scaled after each iteration to avoid numerical issues from divergence. 

This perspective underpins our approach. 
We interpret implicit bias in BM factorization  as arising from growth dynamics that amplify certain directions while suppressing others, thereby inducing low-rank structure. 
To capture this effect in a stable way, we propose a new factorization model that disentangles structured bias from instability.

\subsection{Overview of main contributions}
\begin{list}{{\color{darkred!90}\tiny$\blacksquare$}}{\leftmargin=1.25em}
  \setlength{\itemsep}{0pt}
\item \textbf{A novel matrix factorization formulation}. 
We introduce a new factorization model, $X = UDV^\top$, where $U$ and $V$ are constrained within Frobenius norm balls. Projection onto this ball results in a scaling step. The middle term $D$ is a diagonal matrix that allows the model to explore the entire search space despite $U$ and $V$ being bounded.\footnote{The reader may notice a ``syntactic'' similarity with singular value decomposition; except using simpler Frobenius norm constraints on $U$ and $V$ instead of orthogonality.} 

\vskip1pt
Through extensive empirics we demonstrate that the gradient method applied to the proposed formulation exhibits a pronounced implicit bias toward low-rank solutions. 
We compare our formulation against the standard BM factorization with two unconstrained factors.  
We investigate how step size and initialization influence implicit bias, building on prior work that points to these as potential contributors. 
We find that our factorization approach largely obviates the need to rely on these conditions: it consistently finds \textbf{\emph{truly (rather than approximately) low-rank solutions}} across a wide range of initializations and step-sizes in our experiments. 

\item \textbf{A novel neural network architecture}. Motivated by the strong bias for low-rank solutions of the proposed factorization, we extend it to deep neural networks. We do so by adding constrained layers and diagonal components. We show that this new model achieves competitive performance with standard architectures across various regression and classification tasks. Importantly, our approach exhibits bias towards low-rank solutions, resulting in a natural pruning procedure that delivers compact, lightweight networks without compromising performance. 
\end{list}

\subsection{Related Work}
\label{gen_inst}

\textbf{Burer-Monteiro factorization.}
BM factorization was proposed for solving semidefinite programs \citep{burer2003nonlinear, 01_BM_first2} and has been recognized for its efficiency in addressing low-rank optimization problems \citep{boumal2016non,park2018finding}. 
Building on the connections between matrix factorization and training problems for two-layer neural networks, BM has served as foundational model for understanding implicit bias and developing theoretical insights. 

\textbf{Implicit regularization.} 
One promising line of research that aims to explain the successful generalization abilities of neural networks is that of `implicit regularization' induced by the optimization methods and architectures \citep{neyshabur2014search, neyshabur2017exploring, 01_IRandNN}. 
Several studies explore matrix factorization to investigate implicit bias \citep{gunasekar2017implicit, arora2018stronger, razin2020implicit, belabbas2020implicit,li2021towards}. 
Much of the existing work focuses on gradient flow dynamics in the limit of infinitesimal learning rates. 
Exceptionally, \citet{gidel2019implicit} examine discrete gradient dynamics in two-layer linear neural networks, showing that the dynamics progressively learn solutions of reduced-rank regression with a gradually increasing rank. 

\textbf{Constrained neural networks.}
Regularizers are frequently used in neural network training to prevent overfitting and improve generalization, or to achieve structural benefits such as sparse and compact network architectures \citep{scardapane2017group}. 
However, it is conventional to apply these regularizers as penalty functions in the objective rather than constraints. 
This approach is likely favored due to the ease of implementation, as pre-built functions are readily available in common neural network packages. 
Regularization in the form of constraints appears to be rare in neural network training. 
One notable exception is in the context of neural network training with the Frank-Wolfe algorithm \citep{pokutta2020deep, zimmer2022compression, macdonald2022interpretable}. 
Recently, \citet{pethick2025training} revealed parallels between Frank-Wolfe on constrained networks and algorithms that post-process update steps, such as Muon \citep{jordan2024muon}, which achieves state-of-the-art results on nanoGPT by orthogonalizing the update directions before applying them. 

\textbf{Pruning.}
Neural networks are often overparameterized, which can enhance generalization and avoid poor local minima. But such models then suffer from excessive memory and computational demands, making them less efficient for deployment in real-world applications \citep{00_overPbenefits}. 
Pruning reduces the number of parameters, resulting in more compact and efficient models that are easier to deploy. 
A comprehensive review on pruning is beyond the scope of this paper due to space limitations and the diversity of approaches. 
We refer to \citep{reed1993pruning,00_purningSurvey,cheng2024survey} and the references therein for detailed reviews. 
Pruning by singular value thresholding has recently shown promising results, particularly in natural language processing \citep{chen2021drone}, and is often used along with various enhancements such as importance weights and data whitening~for effective compression of large language models \citep{hsu2022language, yuan2023asvd, wang2024svd}. 

\section{Matrix Factorization with a Diagonal Component}

Consider the problem of reconstructing a matrix $\smash{X \in \R^{d \times c}}$ from a set of linear measurements $\smash{b = \mathcal{A}(X) \in \R^n}$. 
We focus on recovering a PSD matrix, without loss of generality since the general case can be reformulated as a PSD matrix sensing problem~\citep{park2017non}. 
We particularly focus on the data-scarce setting where $n \ll d^2$, a notable example is matrix completion. 
This problem is inherently under-determined, but successful recovery is possible if $X$ is low-rank \citep{candes2012exact}. 

One popular approach is the BM factorization, which reparametrizes the decision variable $X$ as $UU^\top$ with $U \in \R^{d \times r}$, and $r$ is a positive integer that controls the rank of the resulting product.  
Then, the problem can be formulated as: 
\begin{equation}\label{eqn:uu-ls}
\min_{U \in \R^{d\times r}} \quad \frac{1}{2} \|\mathcal{A}(UU^\top) - b \|_2^2. 
\end{equation}
While finding the global minimum is intractable, a local solution can be found by using gradient descent~\citep{lee2016gradient}. 
Initializing at some $U_0 \in \mathbb{R}^{d \times r}$, perform:
\begin{equation*}
U_{k+1} = U_k - \eta \nabla_U f(U_kU_k^\top),
\end{equation*}
where $\eta > 0$ is the step-size, and the gradient is computed as $\nabla_U f(UU^\top) = 2 \nabla f(UU^\top) U$. 

Selecting the factorization rank $r$ is a critical decision. A small $r$ may lead to spurious local minima, resulting in inaccurate outcomes  \citep{waldspurger2020rank}. 
Conversely, a large $r$ might weaken rank regularization, rendering the problem underdetermined. Conventional wisdom suggests finding a moderate compromise between these two extremes. 
However, a key observation in \citep{gunasekar2017implicit} is that the gradient method applied to \eqref{eqn:uu-ls} exhibits a tendency towards \emph{approximately low-rank} solutions even when $r = d$.

\subsection{The Proposed Factorization}

We propose reparameterizing $X = UDU^\top$, where $U \in \mathbb{R}^{d\times r}$ is constrained to have a bounded norm, and $D \in \mathbb{R}^{r \times r}$ is a non-negative diagonal matrix:
\begin{equation}\label{eqn:udu-ls}
    \min_{\substack{U \in \R^{d\times r} \\ D \in \R^{r\times r}}} ~ \frac{1}{2} \|\mathcal{A}(UDU^\top) - b \|_2^2 \quad \mathrm{s.t.} \quad \|U\|_F \leq \alpha,~~D_{ii} \geq 0, ~~D_{ij} = 0, \quad  \forall i ~\text{and}~ \forall j \neq i
\end{equation}
where $\alpha > 0$ is a model parameter. When the problem is well-scaled, for instance through basic preprocessing with data normalization, we found that $\alpha = 1$ is a reasonable choice. 

We perform simultaneous projected-gradient updates on $U$ and $D$ with step-size $\eta >0$:
\begin{equation}\label{alg:udu}
\begin{aligned}
U_{k+1} & = \Pi_{U} \left( U_k - \eta \nabla_U f(U_kD_kU_k^\top) \right) \\
D_{k+1} & = \Pi_{D} \left( D_k - \eta \nabla_D f(U_{k}D_kU_{k}^\top) \right),
\end{aligned}
\end{equation}
where $\Pi_{U}$ and $\Pi_{D}$ are projections for the constraints in \eqref{eqn:udu-ls}; while the gradients are
\begin{equation*}
\begin{aligned}
    &\nabla_U f(UDU^\top) =  2 \nabla f(UDU^\top) U D \\
    &\nabla_D f(UDU^\top) =  U^\top \nabla f(UDU^\top) U.
\end{aligned}
\end{equation*}

\subsection{Experiments on Matrix Factorization}
\label{sec:matrix-factorization-experiments}

We present numerical experiments that compare the proposed approach with the classical BM factorization on synthetic matrix completion and on a real Fourier ptychography phase retrieval task. 

\subsubsection{Matrix Completion}
We set up a synthetic matrix completion problem to recover a PSD matrix $\smash{X_\natural = U_\natural U_\natural^\top}$, where the entries of $\smash{U_\natural \in \mathbb{R}^{100 \times 3}}$ are drawn independently from $N(0,1)$. We randomly sample $n = 900$ entries of $X_\natural$ and store them in the vector $b \in \mathbb{R}^n$. 
The goal is to recover $X_\natural$ from $b$ by solving problems \eqref{eqn:uu-ls} and \eqref{eqn:udu-ls}. 

Although the ground-truth rank is three, we adopt a full-rank factorization with $r = d$ in order to study the implicit bias. 
For initialization, we generate $U_0 \in \mathbb{R}^{d \times d}$ with entries drawn independently from $N(0,1)$; and rescale it to have Frobenius norm $\xi > 0$, which sets its initial distance from the origin. 
For the UDU formulation, we set $D_0 = I$ and the constraint parameter $\alpha = 1$. 
Our experiments focus on how initialization ($\xi$) and step size ($\eta$) affect the singular value spectrum of the solutions. 

We first investigate the impact of step-size. 
To this end, we fix $\xi = 10^{-2}$ and test different values of~$\eta$. 
The results are shown in \Cref{fig:BM-fact}. 
In the left panel, we plot the objective residual as a function of iterations. 
As expected, we observe that a smaller step-size slows down convergence. 
The right panel presents the singular value spectra after $10^{6}$ iterations, revealing no clear correlation between step size and the spectral decay of the solutions. 
Notably, while the classical BM factorization finds an approximately low-rank solution, the proposed factorization converges to a solution of true rank three, matching the ground truth.

\begin{figure}[!h]
\centering
  \includegraphics[width=0.65\linewidth]{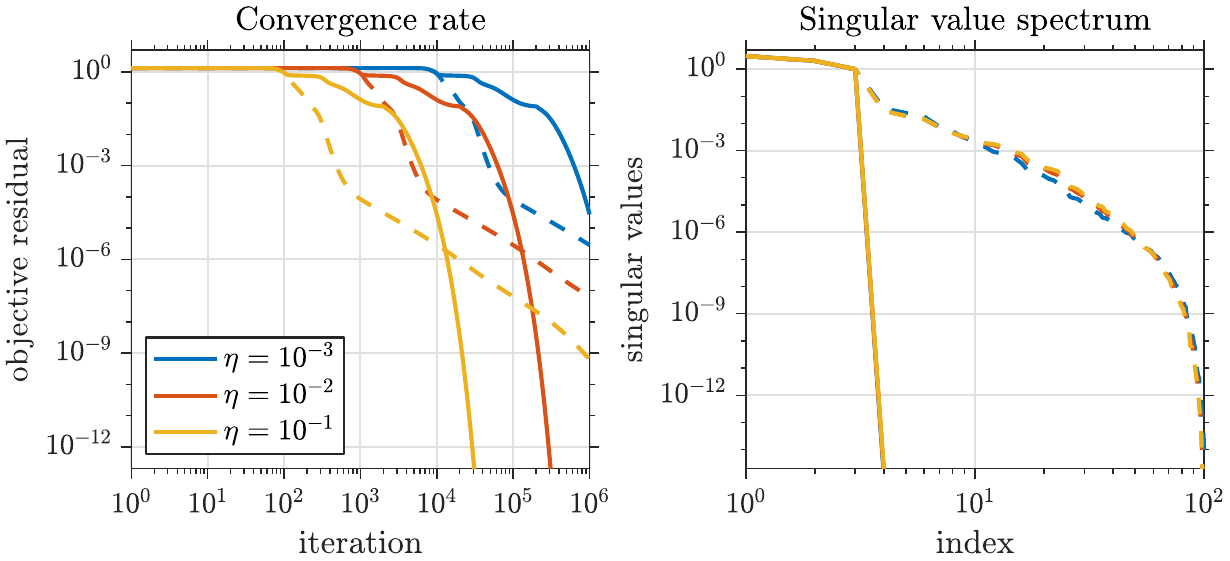} 
  
{\scriptsize Impact of \textbf{step-size} ($\eta$), with fixed initialization.} 
  
  \vspace{1em}

\centering
  \includegraphics[width=0.65\linewidth]{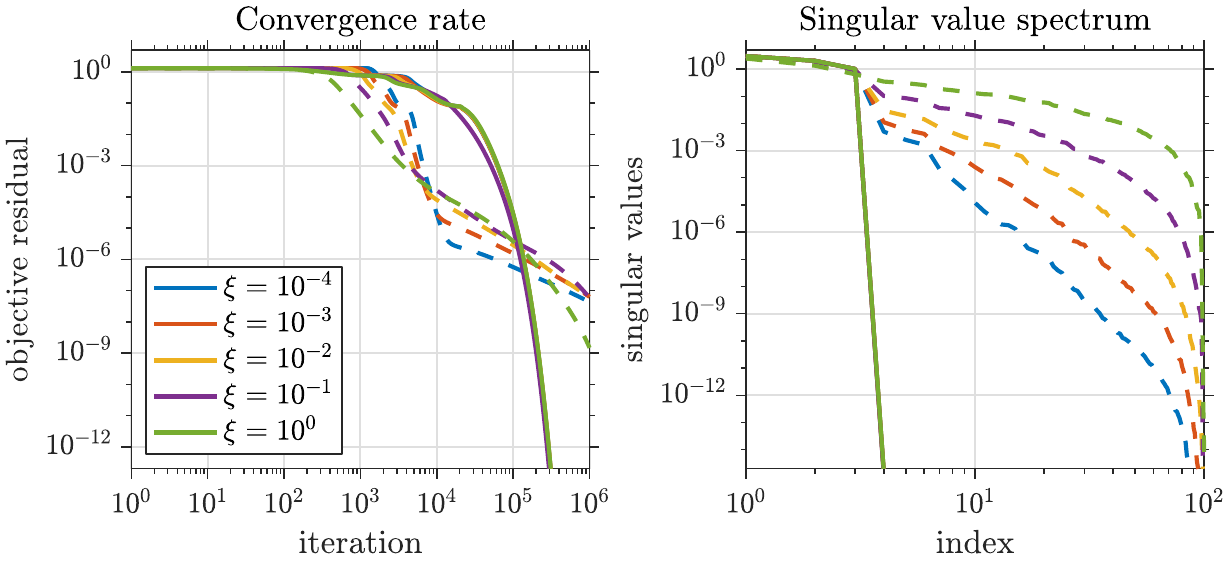}
  
  {\scriptsize Impact of \textbf{initial distance to origin} $(\xi)$, with fixed step-size.} 
  
 \captionsetup{font=small}
  \caption{Impact of step-size and initialization on implicit bias. \textbf{Solid lines represent our UDU factorization}, while \textbf{dashed lines denote the classical BM factorization}. [\textit{Left}] Objective residual vs. iterations. [\textit{Right}] Singular value spectrum after $10^{6}$ iterations. In all cases, UDU produces truly low-rank solutions, whereas the classical approach results in approximate low-rank structures.}
    \label{fig:BM-fact}
\end{figure}

Next, we examine the impact of initialization. 
We fix the step-size at $\eta = 10^{-2}$ and evaluate the effect of $\xi$. 
In the classical BM factorization, initializing closer to the origin leads to solutions with faster spectral decay, consistent with the observations of \citet{gunasekar2017implicit}. 
In contrast, the proposed UDU factorization exhibits a pronounced implicit bias toward exactly low-rank solutions, independent of the choice of $\xi$. 

We also conducted the same experiments under Gaussian noise in \Cref{app:matrix-completion-noisy}, and the results are qualitatively consistent. 
The UDU model continues to exhibit an implicit bias toward truly low-rank solutions, while the classical method gets approximately low-rank solutions. 
Exact rank-3 recovery is no longer possible~in the noisy setting, but the UDU factorization still generates a truly low-rank solution, at a slightly higher rank.

\subsubsection{Fourier Ptychography}
\label{sec:fourier-ptychography}

We consider a phase retrieval problem that arises in Fourier ptychography (FP), an imaging technique that reconstructs the phase of a complex-valued image (the sample’s transmission function) from multiple Fourier measurements under illuminations from different angles. 
We use real measurements acquired from a functional FP system applied to a human blood cell sample, obtained from \citet{horstmeyer2015solving}. 
The target is a $160 \times 160$ complex-valued image, vectorized into $d = 160^2 = 25'600$ dimensions and represented by a vector $x_\natural \in \mathbb{C}^d$. 

The measurements are modeled as $b_i \approx |\langle a_i, x_\natural \rangle|^2$ for $i=1,\dots,n$, where the approximation accounts for experimental noise from sensor limitations, illumination variability, and other imperfections. 
The sensing vectors $a_i \in \mathbb{C}^d$ are derived from windowed Fourier transforms under different illumination patterns. 
The dataset consists of $29$ illumination patterns, each recorded as an $80 \times 80$ intensity image, yielding a total of $n = 185'600$ measurements. 

FP is a nonconvex quadratic inverse problem, which admits a lifted formulation as a PSD matrix recovery problem with a rank-1 constraint. 
The lifted formulation corresponds to recovering a $25'600 \times 25'600$ complex PSD matrix from the measurements, which makes solving at full-rank factorization (with $r = d$) computationally infeasible. 
Instead, we set the factorization rank to $r=20$ and compare the proposed UDU formulation with the standard BM factorization. 
Both methods are run for $10^4$ iterations with step size $0.1$, initialized from the same random $U_0$ with i.i.d.\ Gaussian complex entries normalized in Frobenius norm. For UDU, we set $D_0 = I$ and $\alpha = 1$. 
Once solved, we recover the image estimate by extracting the dominant singular vector from the ($d \times r$) dimensional factors. 

The results are shown in \Cref{fig:ptychography}. 
The UDU factorization successfully constructs the image, whereas BM produces a heavily corrupted reconstruction. 
An inspection of the singular value spectrum reveals that UDU effectively converges to a rank-1 solution, while BM yields a spectrum with slow decay, explaining the difference in reconstruction quality. 
Importantly, UDU achieves this from a random initialization, in contrast to existing nonconvex phase retrieval methods that typically require carefully designed spectral initialization procedures \citep{candes2015phase,bian2015fourier}. 

\begin{figure}[!htbp]
\centering
    \includegraphics[width=0.8\linewidth]{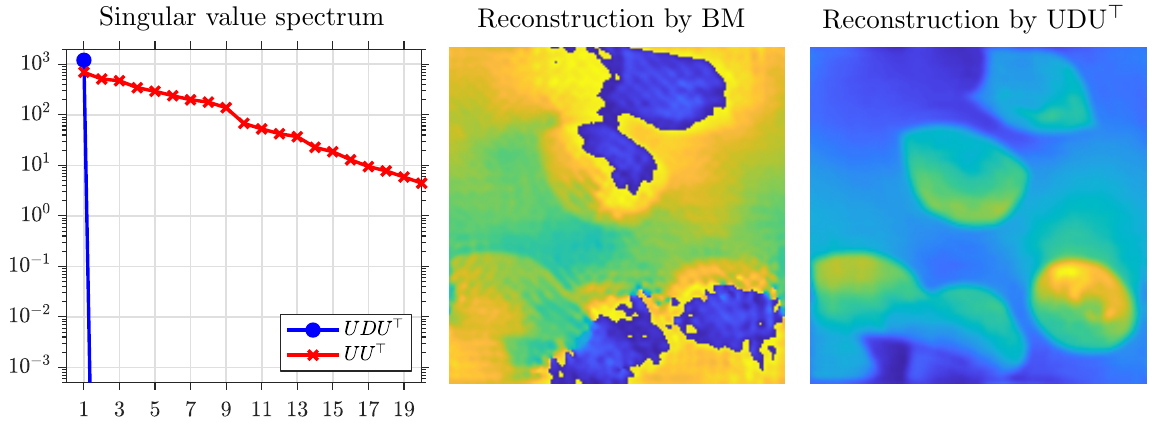}
    \captionsetup{font=small}
    \caption{[\textit{Left}] Singular value spectra of the solutions obtained by BM and UDU after $10^4$ iterations with step size $0.1$. [\textit{Middle}] The corresponding reconstructed image from the BM factorization exhibits artifacts. [\textit{Right}] The UDU factorization produces a clean recovery.}    
    \label{fig:ptychography}
\end{figure}

Additional details are provided in \Cref{app:phase-retrieval}. 
There, we show reconstructions at intermediate iterations, and experiments on smaller synthetic phase retrieval problems using full-rank factorizations ($r=d$). 
In all cases, the UDU framework promotes low-rank solutions, and this structural bias improves phase recovery.

\subsection{Theoretical Insights into the Inner Workings and Implicit Bias}
\label{sec:theoretical-insights}

A fixed-point analysis of the proposed method provides valuable insights into its inner workings.

\begin{lemma}[Fixed-point characterization]
\label{lem:fixed-point-char}
Define the update variables before projection as $\bar{U} = U - 2 \eta \nabla f(X) U D$ and $\bar{D} = D - \eta U^\top \nabla f(X) U$, with  $X = UDU^\top\!$. 
Suppose $(U, D)$ is a fixed point of the algorithm in \eqref{alg:udu}, and let $u_j$ denote the $j^{\text{th}}$ column of $U$ and $\lambda_j$ the $j^{\text{th}}$ diagonal entry of $D$. Then,

\begin{enumerate}[label=(\alph*),itemsep=0.5em, topsep=0.5em, parsep=0pt, partopsep=0pt]
    \item If $\|\bar{U}\|_F \leq \alpha$, then $\nabla f(X) u_j \lambda_j = 0$ for all $j$.
    \item If $\|\bar{U}\|_F > \alpha$, then there exists some $\beta > 0$ such that 
    $\nabla f(X) u_j \lambda_j = - \beta u_j$ for all $j$.
    \item If $\lambda_j = 0$, then $\smash{u_j^\top \nabla f(X) u_j \geq 0}$.
    \item If $\lambda_j > 0$, then $\smash{u_j^\top \nabla f(X) u_j = 0}$.
\end{enumerate}
The proof is given in \Cref{sec:appendix-fixed-point-analysis}. 
\end{lemma}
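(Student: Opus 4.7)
The plan is to unpack the fixed-point equations $U = \Pi_U(\bar{U})$ and $D = \Pi_D(\bar{D})$ using the explicit forms of the two projections, then read off the four claims by splitting into cases. The projection $\Pi_U$ onto the Frobenius ball $\{W : \|W\|_F \leq \alpha\}$ equals the identity when $\|\bar{U}\|_F \leq \alpha$ and equals the rescaling $W \mapsto (\alpha/\|W\|_F) W$ otherwise; the projection $\Pi_D$ onto the cone of non-negative diagonal matrices zeroes out off-diagonal entries and applies a non-negative thresholding to the diagonal.

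For parts (a) and (b), I would first handle the interior case: if $\|\bar{U}\|_F \leq \alpha$, then $\Pi_U(\bar{U}) = \bar{U}$, so the fixed-point equation reduces to $U = U - 2\eta \nabla f(X) U D$, i.e.\ $\nabla f(X) U D = 0$; reading the $j$-th column gives $\nabla f(X) u_j \lambda_j = 0$. For the boundary case $\|\bar{U}\|_F > \alpha$, the projection is the rescaling, so $U = (\alpha/\|\bar{U}\|_F) \bar{U}$, which I would rearrange to $\bar{U} = \gamma U$ with $\gamma = \|\bar{U}\|_F/\alpha > 1$. Substituting the definition of $\bar{U}$ yields $-2\eta \nabla f(X) U D = (\gamma - 1) U$, and reading columns gives $\nabla f(X) u_j \lambda_j = -\beta u_j$ with $\beta = (\gamma-1)/(2\eta) > 0$, which is exactly (b). The only mild subtlety here is verifying that $\gamma > 1$ gives a strictly positive $\beta$, which follows directly from the strict inequality $\|\bar{U}\|_F > \alpha$.

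For parts (c) and (d), the argument is coordinate-wise on the diagonal. The fixed-point condition $D = \Pi_D(\bar{D})$ gives, on the $j$-th diagonal entry, $\lambda_j = \max\bigl(\lambda_j - \eta\, u_j^\top \nabla f(X) u_j, \, 0\bigr)$, since the diagonal of $U^\top \nabla f(X) U$ has $j$-th entry $u_j^\top \nabla f(X) u_j$. If $\lambda_j = 0$, this forces $\lambda_j - \eta u_j^\top \nabla f(X) u_j \leq 0$, giving (c). If $\lambda_j > 0$, the max must be realized by its first argument (otherwise $\lambda_j$ would equal $0$), so $\lambda_j = \lambda_j - \eta u_j^\top \nabla f(X) u_j$, which yields $u_j^\top \nabla f(X) u_j = 0$ and establishes (d).

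I expect no serious obstacle: the proof is essentially bookkeeping with the two projection formulas. The only places to be careful are (i) keeping the strict versus non-strict inequality in the boundary case of $\Pi_U$ so that $\beta$ in (b) is genuinely positive, and (ii) remembering that $\bar{D}$ is not diagonal in general, so that $\Pi_D$ simultaneously zeroes out off-diagonal terms and thresholds the diagonal — but since we only read off diagonal entries, the off-diagonal part does not enter the final conditions.
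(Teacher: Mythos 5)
Your proposal is correct and follows essentially the same route as the paper's proof in \Cref{sec:appendix-fixed-point-analysis}: split on whether $\Pi_U$ acts as the identity or as a rescaling (yielding (a) and (b) with $\beta = (\gamma-1)/(2\eta) = (\|\bar U\|_F - \alpha)/(2\alpha\eta) > 0$), and read the fixed-point condition for $\Pi_D$ entrywise on the diagonal to get (c) and (d). The only cosmetic difference is that you case-split on $\lambda_j = 0$ versus $\lambda_j > 0$ while the paper splits on the sign of $\bar\lambda_j$; these are equivalent.
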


At first glance, condition $(b)$ might suggest that choosing a smaller $\alpha$ leads to fixed points where the columns of $U$ align with the negative eigenvectors of $\nabla f(X)$. However, the following proposition shows that no valid fixed point can satisfy $\|\bar U\|_F > \alpha$.

\begin{proposition}[Exclusion of spurious fixed points]
There are no fixed points $(U,D)$ of the algorithm in~\eqref{alg:udu} such that $\|\bar U\|_F > \alpha$. 
As a consequence, all valid fixed points satisfy $\|\bar U\|_F \leq \alpha$, and the fixed point structure coincides with that of BM factorization (after the change of variables $U \mapsto UD^{1/2}$). 
Thus, adding the diagonal factor $D$ and bounding $U$ does not create new fixed points for $X$.
\end{proposition}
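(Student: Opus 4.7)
The plan is to prove the first claim by contradiction using the structural conditions in Lemma~\ref{lem:fixed-point-char}, and then to derive the equivalence with BM fixed points as a direct corollary by a change of variables.

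First I would suppose, for contradiction, that $(U,D)$ is a fixed point with $\|\bar U\|_F > \alpha$. By part (b) of Lemma~\ref{lem:fixed-point-char}, there exists $\beta > 0$ such that $\nabla f(X) u_j \lambda_j = -\beta u_j$ for every column $j$. I would then split on the value of $\lambda_j$. If $\lambda_j = 0$, the identity collapses to $-\beta u_j = 0$, and since $\beta > 0$ this forces $u_j = 0$. If $\lambda_j > 0$, I would take the inner product of both sides with $u_j$ and apply part (d), which asserts $u_j^\top \nabla f(X) u_j = 0$. This yields $0 = -\beta \|u_j\|_2^2$, and again $u_j = 0$ since $\beta > 0$. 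Consequently $U = 0$.

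With $U = 0$, the pre-projection update reduces to $\bar U = -2\eta \nabla f(X)\, U D = 0$, so $\|\bar U\|_F = 0$, contradicting $\|\bar U\|_F > \alpha \geq 0$. Hence every fixed point of \eqref{alg:udu} must satisfy $\|\bar U\|_F \leq \alpha$, which places us in case (a) of Lemma~\ref{lem:fixed-point-char}.

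For the second part, I would apply the change of variables $\tilde U = U D^{1/2}$, so that $\tilde U \tilde U^\top = U D U^\top = X$. In case (a), Lemma~\ref{lem:fixed-point-char} gives $\nabla f(X) u_j \lambda_j = 0$ for every $j$. For columns with $\lambda_j > 0$, dividing by $\lambda_j$ yields $\nabla f(X) u_j = 0$, and multiplying by $\sqrt{\lambda_j}$ gives $\nabla f(X) \tilde u_j = 0$; for columns with $\lambda_j = 0$, we have $\tilde u_j = 0$, so the same identity holds trivially. Stacking these yields $\nabla f(X)\, \tilde U = 0$, i.e.\ $\nabla_{\tilde U} f(\tilde U \tilde U^\top) = 0$, which is precisely the BM stationarity condition. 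Thus each valid fixed point of \eqref{alg:udu} induces a BM fixed point with the same $X$, and no new fixed points for $X$ are introduced by adding $D$ and constraining $U$.

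The main obstacle is ensuring the correct handling of the $\lambda_j = 0$ and $\lambda_j > 0$ cases in each step: the dichotomy in Lemma~\ref{lem:fixed-point-char} interacts subtly with the multiplicative structure $\nabla f(X) u_j \lambda_j$, and in both (b) and (a) one must avoid dividing by $\lambda_j$ when it vanishes. Once this bookkeeping is carefully done, the rest of the argument is essentially algebraic.
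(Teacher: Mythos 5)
Your argument for the exclusion of spurious fixed points is essentially identical to the paper's: both combine Lemma~\ref{lem:fixed-point-char}$\,(b)$ with $(d)$ when $\lambda_j>0$, handle $\lambda_j=0$ directly from $(b)$, conclude $U=0$ and hence $\bar U=0$, and derive the contradiction with $\|\bar U\|_F>\alpha$. Your forward direction of the correspondence with BM (defining $\tilde U = UD^{1/2}$ and deducing $\nabla f(X)\tilde U = 0$ from condition $(a)$) also matches the paper, with the same care about not dividing by $\lambda_j=0$.

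The one omission is the converse inclusion. The proposition asserts that the fixed-point structure \emph{coincides} with that of BM, which requires showing not only that every UDU fixed point yields a BM stationary point (which you prove), but also that every BM stationary point is realized by some UDU fixed point. The paper handles this by an explicit construction: given a BM stationary point $\hat U$ with $\nabla f(X)\hat U = 0$, set $U = (\alpha/\|\hat U\|_F)\hat U$ and $D = (\|\hat U\|_F^2/\alpha^2) I$, so that $\|U\|_F = \alpha$, $UDU^\top = X$, and $\nabla f(X)UD = 0$, verifying the fixed-point conditions of \eqref{alg:udu}. Your proof fully establishes the headline claims (no fixed points with $\|\bar U\|_F>\alpha$, and no \emph{new} fixed points for $X$), but without the converse you have only one of the two inclusions needed for the ``coincides'' statement; adding the one-line construction above closes the gap.
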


\begin{proof}
Suppose $\|\bar U\|_F > \alpha$ and $\lambda_j > 0$. Then, by combining \Cref{lem:fixed-point-char}$\,(b)$ and $(d)$, we get
\begin{equation*}
    0 \overset{(d)}{=} u_j ^\top\nabla f(X) u_j  \overset{(b)}{=} - \tfrac{\beta}{\lambda_j} \|u_j\|^2.
\end{equation*}
which forces $u_j = 0$. 
If instead $\lambda_j = 0$, then $(b)$ again implies $u_j = 0$. 
Thus in both cases the columns of $U$ must vanish, leading to $U = 0$, and hence $\bar U = 0$, which contradicts the assumption $\|\bar U\|_F > \alpha$. 
Hence, for any fixed point of the algorithm, the constraint is inactive and $\|\bar U\|_F \leq \alpha$.

It remains to characterize these fixed points. 
From Lemma~\ref{lem:fixed-point-char}\,$(a)$, any fixed point satisfies $\nabla f(X) u_j \lambda_j = 0$ for all $j$. 
Define $\hat{u}_j := u_j \sqrt{\lambda_j}$ and let $\hat U = [\hat u_1,\ldots,\hat u_r]$. 
Then $\hat U \hat U^\top = U D U^\top = X$ and $\nabla f(X)\hat u_j=0$ for all $j$, which coincides with the fixed-point condition of the BM factorization. 
Hence, the fixed points of our method correspond exactly to those of BM. 

Conversely, let $\hat U$ be a stationary point of the BM factorization, so $X = \hat U \hat U^\top$ and 
$\nabla f(X)\hat U = 0$. For any $\alpha > 0$, set $U = (\alpha / \|\hat U\|_F) \hat U$ and 
$D = (\|\hat U\|_F^2 / \alpha^2 ) I$. Then $\|U\|_F = \alpha$, $UDU^\top = X$, and $\nabla f(X) U D = 0$. 
Thus, every stationary point of BM corresponds to a stationary point of UDU, and vice versa, at the level of $X = UDU^\top = \hat{U}\hat{U}^\top$. 
\end{proof}

Although the fixed points of UDU coincide with those of BM, the above analysis still provides insight into the low-rank bias of the algorithm. 
In particular, when $U$ grows such that $\norm{\bar{U}} \geq \alpha$, the updates tend to amplify directions aligned with the negative eigenvectors of $\nabla f(X)$.  
Concretely, we can express the update rules in terms of the columns of $u_j$ and $\lambda_j$ as $\bar{u}_j = u_j - 2 \eta \nabla f(X) u_j \lambda_i$ and $\smash{\bar{\lambda}_j = \lambda_j - \eta u_j^\top \nabla f(X) u_j}$. 
These expressions show that both $\bar{u}_j$ and $\bar{\lambda}_j$ tend to grow in the directions aligned with the negative eigenvectors of $\nabla f(X)$, even though, by \Cref{lem:fixed-point-char}\,$(d)$, no fixed point with $\lambda_j>0$ is possible unless $u_j^\top \nabla f(X) u_j = 0$. 
This tension suggests that the dynamics either suppress some columns of $U$ (via projection) or drive $X$ toward configurations where $\nabla f(X)u_j=0$, thereby reinforcing low-rank structure along the path to convergence. 

It is worth noting, that this discussion is a simplified perspective aimed at gaining intuition. 
In reality, the alignment or shrinking of the columns of $U$ and the minimization of $f(X)$ along these directions occur simultaneously and interact in a complex manner. 
Nevertheless, we can clearly observe these effects in our numerical experiments.
In \Cref{app:rank-revealing}, we present the evolution of the column norms of $U$ and the diagonal entries of $D$ over the iterations in our matrix completion experiment.
Our results show that initially, a few specific columns of $U$ grow, pushing the others numerically to zero.
Once $f(X)$ is effectively minimized with respect to these initial columns, additional columns are identified and begin to grow.
Eventually, the algorithm converges to a low-rank solution, with $UDU^\top$ becoming rank-revealing since only a few columns of $U$ remain nonzero.
We further observe that these nonzero columns are orthogonal, highlighting how the algorithm’s tendency to align $u_j$ with negative eigenvectors of $\nabla f(X)$ implicitly induces structured low-rank solutions along the path.

\section{Neural Networks with Diagonal Hidden Layers}
\label{sec: NNwithDiaonal}

\begin{figure}[!htbp]
  \centering
  \vspace{-0.5em}
  \includegraphics[width=0.7\linewidth]{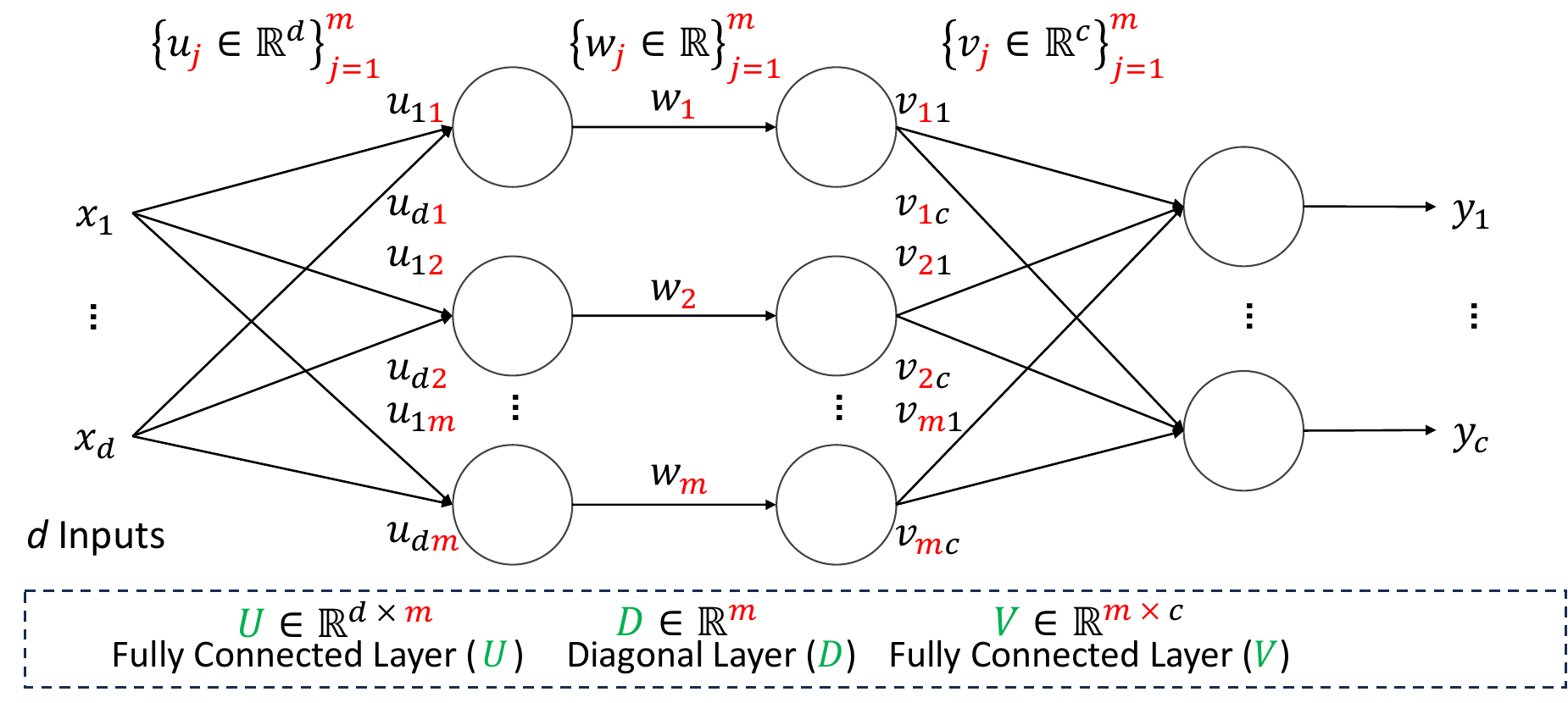}
  \captionsetup{font=small}
  \caption{UDV structure. The weights in diagonal layer $D$ are denoted as $w_j$.}
  \label{fig:UDV-Dense}
\end{figure}

This section extends our approach to neural networks. 
We introduce a three-layer neural network, with the first and third layers fully connected and the middle layer diagonal, as illustrated in \Cref{fig:UDV-Dense}. 

Let $\{(\mathbf{x}_i,\mathbf{y}_i)\}_{i=1}^n \subset \mathbb{R}^d \times \mathbb{R}^c$ denote a dataset of size $n$. 
The resulting network is given by
\begin{equation}
    \phi(\mathbf{x}) := \sum\nolimits_{j=1}^m \mathbf{v}_j w_j \mathbf{u}_j^\top \mathbf{x}
\end{equation}
with the goal of approximating $\phi(\mathbf{x}_i) \approx \mathbf{y}_i$.
Drawing parallels between our matrix factorization model in~\eqref{eqn:udu-ls} and neural network training, we impose Euclidean norm constraints on the weights of the fully connected layers.
Under these conditions, the training problem can be formulated as follows: 
\begin{equation}
\label{eqn:extra-layer-relu}
    \begin{aligned}
        & \min_{\mathbf{u}_j, w_j, \mathbf{v}_j} 
        & & \frac{1}{2n} \sum\nolimits_{i=1}^n \|  \sum\nolimits_{j = 1}^m \mathbf{v}_j \,  w_j\, \mathbf{u}_j^\top \mathbf{x}_i - \mathbf{y}_i\|_2^2 \\ 
        & ~\mathrm{s.t.} & & \sum\nolimits_{j = 1}^m \|\mathbf{u}_j \|_2^2 \leq 1, ~~ \sum\nolimits_{j = 1}^m \|\mathbf{v}_j \|_2^2 \leq 1, \\
        &
        & & \text{and } ~~ w_j \geq 0; \quad \text{for all}~~j = 1,\ldots,m. 
    \end{aligned}
\end{equation}
The norm constraints can be interpreted as a stronger form of weight decay, a standard regularization in neural networks, which lends further justification to our formulation. 
We refer to this neural network structure as UDV.

\subsection{Experiments on Neural Networks}
\label{sec:numerical-experiments-neural-networks}

In this section, we test the UDV structure on regression and classification tasks, comparing it with fully connected two-layer neural networks (denoted as UV in the subsequent text).
This comparison is fair in terms of computational cost since the overhead introduced by the diagonal layer, which can also be regarded as a parameterized linear activation function, is negligible.
To further ensure fairness in comparison, no additional fine-tuning was introduced for the proposed method.

We observe a strong empirical bias toward low-rank solutions in all our experiments with the UDV structure. 
We also present a \textit{proof-of-concept} use case of this strong bias, combined with an SVD-based pruning strategy, to produce compact networks. 

\subsubsection{Implementation Details}

\noindent \textbf{Datasets.} 
We used two datasets for the regression tasks: House Prices - Advanced Regression Techniques (HPART) \citep{04_HP} and New York City Taxi Trip Duration (NYCTTD) \citep{04_NYC}. 
We allocated 80\% of the data for the training and reserved the remaining 20\% for test.
Following \citep{04_hrelu}, we set the number of hidden neurons in the diagonal layer $\smash{m = \texttt{round}\big(\sqrt{(c+2)d}+2\sqrt{d/(c+2)}\big)}$. 
This results in a network structure ($d$-$m$-$c$) of 79-26-1 for HPART, and 12-10-1 for NYCTTD. 

For classification tasks, we used the  CIFAR-10 and CIFAR-100 dataset \citep{krizhevsky2009learning}. 
We applied transfer learning to three modern high-performing architectures: MaxViT-T \citep{04_maxvit}, EfficientNet-B0 \citep{04_effnet}, and RegNetX-32GF \citep{04_regnet}. 
The original classification head (the final linear layer) of each model was replaced with two linear layers, forming the backbone classifier (UV). 
By inserting a diagonal layer between them, the classifier was transformed into the proposed UDV structure. 
The new linear layers were initialized with a truncated normal distribution, and the diagonal layer was initialized as an identity matrix. 
The vector dimension followed that of the preceding layer, resulting in a UDV classifier structure ($d$-$m$-$c$), where $c$ denotes the number of classes (10 for CIFAR-10 and 100 for CIFAR-100). Specifically, the structures are 512-512-$c$ for MaxViT-T, 1280-1280-$c$ for EfficientNet-B0, and 2520-2520-$c$ for RegNetX-32GF.

\noindent \textbf{Loss function.} 
We used mean squared error for regression and cross-entropy loss for classification. 

\noindent \textbf{Optimization methods.} 
We tested the results using four different optimization algorithms for training: Adam \citep{04_Adam}, Mini-Batch Gradient Descent (MBGD) \citep{04_MBGD}, NAdam \citep{04_NAdam}, and Mini-Batch Gradient Descent with Momentum (MBGDM) \citep{04_MBGDM}. 
We tuned the learning rates (LRs, the term \textit{learning rates} is used instead of \textit{step sizes} for clarity in the machine learning context) for all models and optimization algorithms.
In addition, a cosine annealing learning rate scheduler was applied to stabilize training and improve convergence.

\noindent \textbf{Training Procedure and Metrics.}  
The models were trained for 200 epochs on HPART, 50 epochs on NYCTTD, and 100 epochs on CIFAR-10 and CIFAR-100. 
Test loss for regression was averaged over the last 20 and 5 epochs, respectively, whereas test accuracy for classification was averaged over the final 3 epochs.

\subsubsection{Low-rank Bias in Neural Networks}
\label{sec:numerical-experiments-neural-networks-low-rank}

\Cref{tab:results_baseline} presents the test loss (for regression) or test accuracy (for classification) of the UDV model compared to the classical UV model.
For each configuration (dataset and model architecture), the results are obtained by selecting the best algorithm and learning rate pair. 
On CIFAR-100, the UDV model underperforms UV by about 1\% in test accuracy. 
We note, however, that UDV achieves its best performance earlier in training (around 90 epochs), suggesting that early stopping may close this gap. 

\begin{table}[!ht]
\centering
\captionsetup{font=small}
\caption{Comparison of models with their best performance. M, E, and R represent the transferred models MaxVit-T, EfficientNet-B0, and RegNetX-32GF, respectively.}
\label{tab:results_baseline}
\resizebox{0.7\columnwidth}{!}{
\begin{tabular}{@{}ccccllcll@{}}
\toprule
Tasks   & \multicolumn{2}{c}{Regression (Test Loss)} & \multicolumn{6}{c}{Classification (Test Accuracy \%)}           \\ \midrule
Dataset & HPART               & NYCTTD               & \multicolumn{3}{c}{CIFAR-10} & \multicolumn{3}{c}{CIFAR-100} \\ \midrule
UDV &
  $1.304 \times 10^{-3}$ &
  $5.248 \times 10^{-6}$ &
  \multicolumn{3}{c}{\begin{tabular}[c]{@{}c@{}}M: 99.17\\ E: 97.87\\ R: 98.67\end{tabular}} &
  \multicolumn{3}{c}{\begin{tabular}[c]{@{}c@{}}M: 86.18\\ E: 91.63\\ R: 89.79\end{tabular}} \\
UV &
  $1.333 \times 10^{-3}$ &
  $5.251 \times 10^{-6}$ &
  \multicolumn{3}{c}{\begin{tabular}[c]{@{}c@{}}M: 99.11\\ E: 97.99\\ R: 98.65\end{tabular}} &
  \multicolumn{3}{c}{\begin{tabular}[c]{@{}c@{}}M: 87.75\\ E: 92.03\\ R: 90.33\end{tabular}} \\ \bottomrule
\end{tabular}
}
\end{table}

\Cref{fig:AIS_main_SV_Pruning} (top row) shows the singular value spectra for RegNetX-32GF on CIFAR-100 under different optimization algorithms. 
We report the spectra for the $U$ (for the classical model) and $UD$ (for the proposed model) layers, which generate the main data representation, and omit the V layers since it primarily acts as a feature selector and is typically tall ($c \ll m$). 
Collectively, the results show that the UDV framework matches the predictive accuracy of UV while exhibiting a clear implicit bias toward low-rank solutions, as reflected in the faster spectral decay. 

\Cref{fig:svals-hpart} presents similar results for the HPART regression task. The findings are consistent, as UDV induces a strong low-rank bias while maintaining predictive performance comparable to UV.

\begin{figure}[!htbp] 
  \centering
  \vspace{-0.5em}
  \includegraphics[clip, trim=0cm 0cm 0cm 0cm, width=0.5\linewidth]{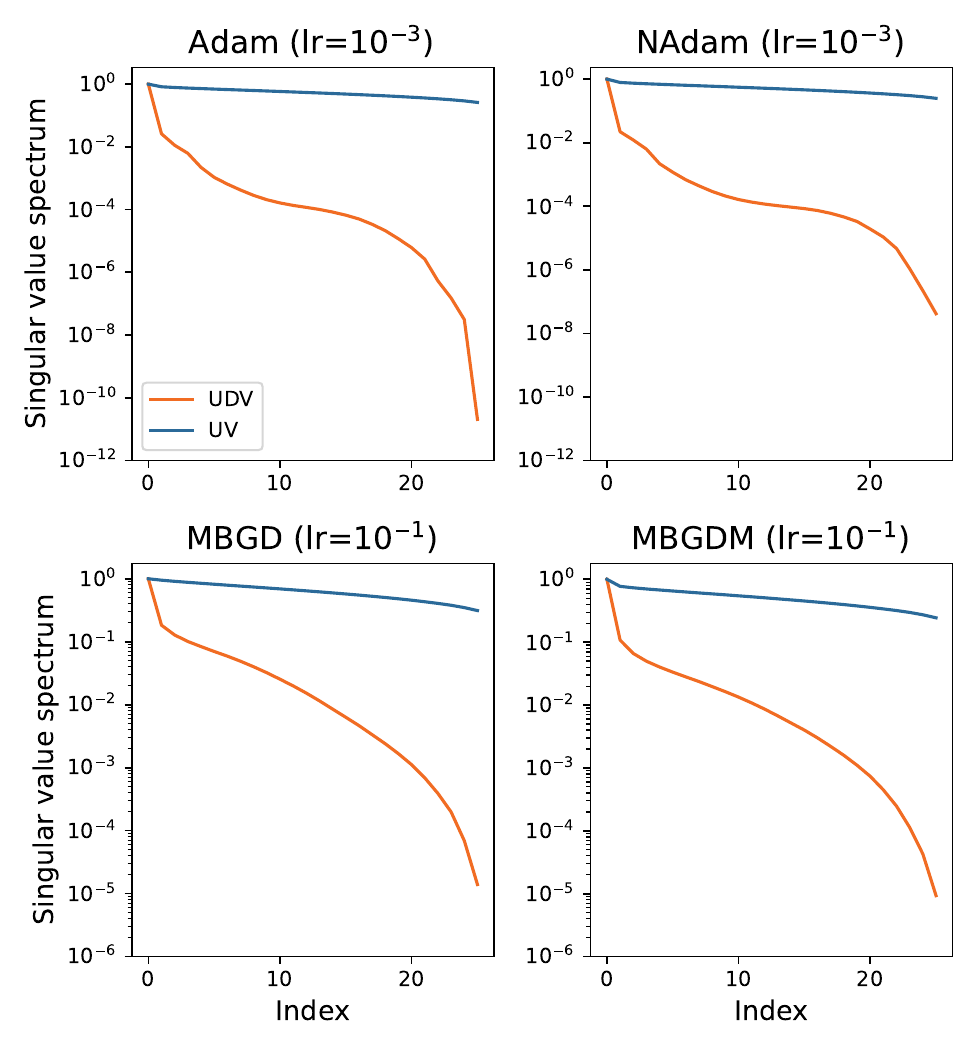}\vspace{-0.5em}
  \captionsetup{font=small}
  \caption{Singular value spectra of the solutions obtained with UV and UDV formulations on the HPART regression task under different optimization algorithms.}
  \vspace{-0.5em}
  \label{fig:svals-hpart}
\end{figure}

\subsubsection{Reducing Network Size via SVD-based Pruning}
\label{sec:SVD-based pruning}

Efficient and lightweight feed-forward layers are crucial for real-world applications. 
For instance, the Apple Intelligence Foundation Models \citep{gunter2024apple} recently reported that pruning hidden dimensions in feed-forward layers yields the most significant gains in their foundation models. 
Building on this insight, we leverage the low-rank bias of the UDV architecture through an SVD-based pruning strategy to produce compact networks without sacrificing performance.

A low-rank solution was observed when applying SVD to $UD$ layers: 
\begin{equation*}
	UD = \texttt{USV}^\top, \quad \texttt{U} \in \mathbb{R}^{d \times m}, \quad \texttt{S} \in \mathbb{R}^{m \times m}, \quad \texttt{V}^\top \in \mathbb{R}^{m \times m}.
\end{equation*} 
By dropping small singular values in $\texttt{S}$, these matrices can be truncated to $\smash{\bar{\texttt{U}} \in \mathbb{R}^{d \times r}}$, $\smash{\bar{\texttt{S}} \in \mathbb{R}^{r \times r}}$ and $\bar{\texttt{V}}^\top \in \mathbb{R}^{r \times m}$, where $0 < r < m$.
Consequently, ($m - r$) neurons can be pruned, and new weight matrices are assigned: \vspace{-0.25em}
\begin{equation*}
	\bar{U} = \bar{\texttt{U}} \in \mathbb{R}^{d \times r}, \quad \bar{D} = \bar{\texttt{S}} \in \mathbb{R}^{r \times r}, \quad \bar{V} = \bar{\texttt{V}}^{\top}V \in \mathbb{R}^{r \times c}. 
\end{equation*}

\vspace{-0.25em}

The bottom row of \Cref{fig:AIS_main_SV_Pruning} shows test accuracy of the post-training pruned models as a function of the number of remaining neurons. The x-axis is cropped for clarity (the full model has 2520 neurons before pruning). 
The UDV-based models maintain high accuracy even under aggressive pruning and consistently outperform the UV baseline at comparable compression levels. 
This demonstrates that UDV induces low-rank representations that are more amenable to compression without compromising performance. 
Although the pruned networks show that models with significantly fewer parameters can still generalize well, training such small models from scratch typically results in lower test accuracy, consistent with prior findings in the literature \citep{arora2018stronger,00_overPbenefits}.

\begin{figure*}[!htbp] 
  \centering
  \includegraphics[clip, trim=0cm 0cm 0cm 0cm, width=0.99\textwidth]{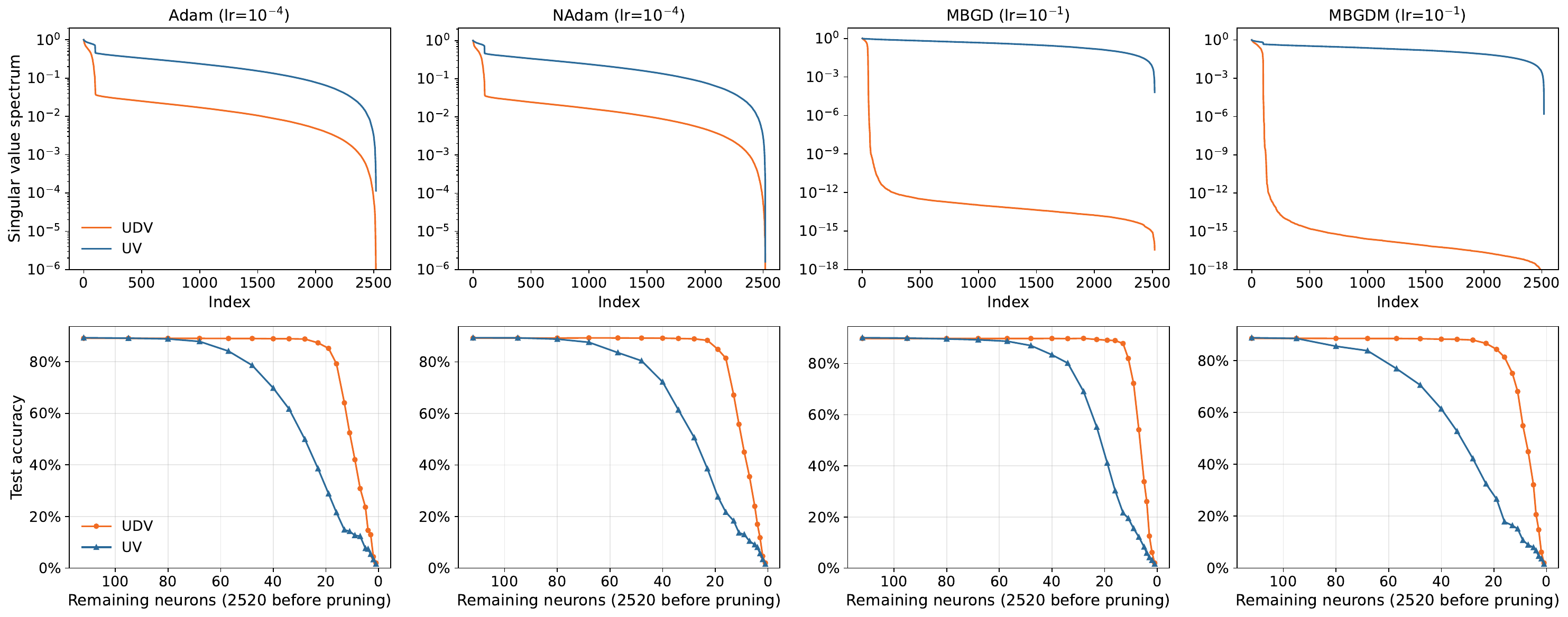}\vspace{-0.25em}
  \captionsetup{font=small}
  \caption{Singular value spectra and post-training SVD-based pruning results for RegNetX-32GF on CIFAR-100 under different optimization algorithms. [\textit{Top}] UDV induces faster spectral decay than the baseline UV model. [\textit{Bottom}] Test accuracy as a function of remaining neurons after pruning (full model has 2520 neurons, x-axis cropped for clarity). UDV-based networks retain high accuracy under aggressive pruning}
  \label{fig:AIS_main_SV_Pruning}
  \vspace{-0.5em}
\end{figure*}

\subsubsection{Further Details and Discussions}

Our findings in experiments on neural networks align with the results observed in matrix factorization. 
A key distinction, however, was the use of different optimization algorithms, including stochastic gradients and momentum steps, in neural network experiments. 
Despite these differences, the UDV architecture consistently demonstrated a strong bias toward low-rank solutions. 
Additional experiments can be found in the Appendices, with key results summarized below. 

\Cref{app:full_nnReuslts} provides comprehensive results complementing those in \Cref{sec:numerical-experiments-neural-networks-low-rank,sec:SVD-based pruning}, covering other configurations (MaxViT-T and EfficientNet-B0) and datasets (CIFAR-10/100). The low-rank patterns persist across settings, and the performance gap in pruning between UDV and UV becomes more pronounced as task complexity increases, highlighting the growing advantage of low-rank structure. 

Early in this work we explored four UDV variants with different constraints. All produced low-rank solutions, but the formulation in \eqref{eqn:extra-layer-relu} showed the strongest spectral decay, and we adopt it as the main model. The remaining variants are described in \Cref{app:udv-variants}. Additional experiments on SVD-based pruning with these UDV variants are provided in \Cref{app:pruning}. Furthermore, we analyzed the effect of learning rate on the singular value spectrum, similar to the analysis in \Cref{fig:BM-fact}, but applied to neural network experiments. This analysis confirms that the UDV framework produces low-rank solutions across a broad range of learning rates.

\Cref{app:udv-relu} extends the UDV framework by incorporating ReLU activation. Preliminary experiments suggest that UDV-ReLU also promotes low-rank solutions, consistent with the original UDV model.

Prior work on implicit bias in neural networks suggests that network depth can enhances the tendency toward low-rank solutions \citep{01_deepMF,04_rankdiminishing}, raising the question of whether the pronounced bias in UDV is just a consequence of the extra diagonal layer. To test this, we compared UDV with fully connected three-layer networks in \Cref{app:UFV}. 
We also examined an unconstrained UDV variant. 
The results show that the bias cannot be explained by depth alone, highlighting the importance of explicit constraints.

\Cref{app:weight_decay} compares the spectral decay of the UDV network to that induced by classical weight decay regularization in two- and three-layer networks. While weight decay promotes singular value decay, it can not reproduce the strong decay observed in the UDV~model.

Finally, \Cref{app:llm_task} presents a toy example applying the UDV structure within the LoRA framework to fine-tune a pre-trained LLaMA-2 model on a causal language modeling task. As a proof of concept, the pruning results suggest that UDV blocks could serve as drop-in replacements for linear layers, showing promising performance under compression.

Most importantly, our comprehensive set of experiments shows that the UDV framework consistently induces a strong implicit bias toward low-rank solutions across diverse architectures, datasets, training setups and algorithms.

\section{Conclusions}

We introduced a new matrix factorization framework that constrains the factors within Euclidean norm balls and incorporates a diagonal middle factor to avoid restricting the search space. 
Numerical experiments show that this formulation enhances the implicit bias toward low-rank solutions.

To explore the broader applicability of our findings, we designed an analogous neural network architecture with three layers, constraining the fully connected layers and adding a diagonal hidden layer, referred to as UDV. 
Extensive experiments (including those in Appendices) show that the UDV architecture induces a strong bias toward low-rank representations. 
Additionally, we demonstrated the utility of this low-rank structure by applying an SVD-based pruning strategy, as a proof of  concept, illustrating how it can be leveraged to construct compact networks that are more efficient for downstream tasks. 

The proposed model exhibits reduced rank regression behavior, where the training process gradually increases the model rank. 
We believe these our should be of broader interest to research on implicit bias. 
Although we provide some theoretical insights, developing a more complete theory and designing algorithms that fully exploit the model’s regularization capabilities remain important directions for future work.

\subsection*{Acknowledgments}
Alp Yurtsever and Yikun Hou were supported by the Wallenberg AI, Autonomous Systems and Software Program (WASP) funded by the Knut and Alice Wallenberg Foundation. Part of computations were enabled by the Berzelius resource provided by the Knut and Alice Wallenberg Foundation at the National Supercomputer Centre. Another part of computations were conducted using the resources of High Performance Computing Center North (HPC2N) which is mainly funded by The Kempe Foundations and the Knut and Alice Wallenberg Foundation. Suvrit Sra acknowledges generous support from the Alexander von Humboldt (AvH) foundation.

\renewcommand{\refname}{\large\bfseries References}
\bibliography{arxivref}
\bibliographystyle{iclr2025_conference}

\clearpage

\appendix

\renewcommand{\thefigure}{SM\arabic{figure}} 
\renewcommand{\thetable}{SM\arabic{table}}   
\setcounter{figure}{0}
\setcounter{table}{0}

\section{Additional Details on Matrix Factorization Experiments}
\label{app:additional-numerics-matrix-factorization}

\subsection{Matrix Completion under Gaussian Noise}
\label{app:matrix-completion-noisy}

We conducted similar experiments to those in \Cref{sec:matrix-factorization-experiments} also using noisy measurements: 
Let $b^\natural = \mathcal{A}(X^\natural)$ represent the true measurements, and assume $b = b^\natural + \omega$, where $\omega \in \mathbb{R}^n$ is zero-mean Gaussian noise with a standard deviation of $\sigma = 10^{-2} \|b\|_2$. 
The results, shown in \Cref{fig:BM-fact-noisy}, remain consistent with the noiseless case. 
$UDU^\top$ exhibits implicit bias toward truly low-rank solutions, while the classical BM factorization yields approximately low-rank solutions, with the spectral decay influenced by initialization. 

\begin{figure}[h]
\centering

  \includegraphics[width=0.9\linewidth]{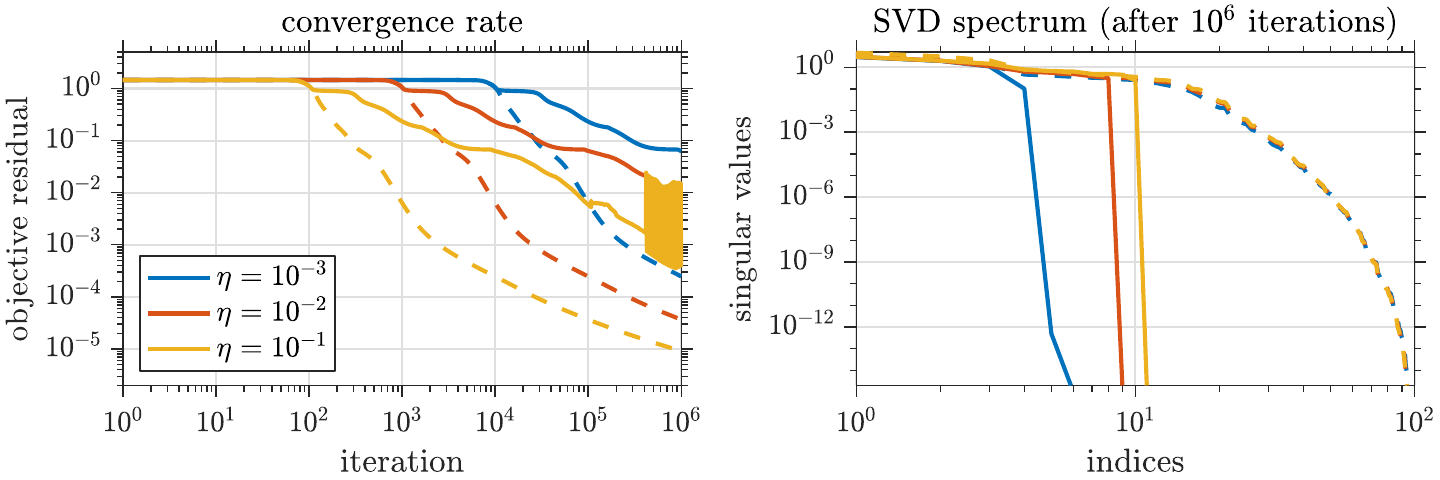} \\[0.1em]
  {Impact of \textbf{step-size} $(\eta)$, under \textbf{Gaussian noise}, with fixed initialization.} 

  \vspace{0.25em}

  \includegraphics[width=0.9\linewidth]{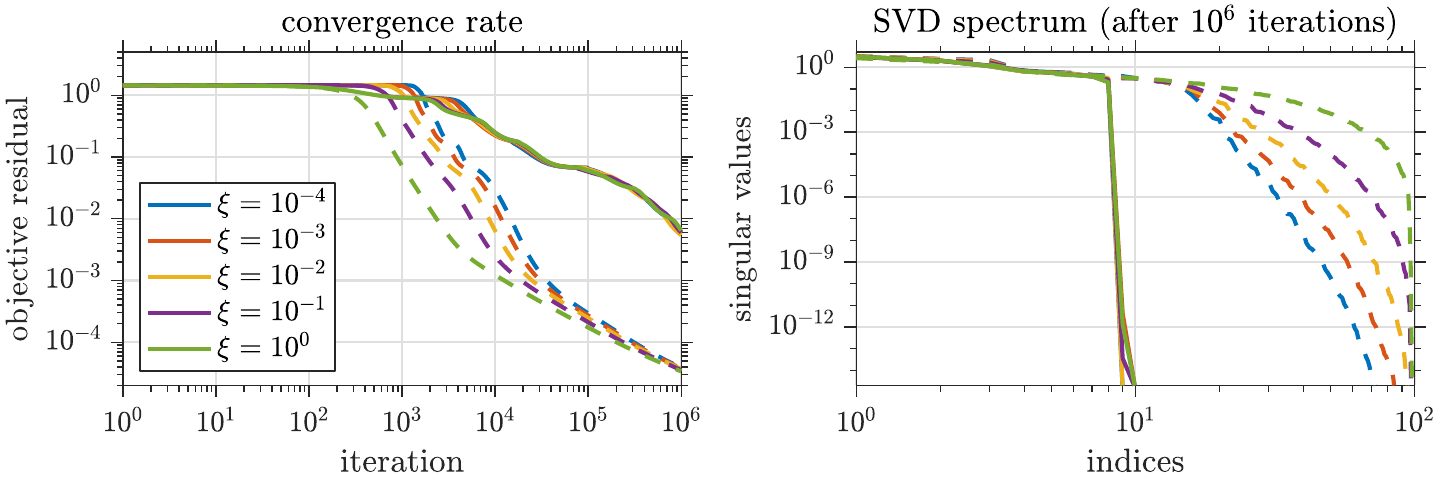} \\[0.1em]
  {Impact of \textbf{initial distance to origin} $(\xi)$, under \textbf{Gaussian noise}, with fixed step-size.} 

    \captionsetup{font=small}
  \caption{Impact of step-size and initialization on implicit bias. \textbf{Solid lines represent our UDU factorization}, while \textbf{dashed lines denote the classical BM factorization}. [\textit{Left}] Objective residual vs. iterations. [\textit{Right}] Singular value spectrum after $10^{6}$ iterations.}
\label{fig:BM-fact-noisy}
\end{figure}

\subsection{Phase Retrieval}
\label{app:phase-retrieval}

\subsubsection{Phase Retrieval on Synthetic Data}

The experiment with real data in \Cref{sec:fourier-ptychography} is large-scale and thus impractical for testing full-dimensional factorizations with $r=d$. To complement it, we perform a small-scale synthetic test bed for phase retrieval.

We consider the matrix sensing formulation of phase retrieval \citep{candes2013phaselift}, where the goal is to recover a signal from quadratic measurements of the form $y_i = |\ip{a_i}{x}|^2$. 
Although the standard maximum likelihood estimators lead to a non-convex optimization problem due to the quadratic terms, the problem can be reformulated as minimizing a convex function under a rank constraint in a lifted space. 
By denoting the lifted variable as $X = xx^\top$, the measurements can be expressed as
\begin{equation*}
    \smash{y_i = \ip{a_i^\top x}{a_i^\top x} = \ip{a_i a_i^\top}{x x^\top} := \ip{A_i}{X},}
\end{equation*}
which allows us to reformulate the problem as 
\begin{equation*}
    \smash{\min_{X \in \Sym_+^{d \times d}} \quad \frac{1}{2} \|\mathcal{A}(X) - b \|_2^2 \quad \mathrm{s.t.} \quad \mathrm{rank}(X) \leq 1.}
\end{equation*}

In this experiment, we use a gray-scale image of size $16 \times 16$ pixels, selected from the Pixel Art dataset \citep{Image_recovery}, as the signal $x \in \mathbb{R}^n$ to recover, where $n=256$.  
We generate a synthetic measurement system by sampling $a_1,\ldots,a_m$ from the standard Gaussian distribution, with $m=2n$. 
We then solve problems \eqref{eqn:uu-ls} and \eqref{eqn:udu-ls}. 
We initialized $U_0\in \R^{n \times n}$ with entries drawn \textit{iid} from the standard Gaussian distribution, then rescaled to have a unit Frobenius norm. 
For $D_0\in \R^{n \times n}$, we used the identity matrix. 
We used step-size $\eta = \frac{1}{L}$ where $L$ denotes the smoothness constant. 
After solving the problem, we recover $x \in \mathbb{R}^n$ from the lifted variable $X \in \mathbb{R}^{n \times n}$ by selecting its dominant eigenvector. 

\Cref{appendix-fig:recovered_singularvalue} demonstrates that the proposed method consistently identifies low-rank solutions, in line with the results of our other experiments.
\Cref{appendix-fig:recovered_image} displays the recovered image, demonstrating that the proposed method achieves higher quality than BM factorization within the same number of iterations, which can be attributed to the low-rank structure of the solution.

\begin{figure}[!htbp] 
  \centering
  \includegraphics[clip, trim=0cm 0.5cm 0cm 0.2cm, width=0.45\textwidth]{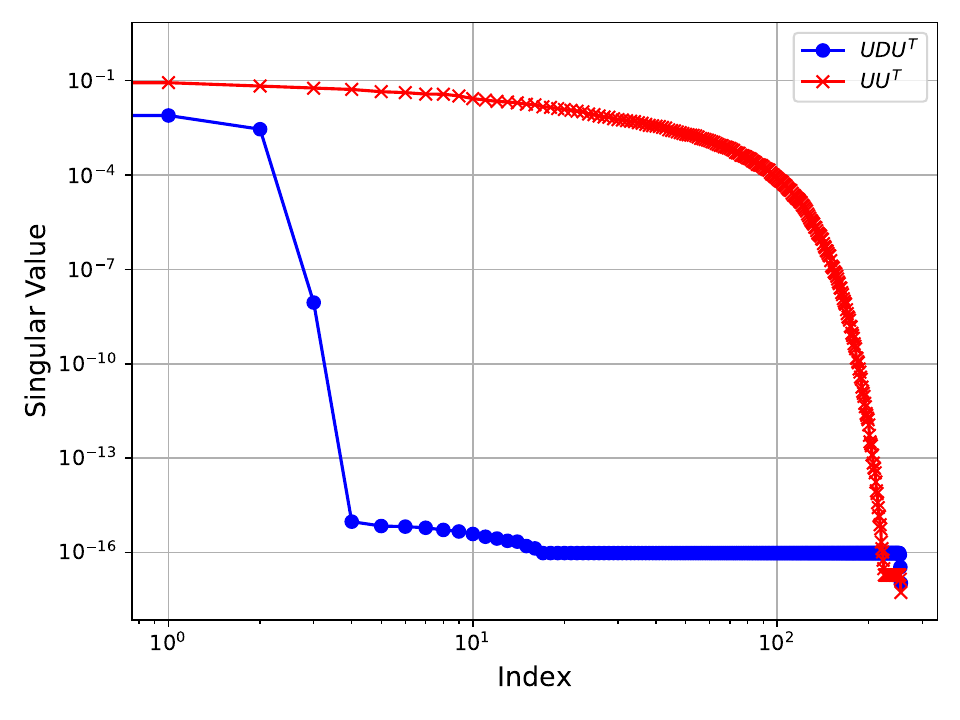}
  \captionsetup{font=small}
  \caption{Comparison of singular value spectrum in recovered image between methods based on $UDU^T$ and $UU^T$.}
  \label{appendix-fig:recovered_singularvalue}
\end{figure}

\begin{figure}[!htbp] 
  \centering
  \includegraphics[clip, trim=0cm 3cm 0cm 2.15cm, width=0.85\textwidth]{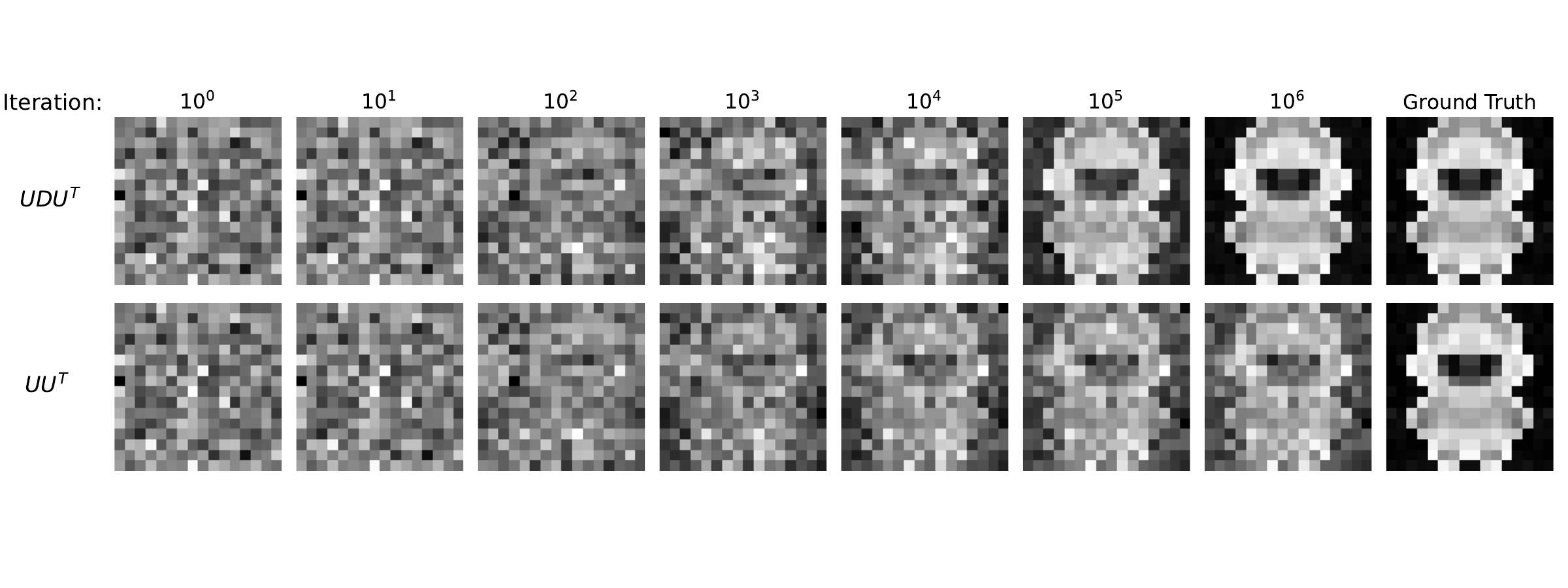}
  \captionsetup{font=small}
  \caption{Comparison of recovered image between methods based on $UDU^T$ and $UU^T$.}
  \label{appendix-fig:recovered_image}
\end{figure}

\subsubsection{Intermediate Reconstructions in Fourier Ptychography Imaging}

We provide additional details for the Fourier ptychography imaging experiment in \Cref{sec:fourier-ptychography}, showing the reconstructions at intermediate iterations (\Cref{fig:ptychography-full}).

\begin{figure*}[!htbp]
    \centering
    \captionsetup[subfigure]{font=scriptsize}
    \begin{subfigure}[b]{0.15\textwidth}
        \includegraphics[width=\textwidth]{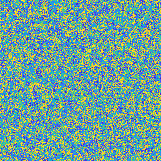}
    \end{subfigure}
    \begin{subfigure}[b]{0.15\textwidth}
        \includegraphics[width=\textwidth]{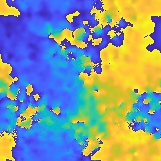}
    \end{subfigure}
    \begin{subfigure}[b]{0.15\textwidth}
        \includegraphics[width=\textwidth]{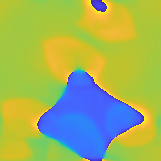}
    \end{subfigure}
    \begin{subfigure}[b]{0.15\textwidth}
        \includegraphics[width=\textwidth]{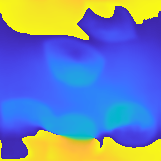}
    \end{subfigure}
    \begin{subfigure}[b]{0.15\textwidth}
        \includegraphics[width=\textwidth]{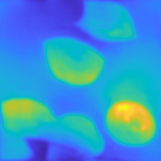}
    \end{subfigure} 
    \\[0.5em]
    \begin{subfigure}[b]{0.15\textwidth}
        \includegraphics[width=\textwidth]{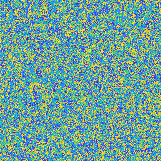}
        \caption*{$k=0$}
    \end{subfigure}
    \begin{subfigure}[b]{0.15\textwidth}
        \includegraphics[width=\textwidth]{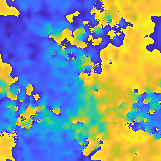}
        \caption*{$k=10$}
    \end{subfigure}
    \begin{subfigure}[b]{0.15\textwidth}
        \includegraphics[width=\textwidth]{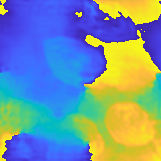}
        \caption*{$k=100$}
    \end{subfigure}
    \begin{subfigure}[b]{0.15\textwidth}
        \includegraphics[width=\textwidth]{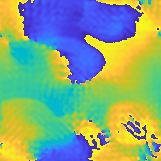}
        \caption*{$k=1000$}
    \end{subfigure}
    \begin{subfigure}[b]{0.15\textwidth}
        \includegraphics[width=\textwidth]{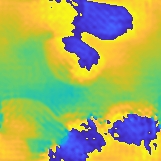}
        \caption*{$k=10000$}
    \end{subfigure} 
    \caption{Reconstructed images obtained from the proposed UDU factorization (top row) and the standard BM factorization (bottom row) at different iterations, starting from the same random initialization. While the UDU method converges to a clean and accurate recovery, the BM reconstructions display severe artifacts.}
    \label{fig:ptychography-full}
\end{figure*}

\subsection{UDU Factorization Produces Rank-Revealing Solutions}
\label{app:rank-revealing}

The proposed factorization method naturally produces rank-revealing solutions.
The columns of $U$ converge to zero in certain directions, resulting in truly low-rank solutions. 
Specifically, $U$ tends to grow along specific directions while the rescaling induced by projection onto the bounded constraint shrinks other coordinates. 
This behavior resembles the mechanism of the power method and provides insights into its connection with divergent forces. 

In \Cref{fig:norm-U-factor-UDU}, we illustrate the evolution of the column norms of $U$ from the matrix completion experiment described in \Cref{sec:matrix-factorization-experiments}. 
\Cref{fig:D-factor-UDU} displays the corresponding entries in the diagonal factor $D$. 
For comparison, \Cref{fig:norm-U-factor-BM} shows the column norms of $U$ obtained from the same experiment using the standard BM factorization. 
The results are shown for $\xi = 10^{-1}$ and $\eta = 10^{-1}$ over $10^5$ iterations, but the findings are similar across other parameter settings. 
We also repeat the experiment with the noisy measurements described in \Cref{app:matrix-completion-noisy}. The results are shown in \Cref{fig:norm-U-factor-UDU-noisy,fig:D-factor-UDU-noisy,fig:norm-U-factor-BM-noisy}.

\begin{figure}[p]
\centering
  \includegraphics[width=0.85\linewidth]{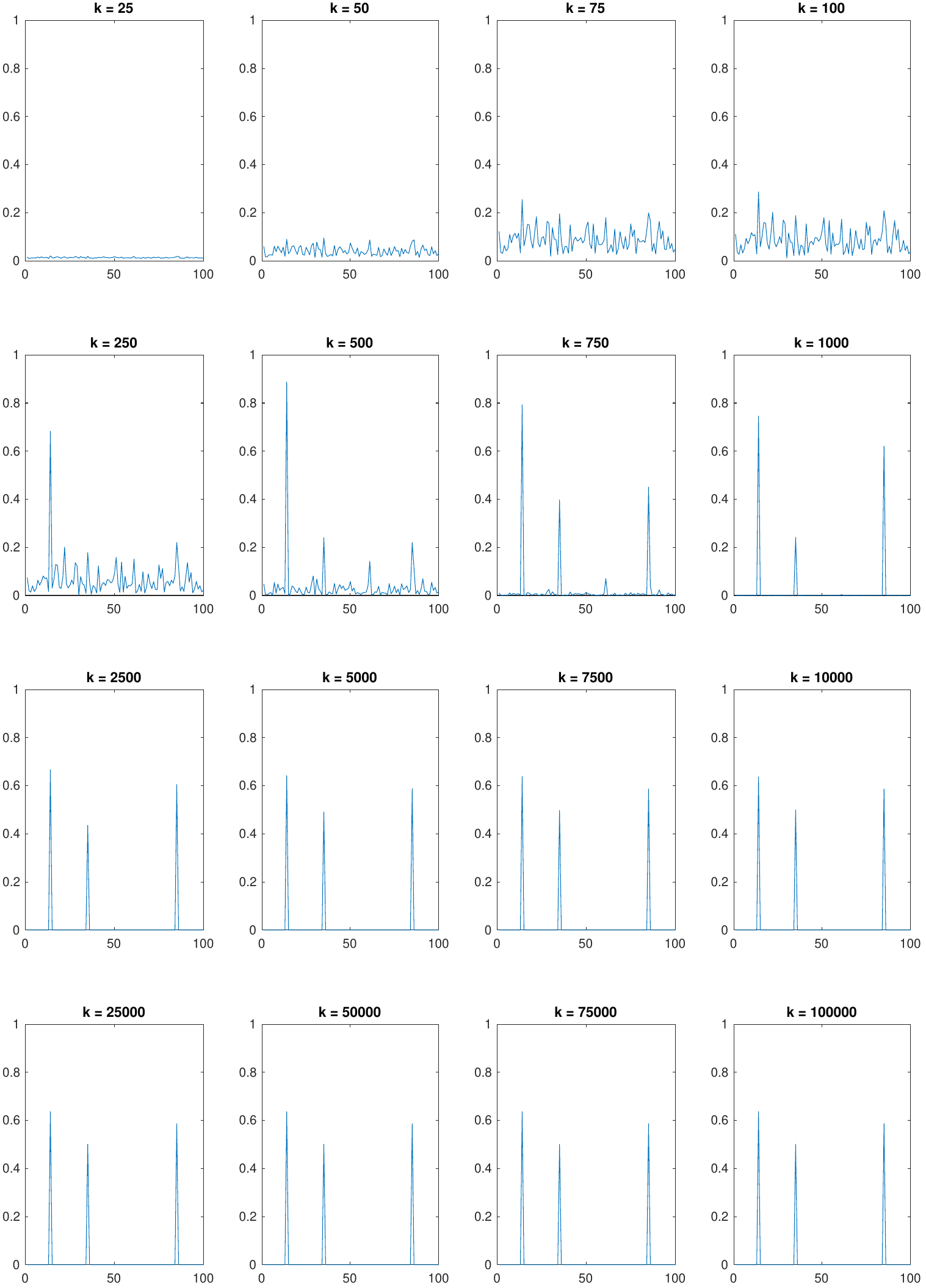} 
  \captionsetup{font=small}
    \caption{Evolution of the column norms of $U$ during the matrix completion experiment using the UDU factorization. The x-axis represents column indices, and the y-axis shows the Euclidean norms of the columns.}
    \label{fig:norm-U-factor-UDU}
\end{figure}

\begin{figure}[p]
\centering
  \includegraphics[width=0.85\linewidth]{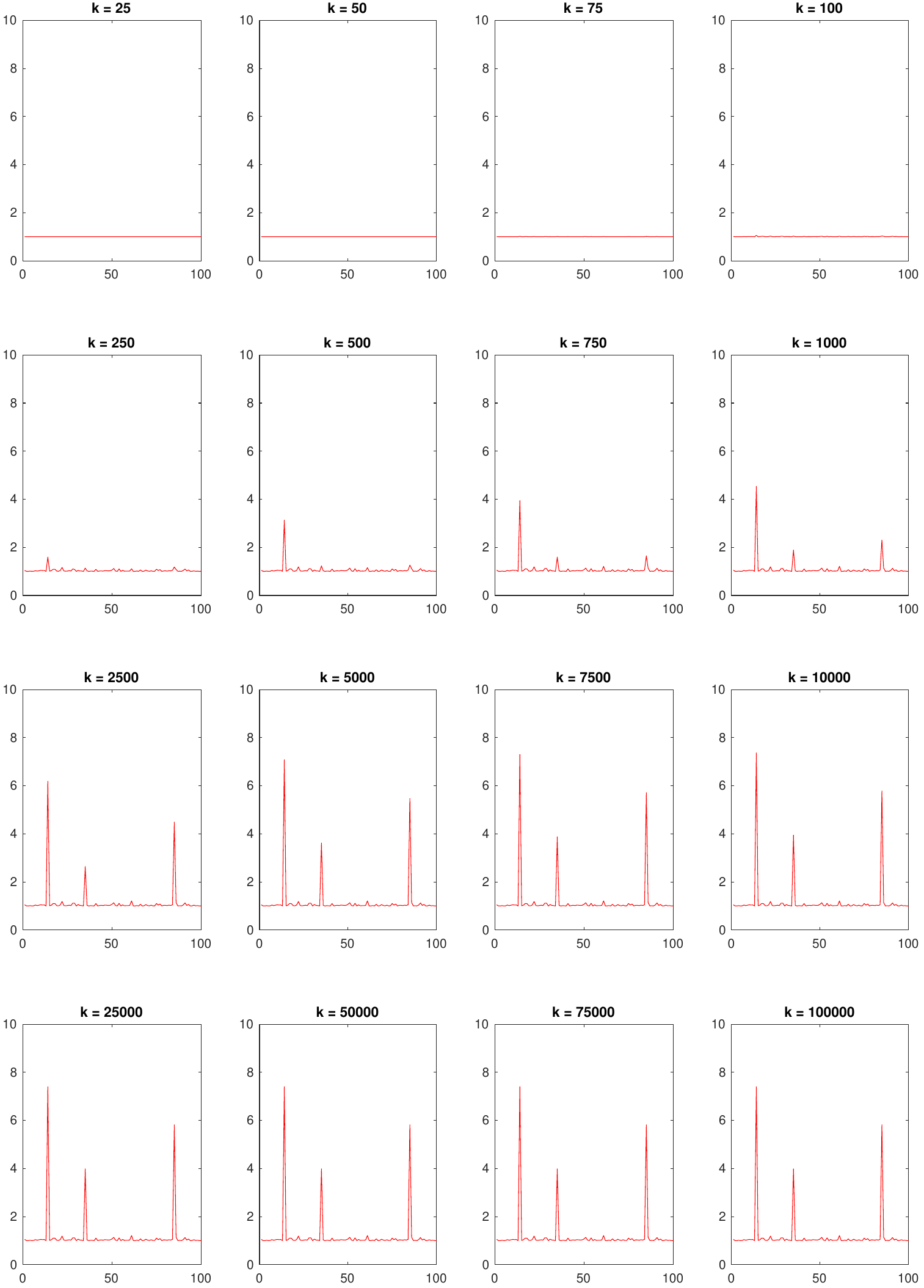} 
  \captionsetup{font=small}
    \caption{Evolution of the diagonal entries of $D$ during the matrix completion experiment using the UDU factorization. The x-axis represents indices. }
    \label{fig:D-factor-UDU}
\end{figure}

\begin{figure}[p]
\centering
  \includegraphics[width=0.85\linewidth]{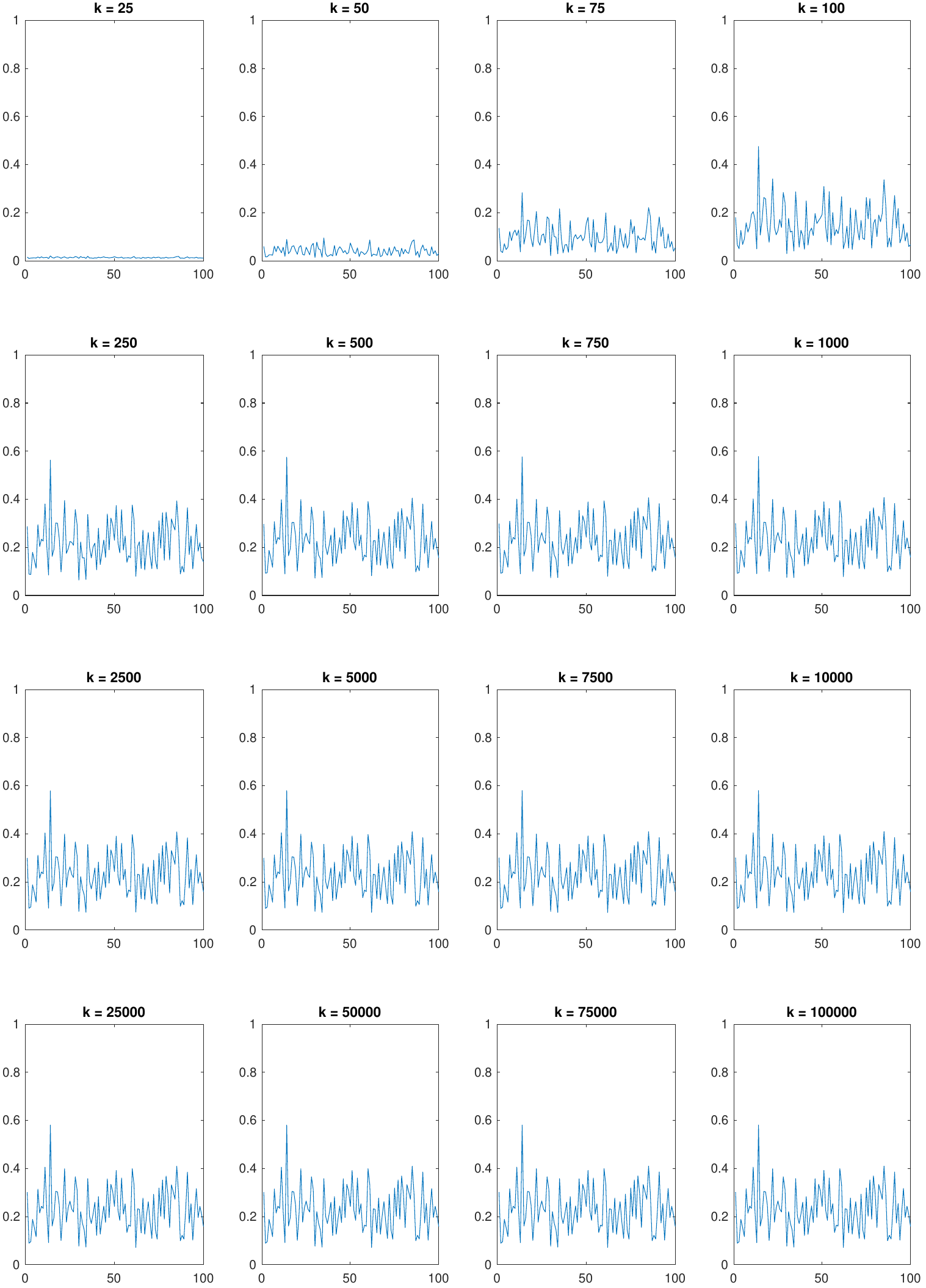} 
  \captionsetup{font=small}
    \caption{Evolution of the column norms of $U$ during the matrix completion experiment using the standard BM factorization, shown for comparison. The x-axis represents column indices, and the y-axis shows the Euclidean norms of the columns.}
    \label{fig:norm-U-factor-BM}
\end{figure}

\begin{figure}[p]
\centering
  \includegraphics[width=0.85\linewidth]{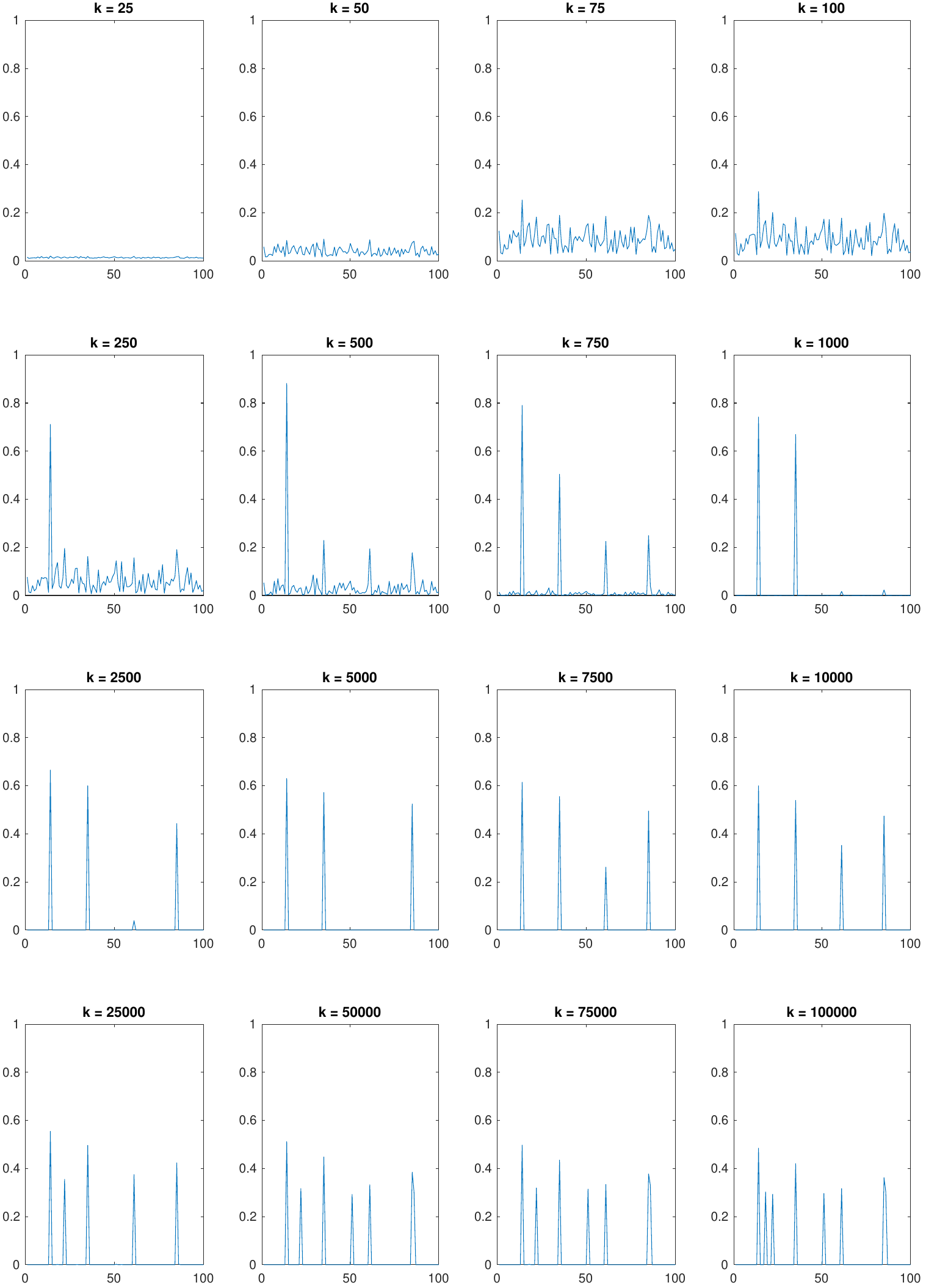} 
  \captionsetup{font=small}
    \caption{Evolution of the column norms of $U$ during the matrix completion experiment using the UDU factorization with \textbf{noisy} measurements. The x-axis represents column indices, and the y-axis shows the Euclidean norms of the columns.}

    \label{fig:norm-U-factor-UDU-noisy}
\end{figure}

\begin{figure}[p]
\centering
  \includegraphics[width=0.85\linewidth]{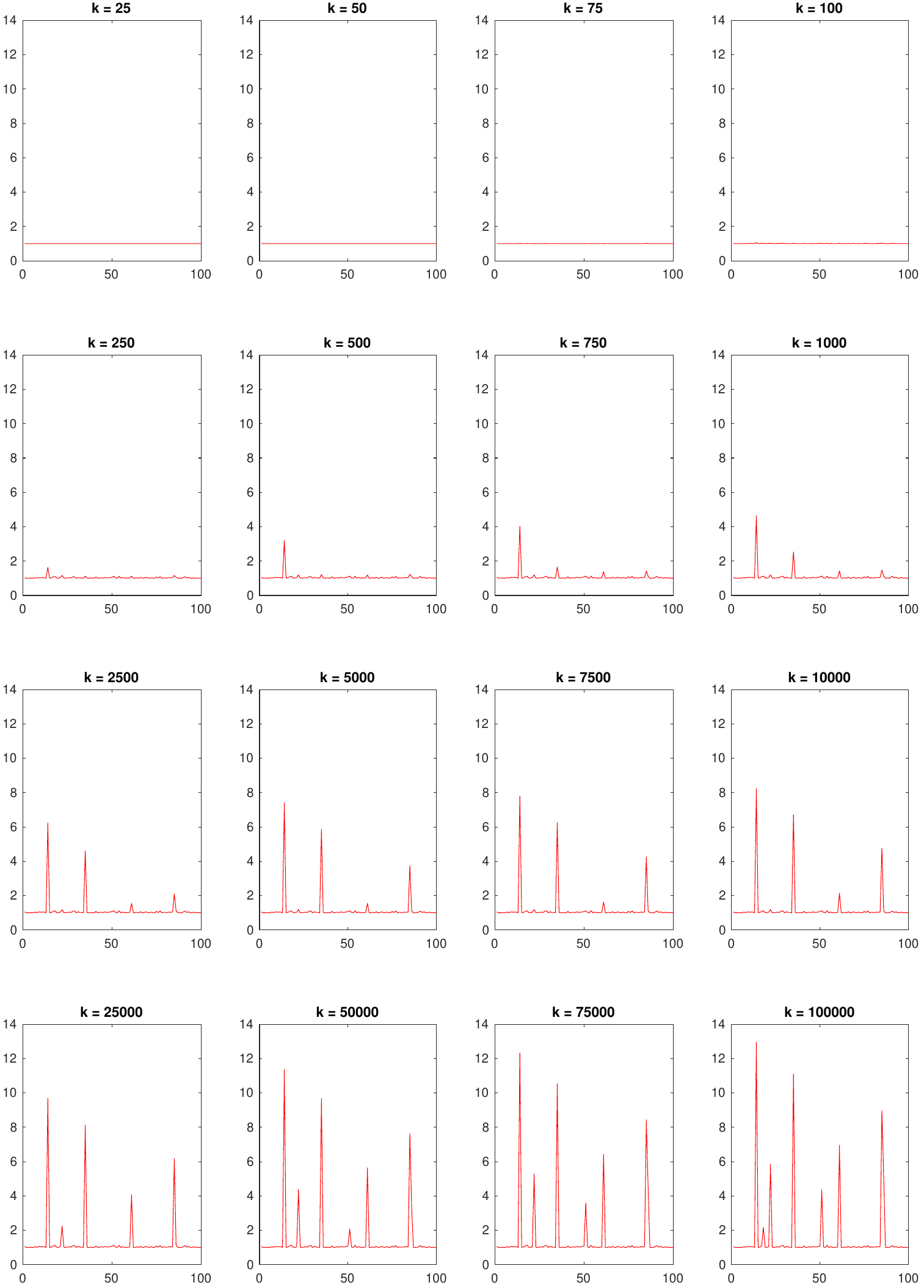} 
  \captionsetup{font=small}
    \caption{Evolution of the diagonal entries of $D$ during the matrix completion experiment using the UDU factorization with \textbf{noisy} measurements. The x-axis represents indices. }
    \label{fig:D-factor-UDU-noisy}
\end{figure}

\begin{figure}[p]
\centering
  \includegraphics[width=0.85\linewidth]{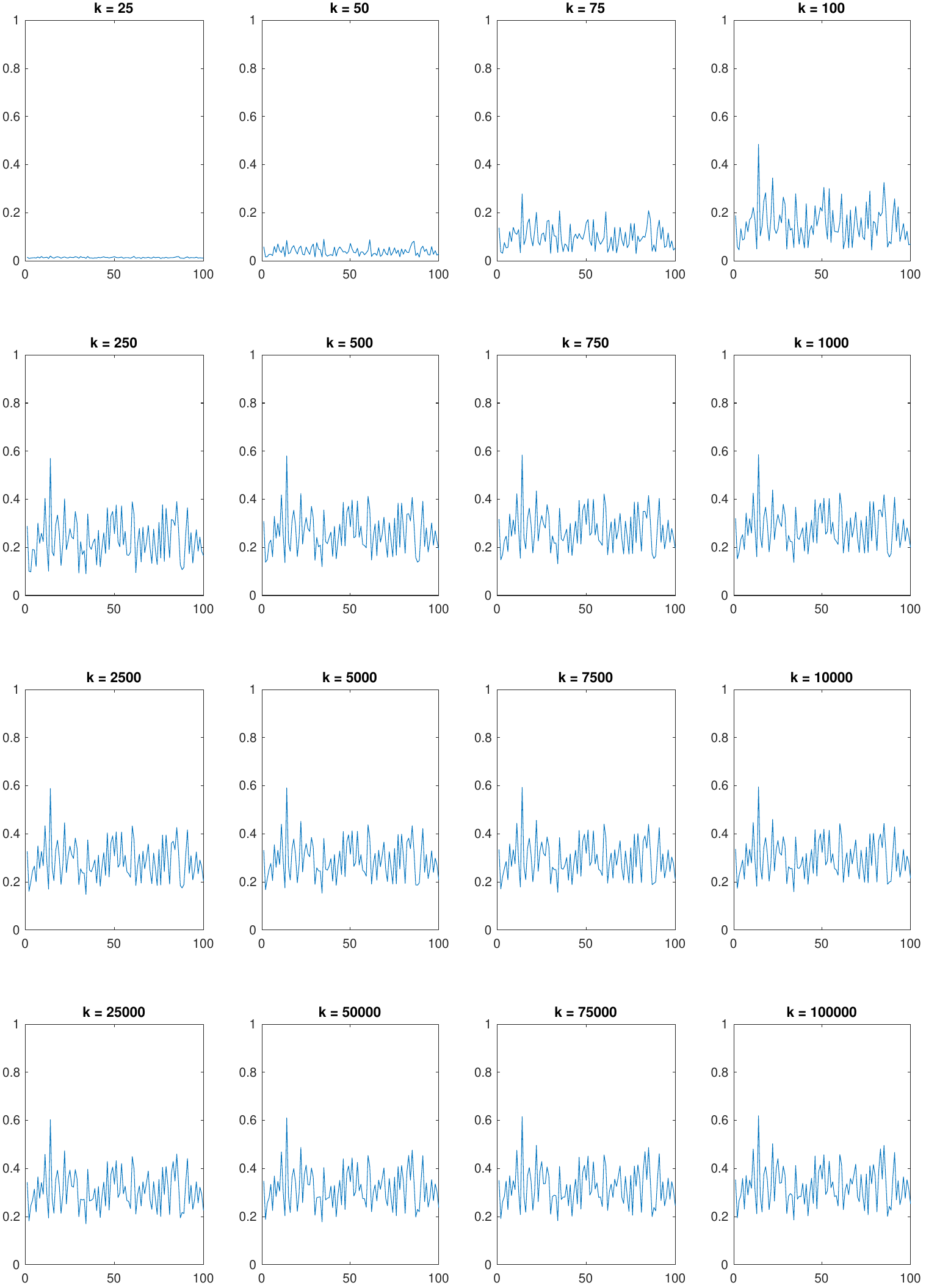} 
  \captionsetup{font=small}
    \caption{Evolution of the column norms of $U$ during the matrix completion experiment using the standard BM factorization with \textbf{noisy} measurements, shown for comparison. The x-axis represents column indices, and the y-axis shows the Euclidean norms of the columns.}
    \label{fig:norm-U-factor-BM-noisy}
\end{figure}

\clearpage 

\subsection{Additional Details on the Fixed Point Analysis}
\label{sec:appendix-fixed-point-analysis}

In \Cref{sec:theoretical-insights}, we provided insights into the inner workings of the UDU framework on matrix factorization problems based on a fixed-point analysis.
Recall the algorithm steps:
\begin{equation*}
\begin{aligned}
U_{k+1} & = \Pi_{U} (\bar{U}_{k+1}) & & \text{where} \quad \bar{U}_{k+1} = U_k - 2 \eta \nabla f(U_kD_kU_k^\top) U_k D_k \\
D_{k+1} & = \Pi_{D} (\bar{D}_{k+1}) & & \text{where} \quad \bar{D}_{k+1} = D_k - \eta U_k^\top \nabla f(U_{k}D_kU_{k}^\top) U_k  
\end{aligned}
\end{equation*}

Let $(U,D)$ denote a fixed point of this algorithm and let $X = UDU^\top$. 
Suppose $\norm{\bar{U}} \leq \alpha$. 
In this case, the projection step $\Pi_U$ acts as an identity:
\begin{equation} \label{eqn:fixed-U-1}
    U = \bar{U} = U - 2 \eta \nabla f(X) U D 
    \quad \implies \quad
    \nabla f(X) U D = 0
\end{equation}

Otherwise, if we assume $\norm{\bar{U}} \geq \alpha$, then the projection $\Pi_U$ has a normalization effect:
\begin{equation*}
    U = \frac{\alpha}{\norm{\bar{U}}} \bar{U} = \frac{\alpha}{\norm{\bar{U}}} \big(U - 2 \eta \nabla f(X) U D\big)
\end{equation*}
which we can reorganize as
\begin{equation} \label{eqn:fixed-U-2}
    \nabla f(X) U D = -\beta U \quad \text{where} \quad \beta := \frac{\norm{\bar{U}}-\alpha}{2\alpha\eta}.
\end{equation}

Denoting the columns of $U$ by $u_j$ and the diagonal entries of $D$ by $\lambda_j$, we express the conditions \eqref{eqn:fixed-U-1} and \eqref{eqn:fixed-U-2} in terms of the individual components as presented in conditions $(a)$ and $(b)$ in \Cref{sec:theoretical-insights}. 

Next, we investigate the $D$ component. 
Note that the projection $\Pi_D$ operates in an entry-wise separable manner; it maps all non-diagonal entries to $0$, and takes the positive part for the diagonal entries. 
It maps all off-diagonal entries to zero and applies the positive-part operator to the diagonal entries. 
Hence, it suffices to examine only the diagonal elements. 

Suppose $\bar{\lambda}_j \leq 0$. 
Then $\lambda_j = \max(\bar{\lambda}_j,0) = 0$, and
\begin{equation} \label{eqn:fixed-D-1}
    \bar{\lambda}_j = \lambda_j - \eta \, u_j^\top \nabla f(X) u_j = - \eta \, u_j^\top \nabla f(X) u_j \quad \implies \quad u_j^\top \nabla f(X) u_j \geq 0.
\end{equation}
Otherwise, if $\bar{\lambda}_j > 0$, then $\lambda_j = \max(\bar{\lambda}_j,0) = \bar{\lambda}_j$, hence
\begin{equation} \label{eqn:fixed-D-2}
    \lambda_j = \bar{\lambda}_j = \lambda_j - \eta \, u_j^\top \nabla f(X) u_j \quad \implies \quad u_j^\top \nabla f(X) u_j = 0.
\end{equation}

The conditions in \eqref{eqn:fixed-D-1} and \eqref{eqn:fixed-D-2} correspond to cases $(c)$ and $(d)$ in \Cref{sec:theoretical-insights}.

\newpage
\section{Additional Details on Neural Network Experiments}
\label{app:additional-numerics-neural-networks}

\noindent \textbf{Computing environment.} All classification tasks were conducted on an NVIDIA A100 GPU with four cores of the AMD Epyc 7742 processor, while regression tasks were conducted on a single core of an Intel Xeon Gold 6132 processor.
We used Python 3.12.3 and PyTorch 2.5.1. 

\noindent \textbf{Robustness Evaluation.} All reported results in the tables were averaged across 1000 seeds on HPART, 100 on NYCTTD, and 10 on CIFAR-10, CIFAR-100, and MNIST.

\subsection{Full Results of Neural Networks Reported in the Main Text}
\label{app:full_nnReuslts}
Here we provide all results related to the image classification experiments discussed in \Cref{sec: NNwithDiaonal}.
\Cref{tab:cifar10_full} and \Cref{tab:cifar100_full} present the complete results corresponding to \Cref{tab:results_baseline} in the main text, while \Cref{fig: app_10M,fig: app_10E,fig: app_10R,fig: app_100M,fig: app_100E,fig: app_100R} show all singular value spectrum and pruning results. 
The regression experiments were part of the earlier stage of this study, and their details are reported in \Cref{app:udv-variants}.
Meanwhile, the early experiments reduced the need for extensive hyperparameter selection in the current study. Therefore, for Adam and NAdam we used learning rates of $10^{-3}$ and $10^{-4}$, whereas for MBGD and MBGDM we used $10^{-1}$ and $10^{-2}$.

\begin{table}[]
\centering
\caption{Comparison of model performance on CIFAR-10. Note ``--'' denotes results that are not applicable.}
\label{tab:cifar10_full}
\resizebox{\textwidth}{!}{
\begin{tabular}{ccccccccccccc}
\hline
Model &
  \multicolumn{4}{c}{MaxVit-T [M]} &
  \multicolumn{4}{c}{EfficientNet-B0 [E]} &
  \multicolumn{4}{c}{RegNetX-32GF [R]} \\ \hline
Optimizer &
  Adam &
  NAdam &
  MBGD &
  MBGDM &
  Adam &
  NAdam &
  MBGD &
  MBGDM &
  Adam &
  NAdam &
  MBGD &
  MBGDM \\ \hline
LR: $10^{-4}$ &
  \multicolumn{1}{r}{\begin{tabular}[c]{@{}r@{}}UDV:99.14\\ UV:99.11\end{tabular}} &
  \multicolumn{1}{r}{\begin{tabular}[c]{@{}r@{}}UDV:\textbf{99.17}\\ UV:\textbf{99.11}\end{tabular}} &
  - &
  - &
  \multicolumn{1}{r}{\begin{tabular}[c]{@{}r@{}}UDV:\textbf{97.87}\\ UV:97.98\end{tabular}} &
  \multicolumn{1}{r}{\begin{tabular}[c]{@{}r@{}}UDV:97.86\\ UV:\textbf{97.99}\end{tabular}} &
  - &
  - &
  \multicolumn{1}{r}{\begin{tabular}[c]{@{}r@{}}UDV:\textbf{98.67}\\ UV:98.64\end{tabular}} &
  \multicolumn{1}{r}{\begin{tabular}[c]{@{}r@{}}UDV:98.55\\ UV:98.56\end{tabular}} &
  - &
  - \\ \hline
LR: $10^{-3}$ &
  \multicolumn{1}{r}{\begin{tabular}[c]{@{}r@{}}UDV:98.61\\ UV:98.76\end{tabular}} &
  \multicolumn{1}{r}{\begin{tabular}[c]{@{}r@{}}UDV:98.72\\ UV:98.69\end{tabular}} &
  - &
  - &
  \multicolumn{1}{r}{\begin{tabular}[c]{@{}r@{}}UDV:97.61\\ UV:97.77\end{tabular}} &
  \multicolumn{1}{r}{\begin{tabular}[c]{@{}r@{}}UDV:97.68\\ UV:97.69\end{tabular}} &
  - &
  - &
  \multicolumn{1}{r}{\begin{tabular}[c]{@{}r@{}}UDV:97.84\\ UV:97.60\end{tabular}} &
  \multicolumn{1}{r}{\begin{tabular}[c]{@{}r@{}}UDV:97.78\\ UV:97.68\end{tabular}} &
  - &
  - \\ \hline
LR: $10^{-2}$ &
  - &
  - &
  \multicolumn{1}{r}{\begin{tabular}[c]{@{}r@{}}UDV:98.04\\ UV:98.25\end{tabular}} &
  \multicolumn{1}{r}{\begin{tabular}[c]{@{}r@{}}UDV:98.93\\ UV:98.94\end{tabular}} &
  - &
  - &
  \multicolumn{1}{r}{\begin{tabular}[c]{@{}r@{}}UDV:95.98 \\ UV:96.65\end{tabular}} &
  \multicolumn{1}{r}{\begin{tabular}[c]{@{}r@{}}UDV:97.79 \\ UV:97.90\end{tabular}} &
  - &
  - &
  \multicolumn{1}{r}{\begin{tabular}[c]{@{}r@{}}UDV:98.00\\ UV:98.18\end{tabular}} &
  \multicolumn{1}{r}{\begin{tabular}[c]{@{}r@{}}UDV:98.61\\ UV:\textbf{98.65}\end{tabular}} \\ \hline
LR: $10^{-1}$ &
  - &
  - &
  \multicolumn{1}{r}{\begin{tabular}[c]{@{}r@{}}UDV:98.93\\ UV:99.01\end{tabular}} &
  \multicolumn{1}{r}{\begin{tabular}[c]{@{}r@{}}UDV:99.05\\ UV:99.03\end{tabular}} &
  - &
  - &
  \multicolumn{1}{r}{\begin{tabular}[c]{@{}r@{}}UDV:97.86 \\ UV:97.82\end{tabular}} &
  \multicolumn{1}{r}{\begin{tabular}[c]{@{}r@{}}UDV:97.87 \\ UV:97.94\end{tabular}} &
  - &
  - &
  \multicolumn{1}{r}{\begin{tabular}[c]{@{}r@{}}UDV:98.67\\ UV:98.64\end{tabular}} &
  \multicolumn{1}{r}{\begin{tabular}[c]{@{}r@{}}UDV:98.59\\ UV:98.63\end{tabular}} \\ \hline
\end{tabular}
}
\end{table}

\begin{table}[]
\centering
\caption{Comparison of model performance on CIFAR-100. Note ``--'' denotes results that are not applicable and ``$\times$'' denotes results that did not converge.}
\label{tab:cifar100_full}
\resizebox{\textwidth}{!}{
\begin{tabular}{ccccccccccccc}
\hline
Model &
  \multicolumn{4}{c}{MaxVit-T [M]} &
  \multicolumn{4}{c}{EfficientNet-B0 [E]} &
  \multicolumn{4}{c}{RegNetX-32GF [R]} \\ \hline
Optimizer &
  Adam &
  NAdam &
  MBGD &
  MBGDM &
  Adam &
  NAdam &
  MBGD &
  MBGDM &
  Adam &
  NAdam &
  MBGD &
  MBGDM \\ \hline
LR: $10^{-4}$ &
  \multicolumn{1}{r}{\begin{tabular}[c]{@{}r@{}}UDV:84.69\\ UV:\textbf{87.75}\end{tabular}} &
  \multicolumn{1}{r}{\begin{tabular}[c]{@{}r@{}}UDV:84.69\\ UV:87.72\end{tabular}} &
  - &
  - &
  \multicolumn{1}{r}{\begin{tabular}[c]{@{}r@{}}UDV:91.58\\ UV:91.90\end{tabular}} &
  \multicolumn{1}{r}{\begin{tabular}[c]{@{}r@{}}UDV:\textbf{91.63}\\ UV:\textbf{92.03}\end{tabular}} &
  - &
  - &
  \multicolumn{1}{r}{\begin{tabular}[c]{@{}r@{}}UDV:89.17\\ UV:89.22\end{tabular}} &
  \multicolumn{1}{r}{\begin{tabular}[c]{@{}r@{}}UDV:89.31\\ UV:89.40\end{tabular}} &
  - &
  - \\ \hline
LR: $10^{-3}$ &
  \multicolumn{1}{r}{\begin{tabular}[c]{@{}r@{}}UDV:84.70\\ UV:86.36\end{tabular}} &
  \multicolumn{1}{r}{\begin{tabular}[c]{@{}r@{}}UDV:84.76\\ UV:86.09\end{tabular}} &
  - &
  - &
  \multicolumn{1}{r}{\begin{tabular}[c]{@{}r@{}}UDV:89.05\\ UV:90.55\end{tabular}} &
  \multicolumn{1}{r}{\begin{tabular}[c]{@{}r@{}}UDV:89.49\\ UV:90.60\end{tabular}} &
  - &
  - &
  \multicolumn{1}{r}{\begin{tabular}[c]{@{}r@{}}UDV:85.41\\ UV:85.62\end{tabular}} &
  \multicolumn{1}{r}{\begin{tabular}[c]{@{}r@{}}UDV:85.73\\ UV:85.44\end{tabular}} &
  - &
  - \\ \hline
LR: $10^{-2}$ &
  - &
  - &
  \multicolumn{1}{r}{\begin{tabular}[c]{@{}r@{}}UDV: $\times$ \\ UV:84.57\end{tabular}} &
  \multicolumn{1}{r}{\begin{tabular}[c]{@{}r@{}}UDV:85.23\\ UV:87.73\end{tabular}} &
  - &
  - &
  \multicolumn{1}{r}{\begin{tabular}[c]{@{}r@{}}UDV: $\times$ \\ UV:88.23\end{tabular}} &
  \multicolumn{1}{r}{\begin{tabular}[c]{@{}r@{}}UDV:89.20\\ UV:91.55\end{tabular}} &
  - &
  - &
  \multicolumn{1}{r}{\begin{tabular}[c]{@{}r@{}}UDV: $\times$ \\ UV:88.82\end{tabular}} &
  \multicolumn{1}{r}{\begin{tabular}[c]{@{}r@{}}UDV:89.75\\ UV:\textbf{90.33}\end{tabular}} \\ \hline
LR: $10^{-1}$ &
  - &
  - &
  \multicolumn{1}{r}{\begin{tabular}[c]{@{}r@{}}UDV:85.45\\ UV:87.74\end{tabular}} &
  \multicolumn{1}{r}{\begin{tabular}[c]{@{}r@{}}UDV:\textbf{86.18}\\ UV:86.67\end{tabular}} &
  - &
  - &
  \multicolumn{1}{r}{\begin{tabular}[c]{@{}r@{}}UDV:89.88\\ UV:91.56\end{tabular}} &
  \multicolumn{1}{r}{\begin{tabular}[c]{@{}r@{}}UDV:91.35\\ UV:91.70\end{tabular}} &
  - &
  - &
  \multicolumn{1}{r}{\begin{tabular}[c]{@{}r@{}}UDV:\textbf{89.79}\\ UV:90.30\end{tabular}} &
  \multicolumn{1}{r}{\begin{tabular}[c]{@{}r@{}}UDV:88.53\\ UV:88.69\end{tabular}} \\ \hline
\end{tabular}
}
\end{table}

\begin{figure}[!htbp] 
  \centering
  \includegraphics[clip, trim=0cm 0cm 0cm 0cm, width=0.75\textwidth]{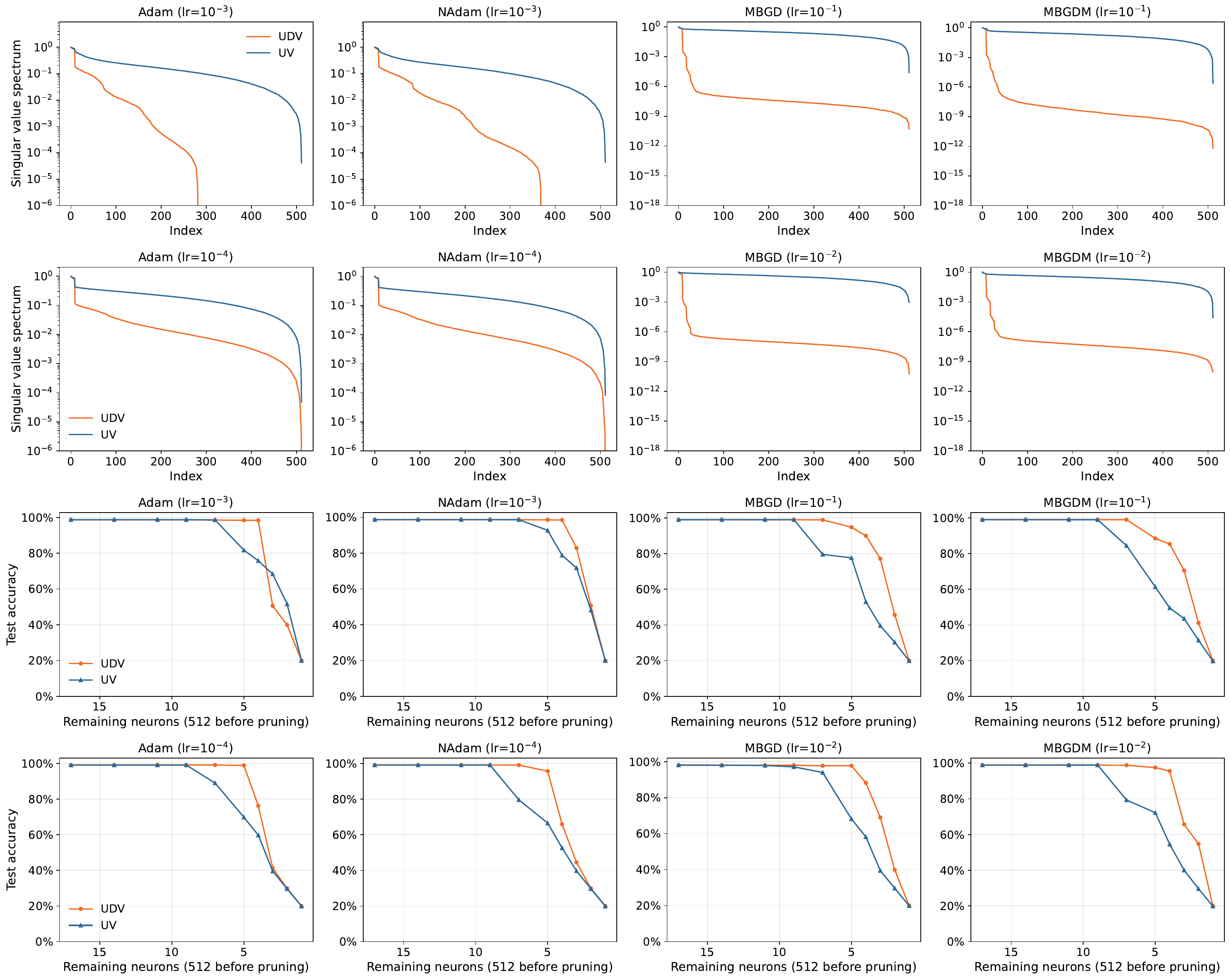}
  \captionsetup{font=small}
  \caption{Singular value spectrum and pruning performance of MaxVit-T on CIFAR-10.}
  \label{fig: app_10M}
\end{figure}

\begin{figure}[!htbp] 
  \centering
  \includegraphics[clip, trim=0cm 0cm 0cm 0cm, width=0.75\textwidth]{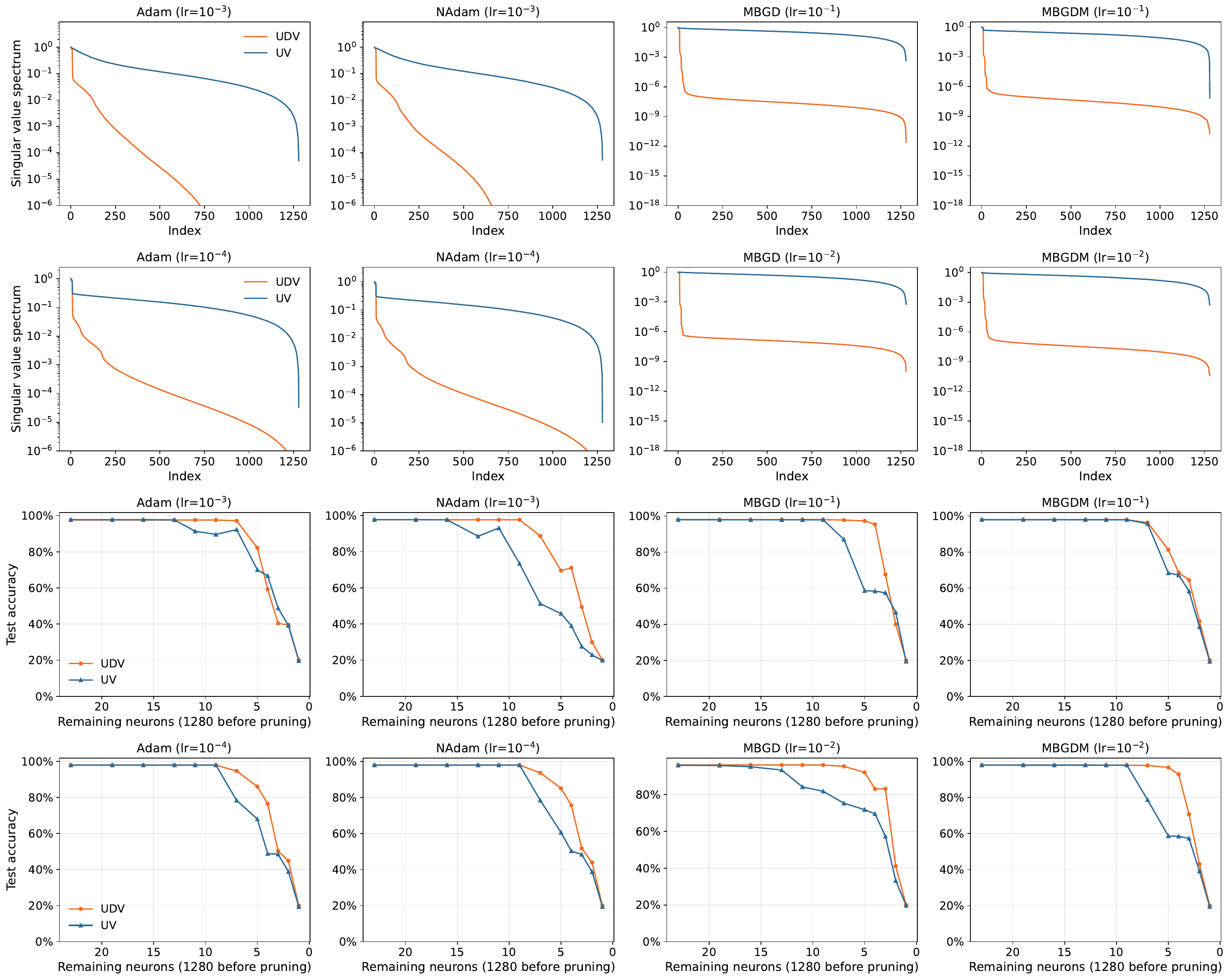}
  \captionsetup{font=small}
  \caption{Singular value spectrum and pruning performance of EfficientNet-B0 on CIFAR-10.}
  \label{fig: app_10E}
\end{figure}

\begin{figure}[!htbp] 
  \centering
  \includegraphics[clip, trim=0cm 0cm 0cm 0cm, width=0.75\textwidth]{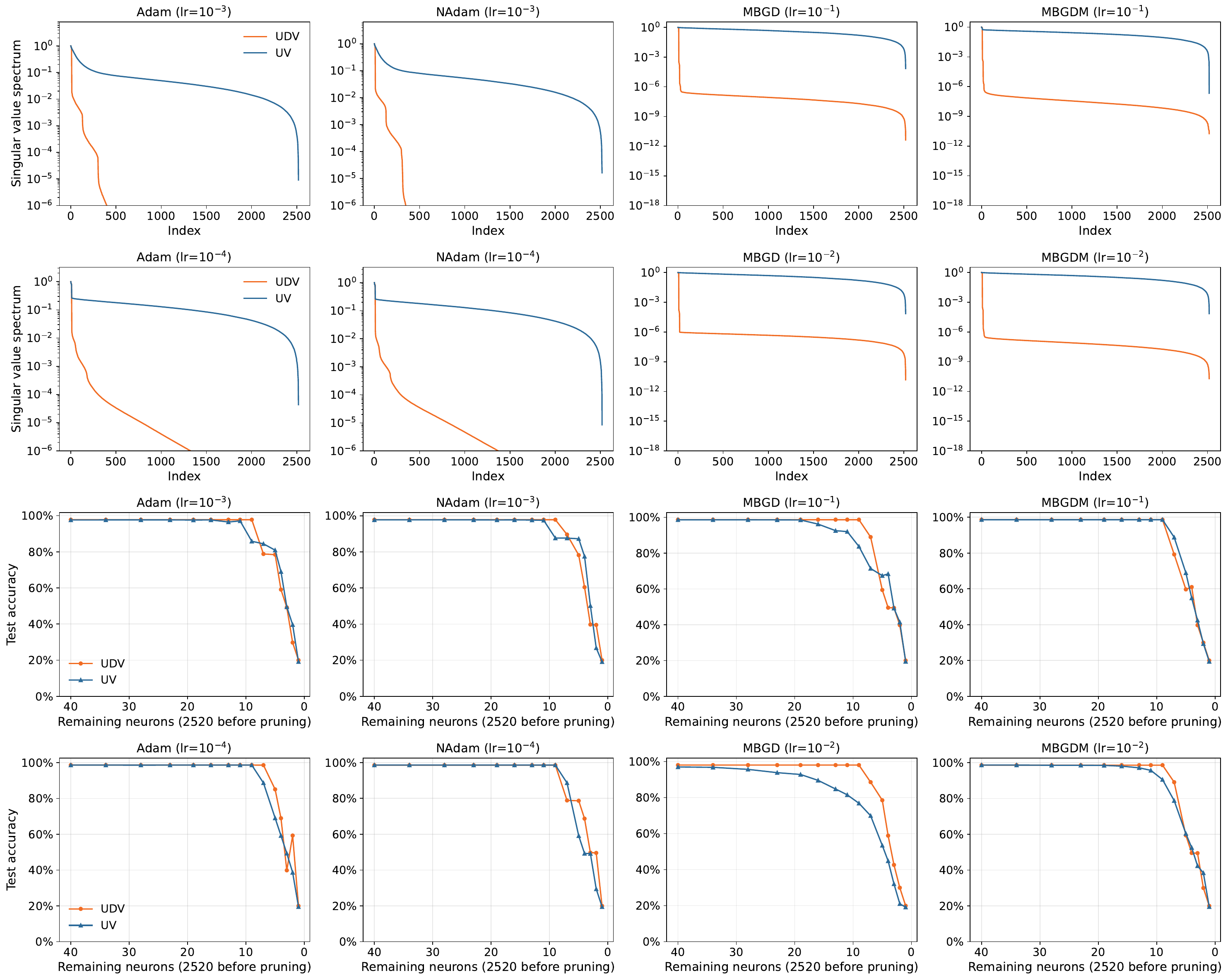}
  \captionsetup{font=small}
  \caption{Singular value spectrum and pruning performance of RegNetX-32GF on CIFAR-10.}
  \label{fig: app_10R}
\end{figure}

\begin{figure}[!htbp] 
  \centering
  \includegraphics[clip, trim=0cm 0cm 0cm 0cm, width=0.75\textwidth]{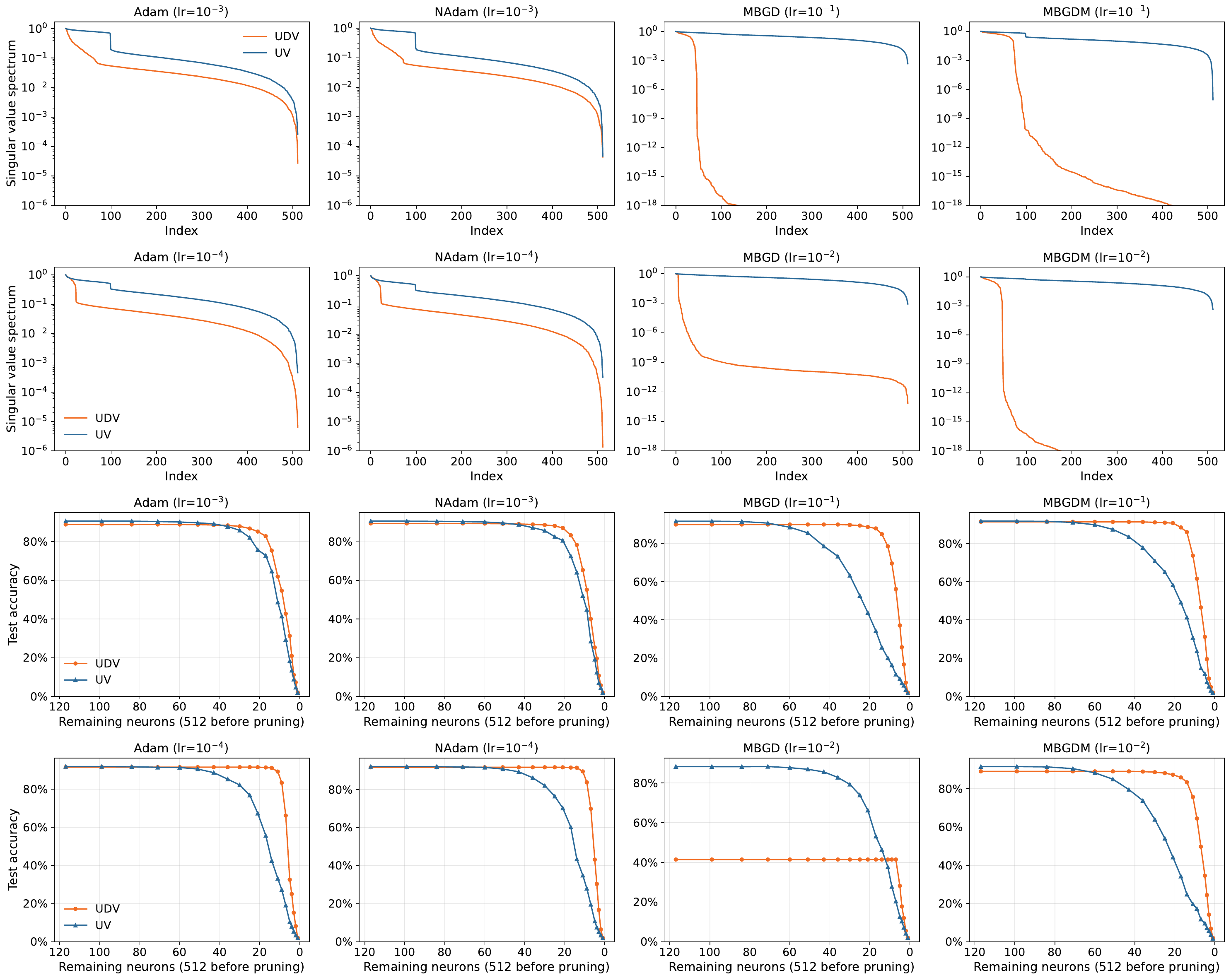}
  \captionsetup{font=small}
  \caption{Singular value spectrum and pruning performance of MaxVit-T on CIFAR-100. The learning rate of $10^{-2}$ for MBGD was unsuitable for our model, but no other hyperparameters were fine-tuned to ensure fairness.}
  \label{fig: app_100M}
\end{figure}

\begin{figure}[!htbp] 
  \centering
  \includegraphics[clip, trim=0cm 0cm 0cm 0cm, width=0.75\textwidth]{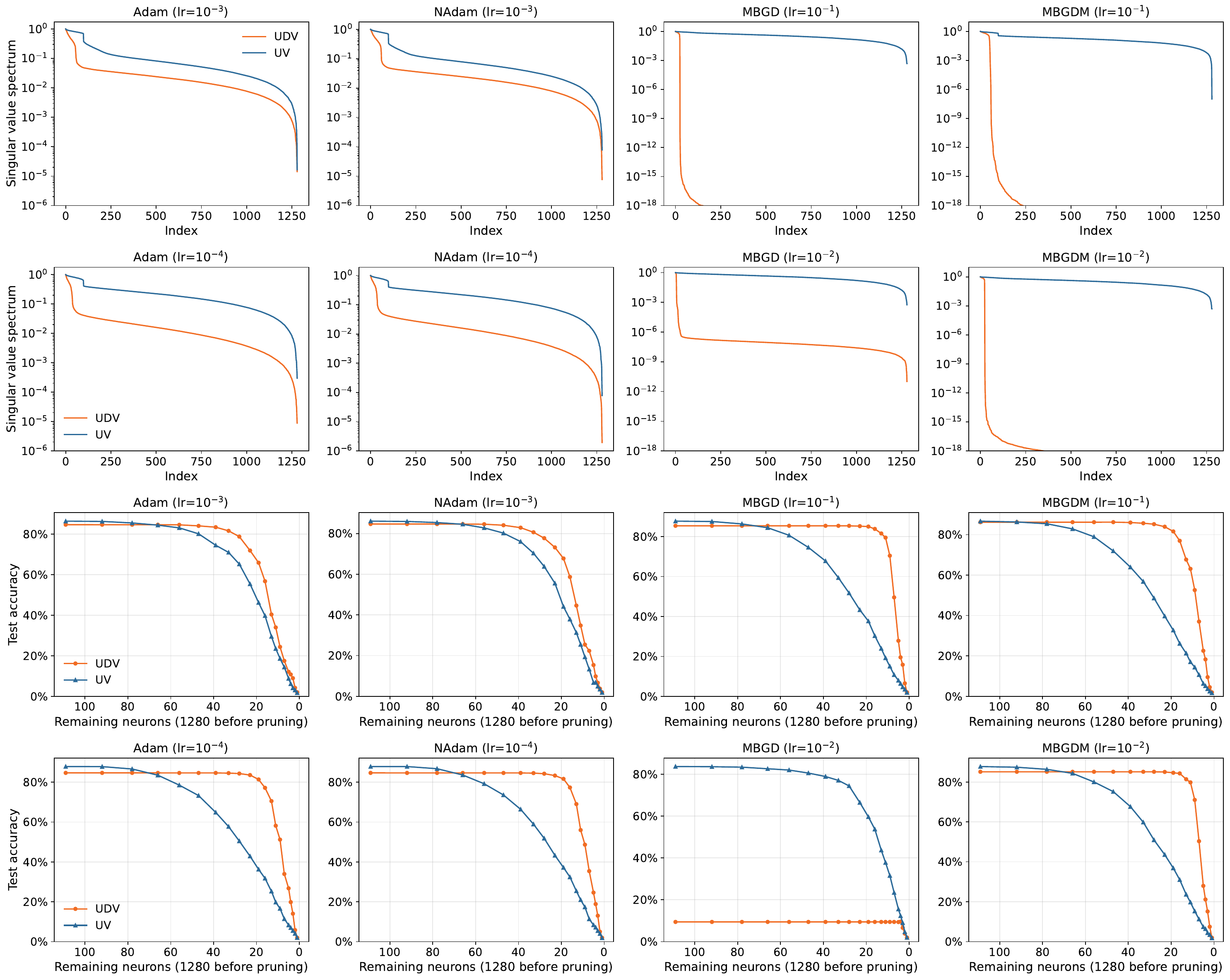}
  \captionsetup{font=small}
  \caption{Singular value spectrum and pruning performance of EfficientNet-B0 on CIFAR-100. The learning rate of $10^{-2}$ for MBGD was unsuitable for our model, but no other hyperparameters were fine-tuned to ensure fairness.}
  \label{fig: app_100E}
\end{figure}

\begin{figure}[!htbp] 
  \centering
  \includegraphics[clip, trim=0cm 0cm 0cm 0cm, width=0.75\textwidth]{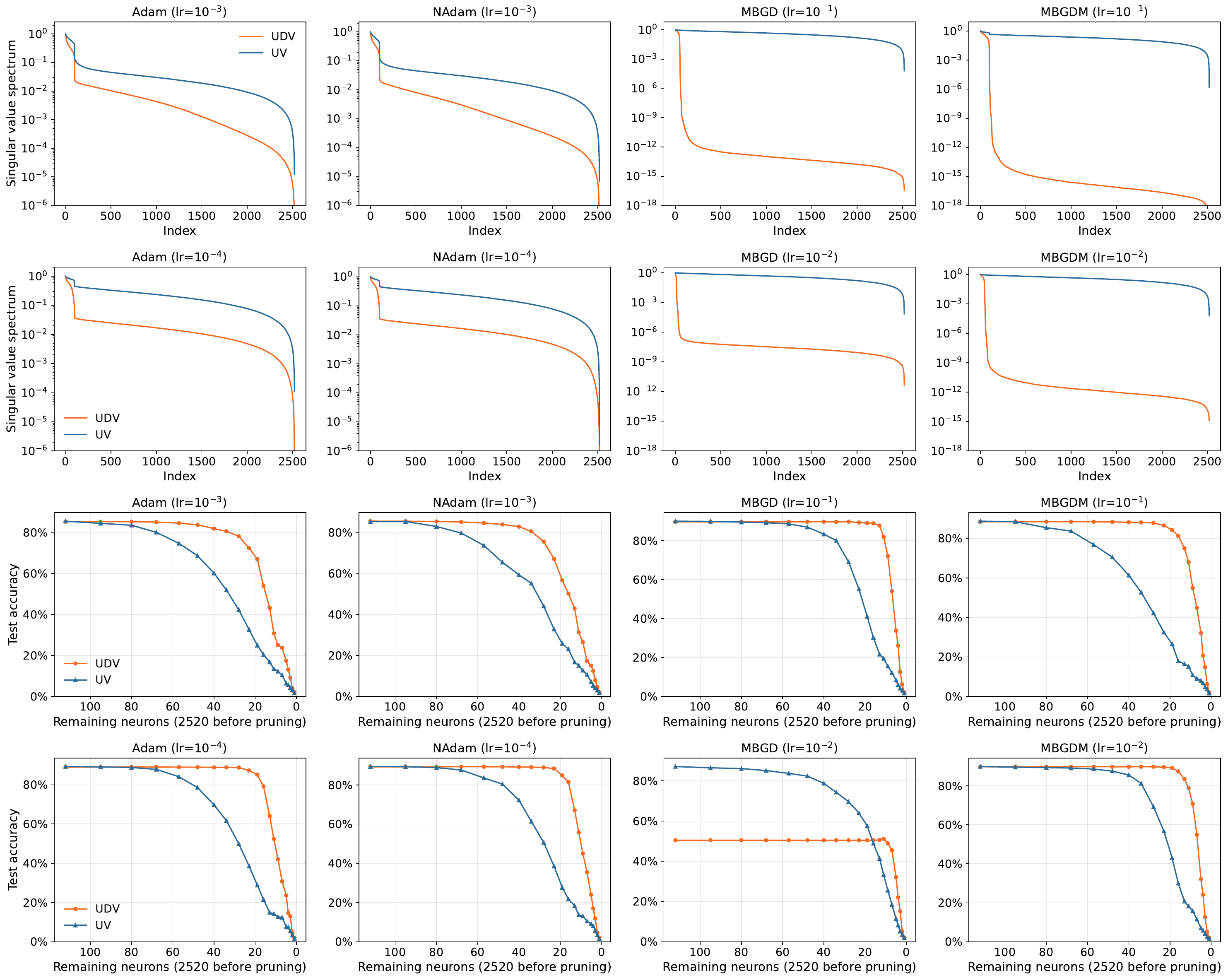}
  \captionsetup{font=small}
  \caption{Singular value spectrum and pruning performance of RegNetX-32GF on CIFAR-100. The learning rate of $10^{-2}$ for MBGD was unsuitable for our model, but no other hyperparameters were fine-tuned to ensure fairness.}
  \label{fig: app_100R}
\end{figure}

\subsection{Design Variants of the UDV Architecture}
\label{app:udv-variants}

When designing our network architecture, we considered four variants, including the one presented in \Cref{sec: NNwithDiaonal}. 
The three additional models were defined by the following sets of constraints:
\begin{gather}
        \sum_{j=1}^m \|\mathbf{u}_j \|_2^2 \leq 1, ~~ \sum_{j=1}^m \|\mathbf{v}_j \|_2^2 \leq 1 
       \tag{UDV-s} \label{appendix-eqn:udv-s} \\[0.5em]
        \|\mathbf{u}_j \|_2^2 \leq 1, ~~ \|\mathbf{v}_j \|_2^2 \leq 1 ~~~ w_j \geq 0; \quad \text{for all}~~j = 1,\ldots,m
       \tag{UDV-v1} \label{appendix-eqn:udv-v1} \\[1em]
        \|\mathbf{u}_j \|_2^2 \leq 1, ~~ \|\mathbf{v}_j \|_2^2 \leq 1 ; \quad \text{for all}~~j = 1,\ldots,m 
       \tag{UDV-v2} \label{appendix-eqn:udv-v2} 
\end{gather}

In detail, UDV-s is identical to UDV but omits the non-negativity constraints on the diagonal layer. 
UDV-v1, on the other hand, enforces row/column-wise norm constraints instead of the Frobenius norm used in UDV, while retaining the non-negativity constraints on the diagonal elements. 
Finally, UDV-v2 is identical to UDV-v1 but without the non-negativity constraints on the diagonal layer. 

Early image classification experiments were performed on the normalized MNIST dataset \citep{04_MNIST}, which allowed rapid iteration and efficient validation of the proposed method.
The replacement of the classifier follows the procedure described in \Cref{sec:numerical-experiments-neural-networks}.
The number of hidden neurons in the diagonal layer was defined as $m = \texttt{floor}(\tfrac{2}{3}d)$,and this results in a UDV network structure ($d$-$m$-$c$) of 512-341-10 for MaxViT-T, 1280-853-10 for EfficientNet-B0, and 2520-1680-10 for RegNetX-32GF.

Four optimizers, Adam, NAdam, MBGD, and MBGDM, were applied to all tasks. 
Specifically, for regression tasks, we tested learning rates of ($10^{-4}$, $10^{-3}$, $10^{-2}$, $10^{-1}$, $1$, $2$ $3$). 
Larger learning rates of ($1$, $2$, $3$) were often excluded for the UV model due to divergence. 
For classification tasks, we tested learning rates of ($10^{-6}$, $10^{-5}$, $10^{-4}$, $10^{-3}$, $10^{-2}$, $10^{-1}$, $1$) with Adam and NAdam, and ($10^{-3}$, $10^{-2}$, $10^{-1}$, $1$, $2$, $3$, $5$) with MBGD and MBGDM.

Model performances are summarized in \Cref{tab:full_results_adam,tab:full_results_nadam,tab:full_results_MBGD,tab:full_results_mbgdm}, and \Cref{appendix-fig:SVD_examples_reg,appendix-fig:SVD_examples_class} illustrate that UDV consistently converges to a low-rank solution.
Generally, UDV exhibits the most pronounced decaying pattern in singular values. 
Additionally, we extended our experiments to include full-batch training on the HPART and NYCTTD datasets. 
Although full-batch training converges more slowly than stochastic (mini-batch) methods, it exhibits a similar singular value decay pattern, confirming our earlier observations.

\begin{figure}[!htbp] 
  \centering
  \includegraphics[width=0.95\textwidth]{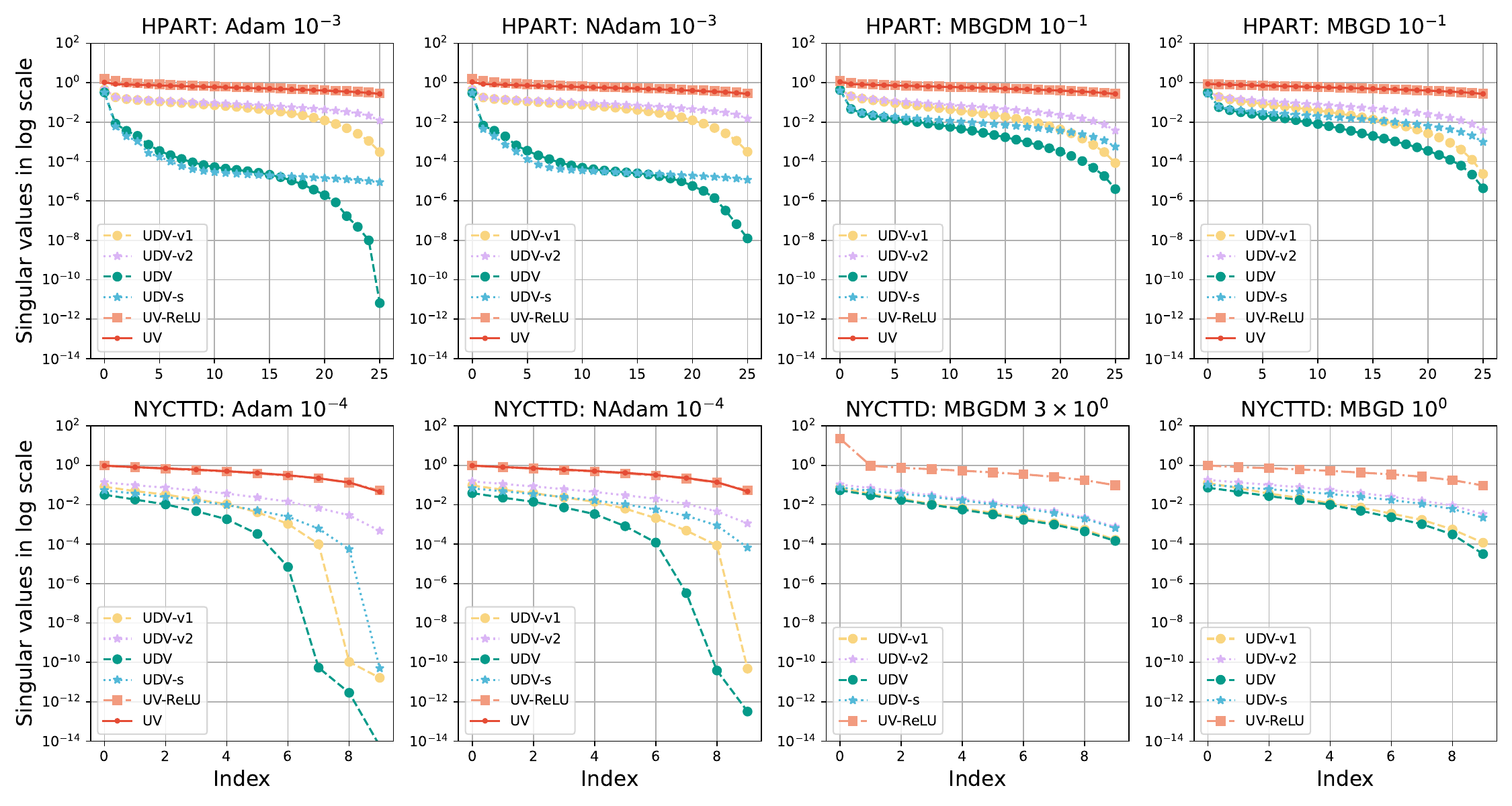}
  \captionsetup{font=small}
  \caption{Comparison of singular value pattern among all UDV variants, UV-ReLU and UV on the regression tasks.}
  \label{appendix-fig:SVD_examples_reg}
\end{figure}

\begin{figure}[p] 
  \centering
  \includegraphics[width=0.95\textwidth]{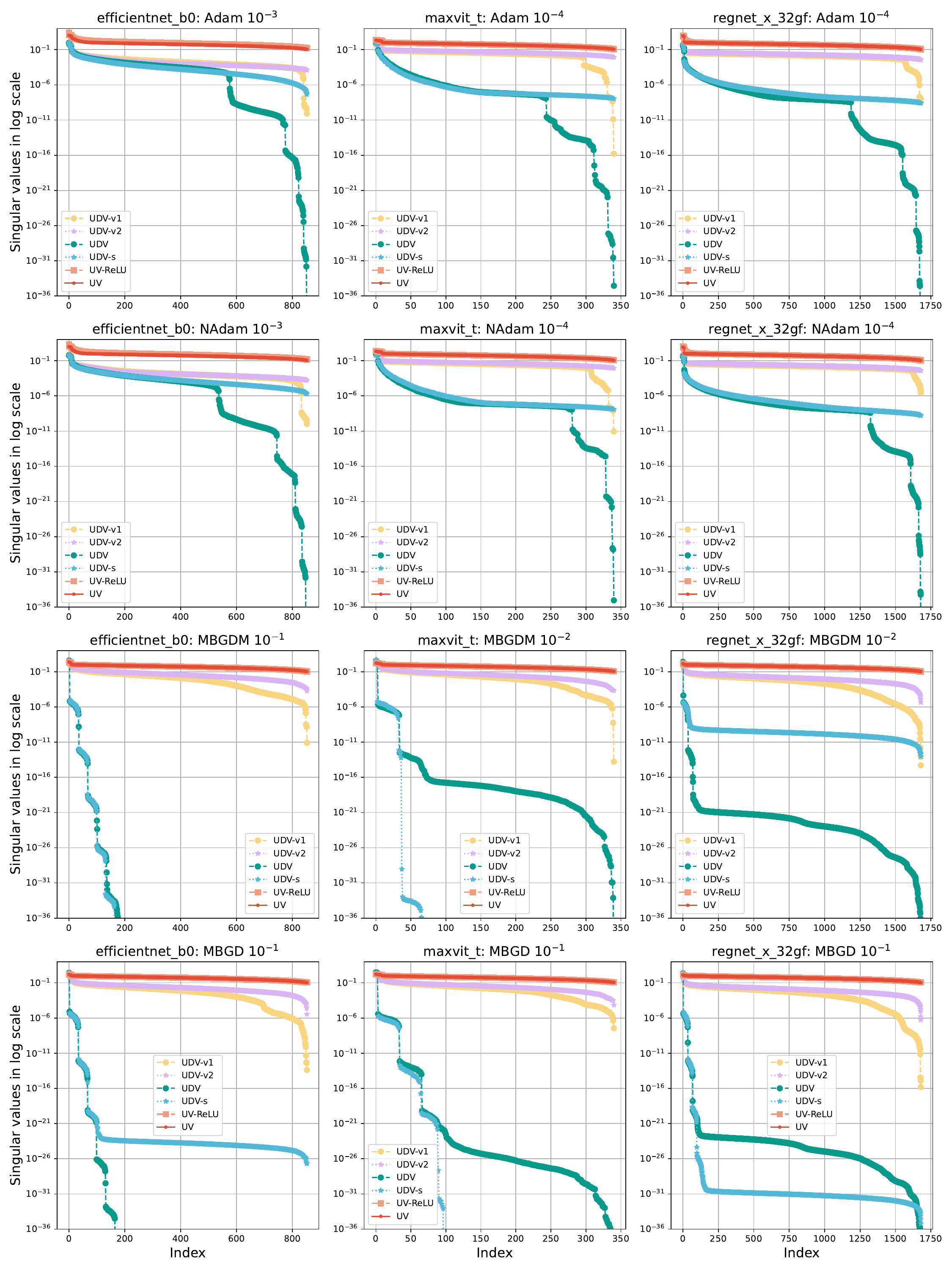}
  \captionsetup{font=small}
  \caption{Comparison of singular value pattern among all UDV variants, UV-ReLU and UV on the MNIST dataset.}
  \label{appendix-fig:SVD_examples_class}
\end{figure}

\subsection{Additional Details on the Pruning Experiment}
\label{app:pruning}

The SVD-based pruning experiments yield conclusions consistent with the singular value decay pattern.
UDV typically exhibits the fastest decay, leading to more compact models while maintaining performance, as shown in \Cref{appendix-fig:Pruning_Performance}.

\begin{figure}[!htbp] 
  \centering
  \includegraphics[height=0.85\textheight]{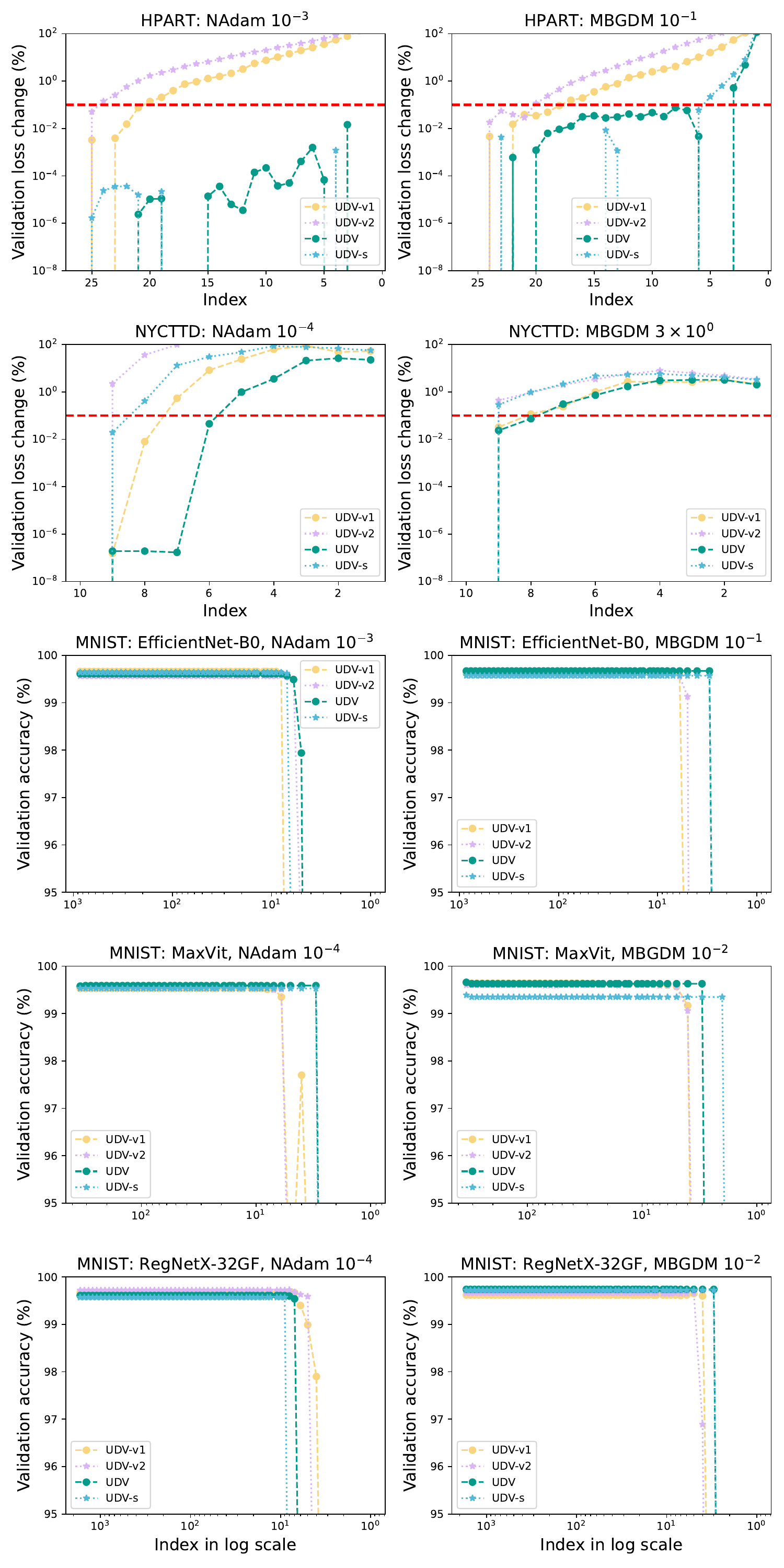}
  \captionsetup{font=small}
  \caption{The performance of SVD-based pruning. The index in x-axis represents the number of neurons in the diagonal layer after pruning. For the HPART and NYCTTD datasets, the test loss change indicates how much worse the pruned model performs compared to the baseline (the model before pruning), expressed as a percentage ($\frac{loss_{pruned}-loss_{baseline}}{loss_{baseline}}\times100\%$). Note that the pruned model may outperform the baseline, but negative values cannot be displayed on a logarithmic scale. The red dashed line denotes the 0.1\% threshold, indicating negligible performance sacrifice. For the MNIST dataset, the results show the test accuracy after pruning the model.}
  \label{appendix-fig:Pruning_Performance}
\end{figure}

We observed that the learning rate can have some impact on the singular value decay pattern. 
In particular, a very large learning rate may cause oscillations in both training and test loss, yet may result in a rapid decay of the spectrum. 
Conversely, a small learning rate may lead to a less pronounced spectral decay but still can yield a comparable test loss to that of the optimal learning rates. 
For example, the difference in test losses between the Adam optimizer with learning rates of $10^{-3}$ and $10^{-4}$ was negligible, yet the singular value spectra differed (see \Cref{appendix-fig:SVD_examples_sensitive}). 
This discrepancy also affects the performance of the pruning experiments (see \Cref{appendix-fig:Pruning_Sensitive}). 

\begin{figure}[!tbp] 
  \centering
  \includegraphics[width=0.85\textwidth]{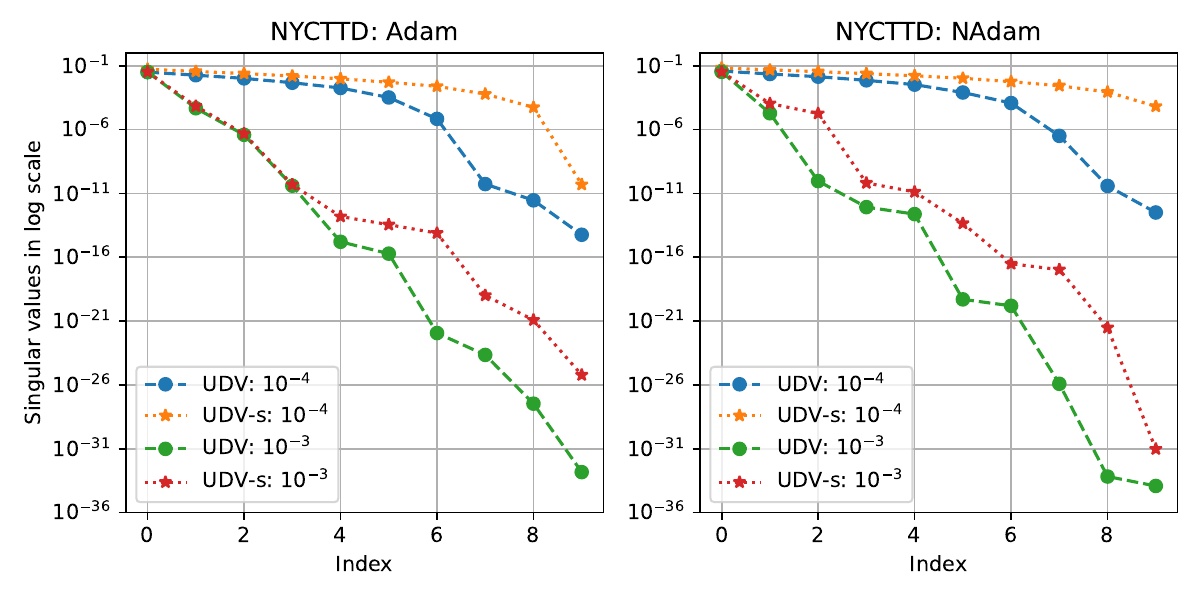}\vspace{-0.75em}
  \captionsetup{font=small}
  \caption{Differences in singular value patterns across varying learning rates.}\vspace{0.5em}
  \label{appendix-fig:SVD_examples_sensitive}
\end{figure}

\begin{figure}[!tbp] 
  \centering
  \includegraphics[width=0.95\textwidth]{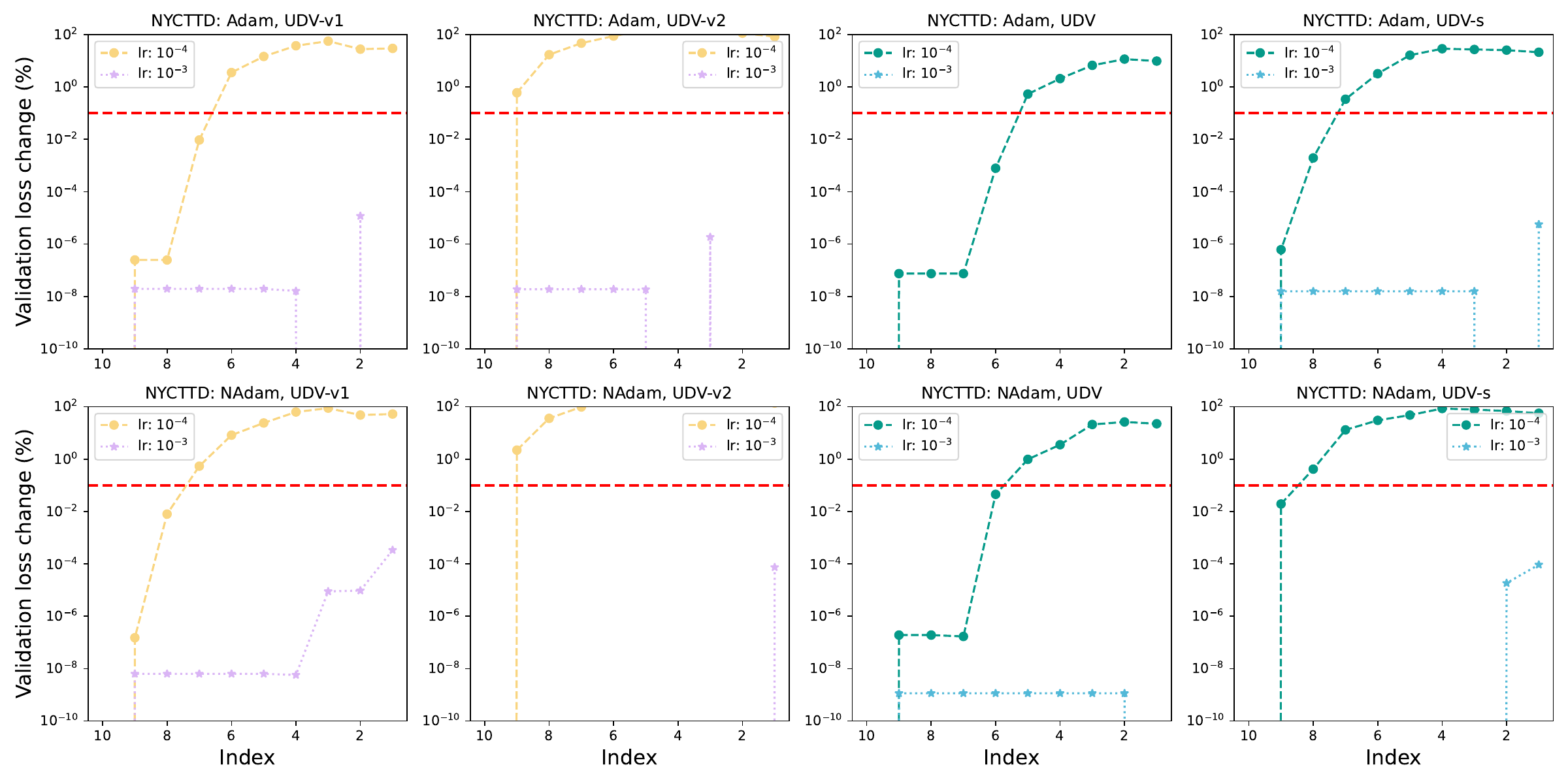}
  \captionsetup{font=small}
  \caption{Different learning rates lead different pruning performance.  the test loss change indicates how much worse the pruned model performs compared to the baseline (the model before pruning), expressed as a percentage ($\frac{loss_{pruned}-loss_{baseline}}{loss_{baseline}}\times100\%$). Note that the pruned model may outperform the baseline, but negative values cannot be displayed on a logarithmic scale. The red dashed line denotes the 0.1\% threshold, indicating negligible performance sacrifice.}
  \label{appendix-fig:Pruning_Sensitive}
\end{figure}

When the learning rate is close to the optimal value, a `stair-step' pattern (see \Cref{appendix-fig:StairStep}) can be observed on the loss or accuracy curve in the classification task.
It could be attributed to the model continuously searching for a low-rank solution, increasing the rank gradually.

\begin{figure}[!tbp] 
  \centering
  \includegraphics[width=0.75\textwidth]{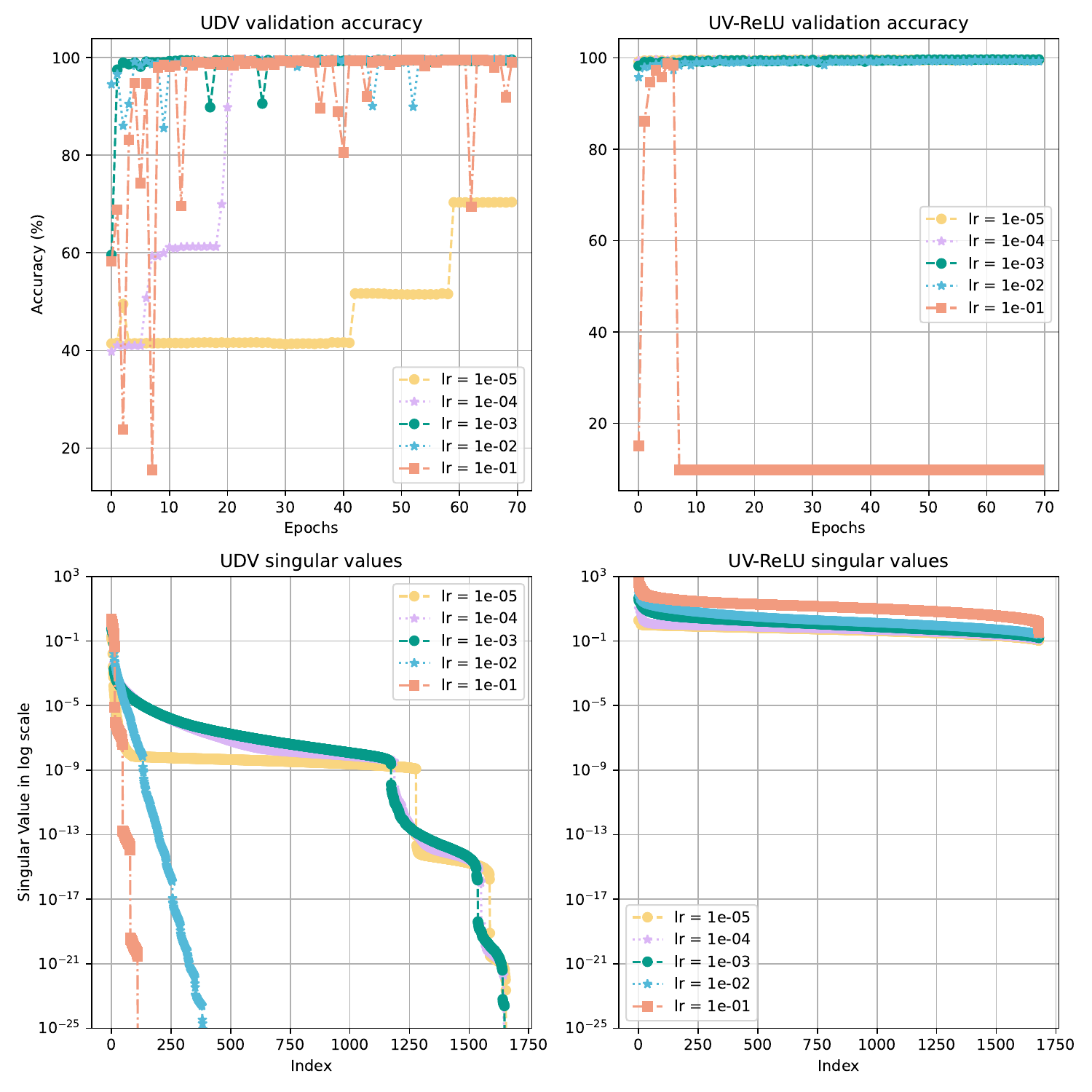}
  \captionsetup{font=small}
  \caption{The `stair-step' accuracy curve in classification (RegNetX-32GF with Adam). The sub-figures in the second row correspond to the respective singular value spectra, indicating that UDV consistently seeks low-rank solutions regardless of the learning rate.}
  \label{appendix-fig:StairStep}
\end{figure}

\subsection{Incorporating ReLU into UDV}
\label{app:udv-relu}

To explore whether non-linear activation functions, particularly ReLU, can be used in our UDV framework, we conducted an additional experiment. 

Building on the standard ReLU, we first introduced norm constraints, $\sum_{j = 1}^m \|\mathbf{u}_j \|_2^2 \leq 1$ and $\sum_{j = 1}^m \|\mathbf{v}_j \|_2^2 \leq 1$, to the layers both before and after ReLU, denoted as \textit{ReLU(constrained)}.
Next, we integrated ReLU into the diagonal layer of UDV. 
When the non-negativity constraints are applied on the diagonal layer, we refer to the model as UDV-ReLU; otherwise, it is called UDV-ReLU-s.

We conducted a preliminary experiment to verify the feasibility of incorporating ReLU into the UDV, leaving a comprehensive study to future work. 
We derived the optimizers and learning rates from \Cref{tab:full_results_adam,tab:full_results_nadam,tab:full_results_MBGD,tab:full_results_mbgdm}, but only used the HPART dataset for regression and the MNIST for classification.

\Cref{appendix-fig:UDV-ReLU-Reg} and \Cref{appendix-fig:UDV-ReLU-Class} showed that UDV-ReLU and UDV-ReLU-s exhibit a similar (or even more pronounced) decaying pattern in singular values as observed in the proposed UDV.
The `stair-step' accuracy curve was also observed in these models.

\begin{figure}[!t]
  \centering
  \includegraphics[width=0.95\textwidth]{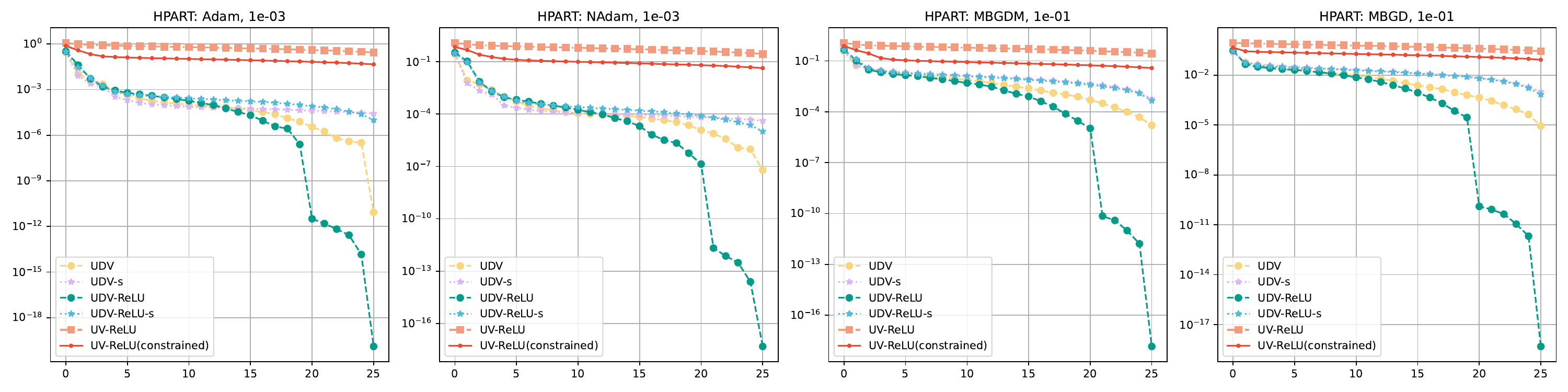}
  \captionsetup{font=small}
  \caption{ReLU in UDV: Singular value pattern on HPART dataset.}
  \label{appendix-fig:UDV-ReLU-Reg}
\end{figure}
\begin{figure}[!t]
  \centering
  \includegraphics[width=0.95\textwidth]{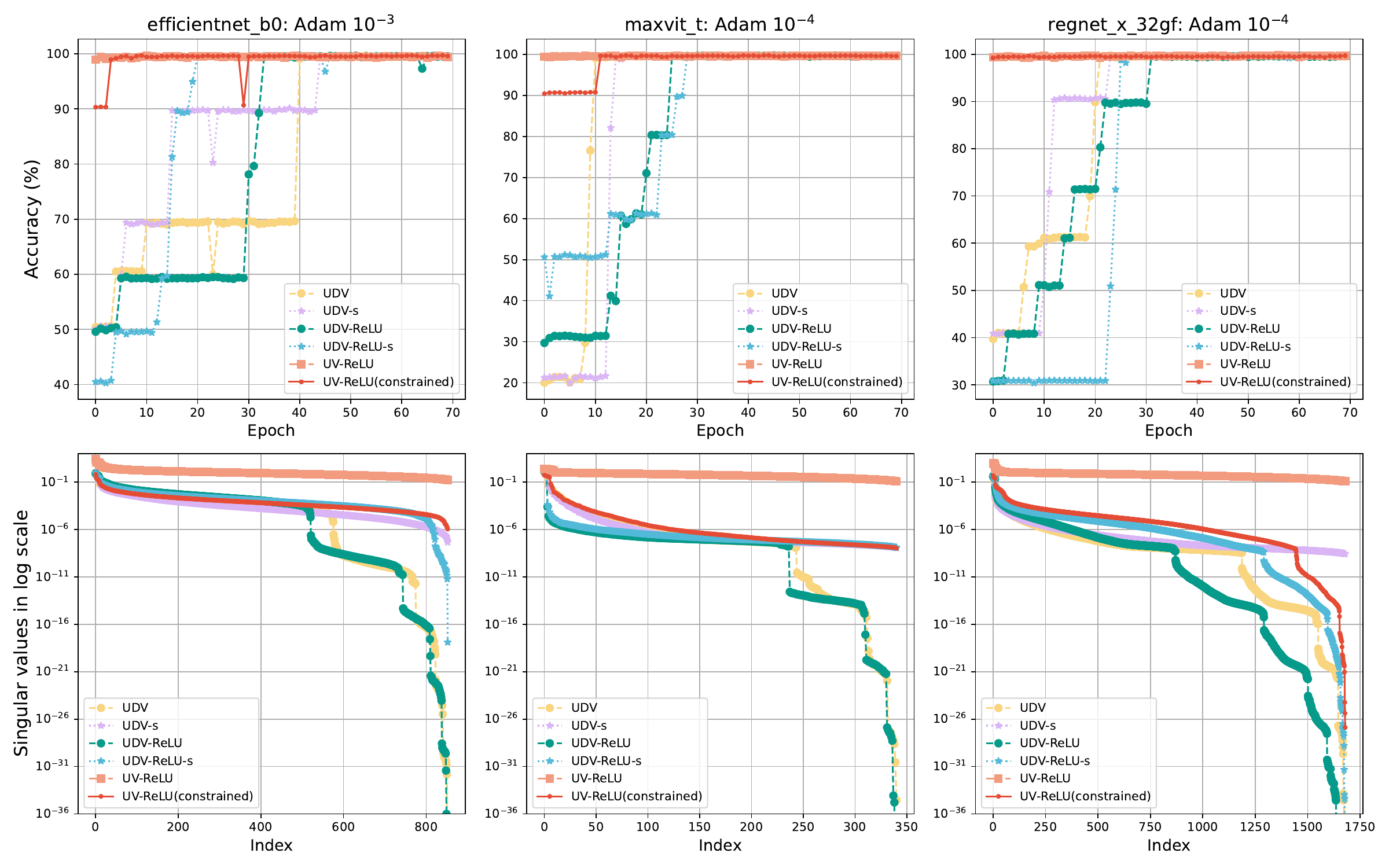}
  \captionsetup{font=small}
  \caption{ReLU in UDV: test accuracy (the first row) and the singular value pattern (the second row) on MNIST dataset. Faster convergence can be achieved by fine-tuning the learning rate.}
  \label{appendix-fig:UDV-ReLU-Class}
\end{figure}

\subsection{Comparison of UDV and Three-Layer Fully Connected Networks}
\label{app:UFV}

We extend the experiments to compare the UDV framework with three-layer fully connected neural networks. 
To highlight the critical role of the UDV structure and its constraints, we include the following models in the comparison: the proposed UDV model, a model based on the UDV structure but without the constraints (UDV unconstrained), a fully connected three-layer neural network (UFV), and the standard UV model as the baseline.

The results, also included in \Cref{tab:full_results_adam,tab:full_results_nadam,tab:full_results_MBGD,tab:full_results_mbgdm}, are shown in \Cref{appendix-fig:compare_UFV_reg} and \Cref{appendix-fig:compare_UFV_class}. 
These findings demonstrate that the singular value decay in UDV unconstrained, UFV, and UV is significantly slower than in the proposed UDV framework.  
This underscores the critical role of the diagonal layer and constraints in facilitating low-rank solution identification within the proposed structure.

\begin{figure}[!htbp] 
  \centering
  \includegraphics[clip, trim=0cm 0cm 0cm 0cm, width=0.95\textwidth]{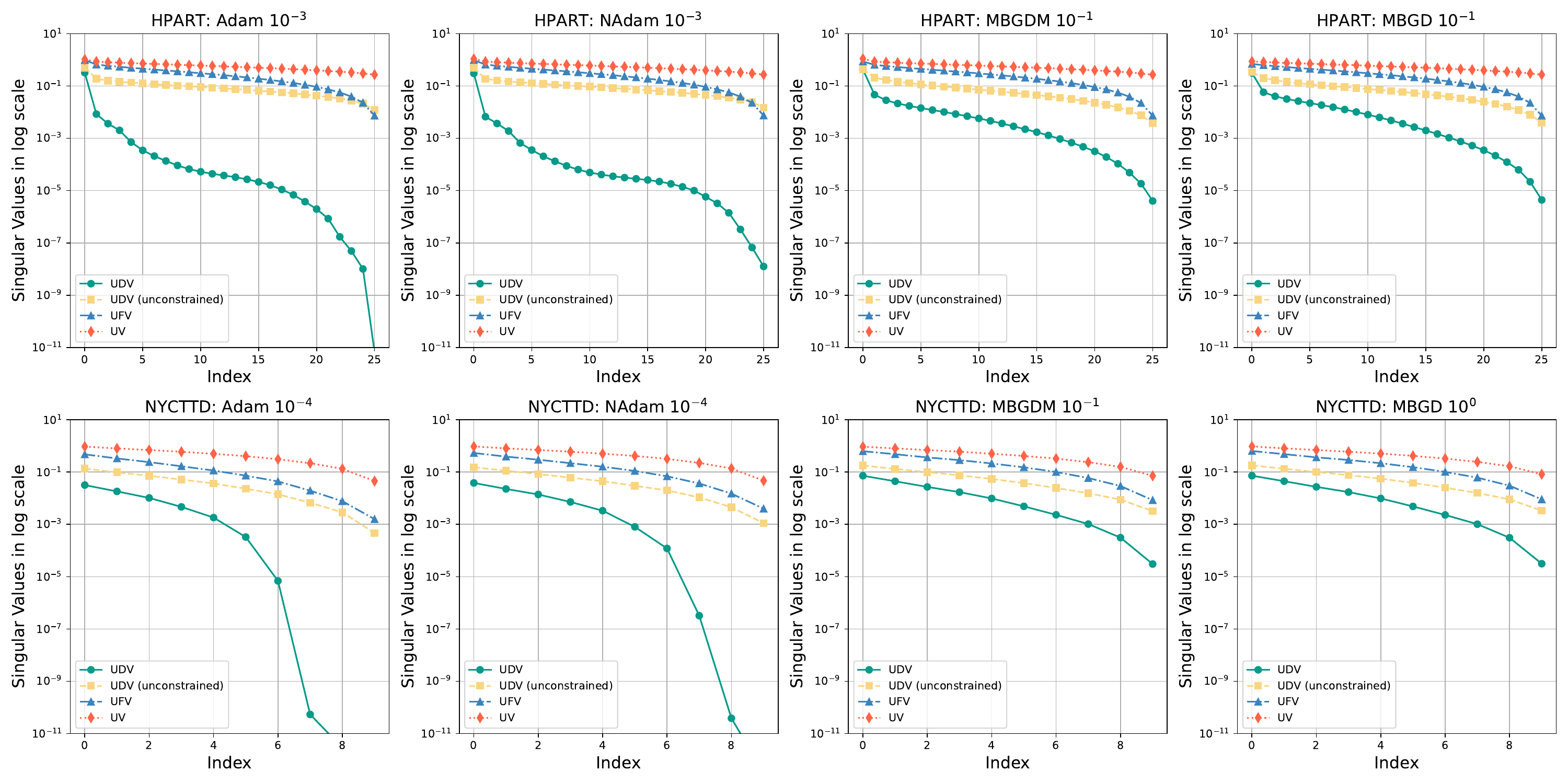}
  \captionsetup{font=small}
  \caption{Comparison of singular value spectrum among UDV, UDV (unconstrained),UFV and UV on the regression tasks.}
  \label{appendix-fig:compare_UFV_reg}
\end{figure}

\begin{figure}[!htbp] 
  \centering
  \includegraphics[clip, trim=0cm 0cm 0cm 0cm, width=0.95\textwidth]{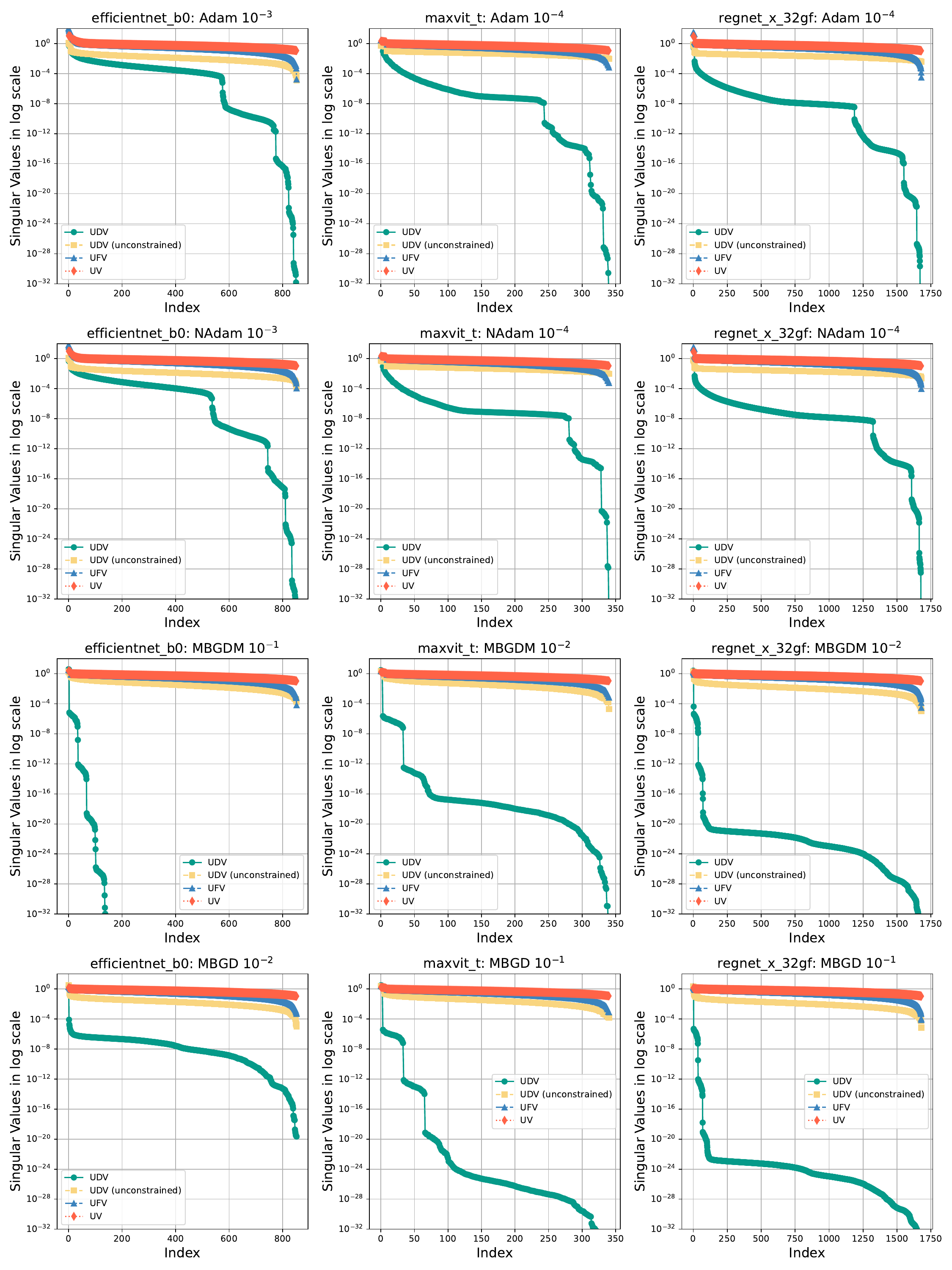}
  \captionsetup{font=small}
  \caption{Comparison of singular value spectrum among UDV, UDV (unconstrained), UFV and UV on the classification datasets (MNIST).}
  \label{appendix-fig:compare_UFV_class}
\end{figure}

\subsection{Comparing the Effects of UDV and Weight Decay Regularization}
\label{app:weight_decay}

We conducted a preliminary experiment to compare the singular value decay in two-layer (UV) and three-layer (UFV) fully connected network blocks with weight decay regularization to that observed in the UDV framework.  
We focused on the regression task with the HPART dataset.  
For the regularization parameter ($\gamma$), we tested values $10^{-6}$, $10^{-5}$, $\dots$, $10^{-1}$.  

Increasing $\gamma$ led to a faster decay in the singular value spectrum but at the cost of higher training and test losses.  
\Cref{appendix-fig:weight_decay} illustrates an example where we selected the largest $\gamma$ values for which the test loss remained within 10\% of the UDV baseline.  
The resulting singular value decay in the UV and UFV models was less pronounced than in the UDV model.  
Further increasing $\gamma$ to match or exceed the spectral decay of the UDV model resulted in significantly worse test loss compared to the UDV network.  
The complete results are presented in \Cref{appendix-fig:weight_decay_UFV,appendix-fig:weight_decay_UV}.

\begin{figure}[!htbp] 
  \centering
  \includegraphics[clip, trim=0cm 0cm 0cm 0cm, width=\textwidth]{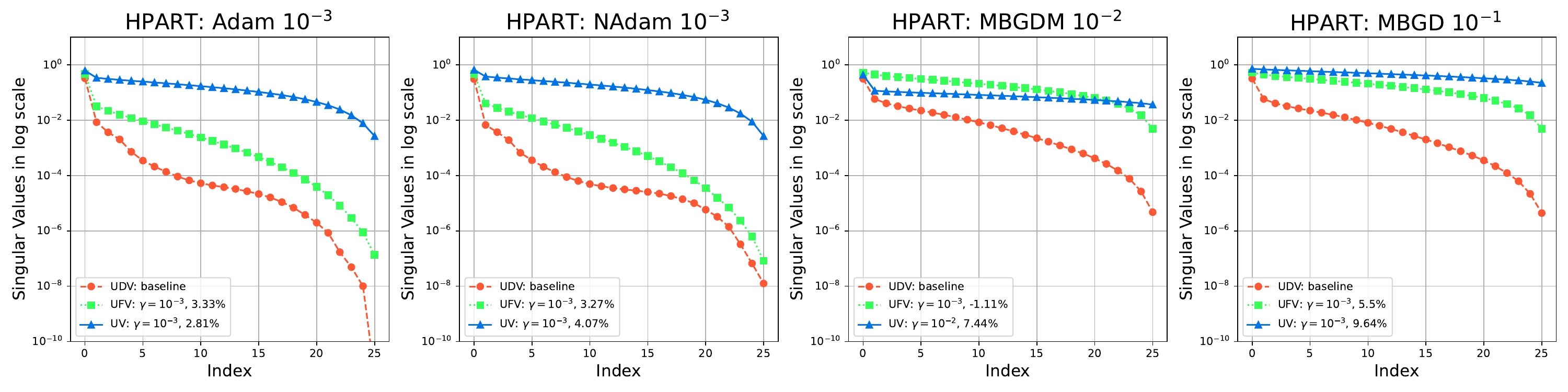}
  \captionsetup{font=small}
  \caption{Comparison of singular value spectrum among the proposed UDV, UFV and UV (with weight decay) on the HPART dataset. In addition to the UDV baseline, the UFV and UV models include the regularization parameter $\gamma$ and the relative change in test performance. The relative change is defined as ($\frac{loss_{model}-loss_{baseline}}{loss_{baseline}}\times 100\%$), representing how much worse the model performs compared to the UDV baseline.}
  \label{appendix-fig:weight_decay}
\end{figure}

\begin{figure}[!htbp] 
  \centering
  \includegraphics[clip, trim=0cm 0cm 0cm 0cm, width=\textwidth]{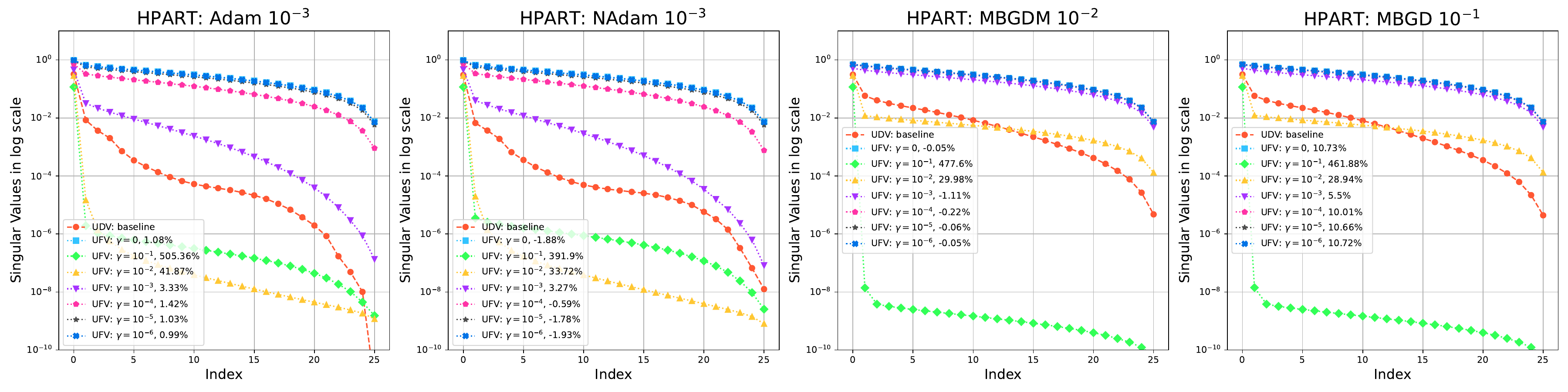}
  \captionsetup{font=small}
  \caption{Extension of \Cref{appendix-fig:weight_decay}. Comparison of singular value spectrum between the proposed UDV and UFV (with weight decay) on the HPART dataset. Regularization parameter $\gamma = 0$ indicates that no weight decay is applied.}
  \label{appendix-fig:weight_decay_UFV}
\end{figure}

\begin{figure}[!htbp] 
  \centering
  \includegraphics[clip, trim=0cm 0cm 0cm 0cm, width=\textwidth]{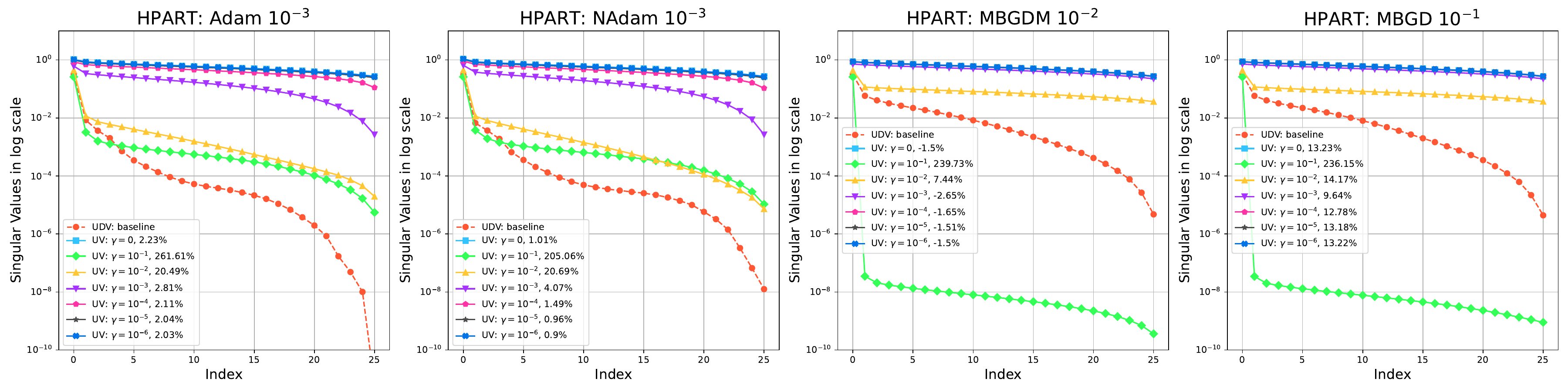}
  \captionsetup{font=small}
  \caption{Extension of \Cref{appendix-fig:weight_decay}. Comparison of singular value spectrum between the proposed UDV and UV (with weight decay) on the HPART dataset. Regularization parameter $\gamma = 0$ indicates that no weight decay is applied.}
  \label{appendix-fig:weight_decay_UV}
\end{figure}

\clearpage
\newpage

\subsection{A Toy Example of Causal Language Modeling Using a Pre-trained LLaMA-2 Model with LoRA-Based UDV Fine-Tuning}
\label{app:llm_task}

To further broaden the application scope of the proposed method, we integrate UDV into the pre-trained model using the LoRA framework and evaluate its effectiveness on the causal language modeling task (i.e.,  a generative task where a model predicts the next token in a sequence based solely on past tokens).
Experiments are conducted using the LLaMA-2-7B model \citep{touvron2023llama} and the WikiText-2 dataset \citep{merity2016pointer}, which is a common benchmark test.

LoRA (Low-Rank Adaptation) is a parameter-efficient fine-tuning technique that inserts trainable low-rank matrices into pre-trained models \citep{hu2022lora}. 
By freezing the original model weights and only training a small number of additional parameters, LoRA enables efficient adaptation to downstream tasks while significantly reducing computational and storage costs.
Specifically, LoRA introduces trainable low-rank matrices $B\in \mathbb{R}^{d\times r}$ and $A\in \mathbb{R}^{r\times k}$ as a side path to the frozen pre-trained weight matrix $W_0$, and expresses the forward pass as $h=W_0x+BAx$, where $x$ is the input.
Here we consider $d=k$ for simplicity.

By inserting a diagonal matrix $D$ between the matrices $B$ and $A$, the LoRA structure naturally extends to the LoRA-based UDV structure, as illustrated in \Cref{appendix-fig:UDV_lora}.
After the weight update of $U, D$ and $V$, constraints described in \Cref{eqn:extra-layer-relu} are applied to the LoRA-based UDV structure.
In practice, the diagonal matrix can be efficiently implemented using a vector $d\in\mathbb{R}^{1\times r}$, further reducing computational and memory overhead.
Consequently, the LoRA-based UDV introduces a negligible increase in the number of trainable parameters compared to the original LoRA structure, as it adds only $r$ parameters per LoRA layer replacement.

\begin{figure}[!htbp] 
  \centering
  \includegraphics[clip, trim=0cm 0cm 0cm 0cm, width=0.95\textwidth]{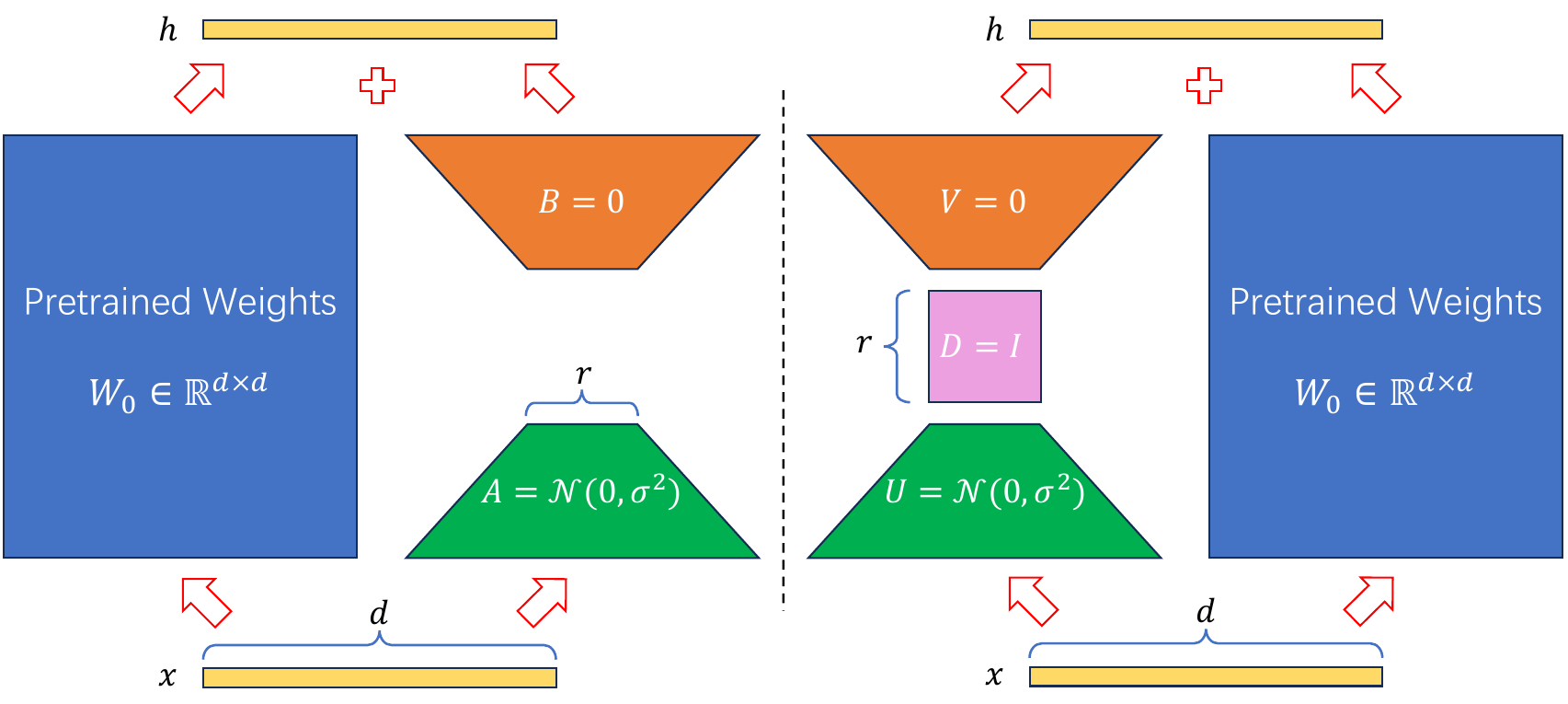}
  \captionsetup{font=small}
  \caption{LoRA forward pass (Left) and LoRA-based UDV forward pass (Right). The inputs and pre-trained weights are identical in both cases. Matrices with the same fill color indicate matching shapes.}
  \label{appendix-fig:UDV_lora}
\end{figure}

We initialized both the LoRA and LoRA-based UDV structures from the same starting point: $B=V=0$, $A=U=\mathcal{N}(0, 0.01)$ and $D = I$, where $I$ is the identity matrix.
Mini-batch gradient descent (MBGD) with learning rates $lr = 0.001$,  using an effective batch size of 32, was applied to better observe the effects of the imposed constraints.
Following the standard LoRA setup, only the linear layers associated with the Query ($Q$) and Value ($V$) projections in the attention mechanism were replaced with LoRA and LoRA-based UDV structures.
We evaluated the models using rank values $r \in \{4,8,16,32,64\}$ , where the dimensionality of $d$ depends on the shape of the original layer.
Both structures were fine-tuned for 30 epochs, and then pruning was applied as described in \Cref{sec:SVD-based pruning}. 

Instead of applying uniform pruning across all LoRA-based UDV structures, we set several thresholds on the retained energy—defined as the sum of the squared singular values—to dynamically prune each UDV block based on its individual importance.
Note that the pruning can be applied to the LoRA structure in the same manner as the LoRA-based UDV structure by temporarily inserting an frozen identity matrix as a diagonal layer.

Perplexity (i.e., the exponential of the average negative log-likelihood) is used as the primary evaluation metric for this experiment, and lower perplexity indicates better language modeling performance.
Furthermore, the pruning performance was demonstrated by the perplexity change in terms of the compression ratio (i.e., the ratio between the number of trainable parameters of the compressed structure and the original structure)

Since the hard constraints slower the convergence, and its fine-tuning performance is slightly lower than that of the original LoRA structure within the same number of epochs, as shown in \Cref{appendix-fig:LoRA_ppl_5} shown.
However, the pruning results (see \Cref{appendix-fig:LoRA_pruning_5}) demonstrate that the LoRA-based UDV is significantly more robust to high compression ratios. 
Even at a compression ratio of $0.1$, the LoRA-based UDV is able to maintain performance effectively.

\begin{figure}[!htbp] 
  \centering
  \includegraphics[clip, trim=0cm 0cm 0cm 0cm, width=0.95\textwidth]{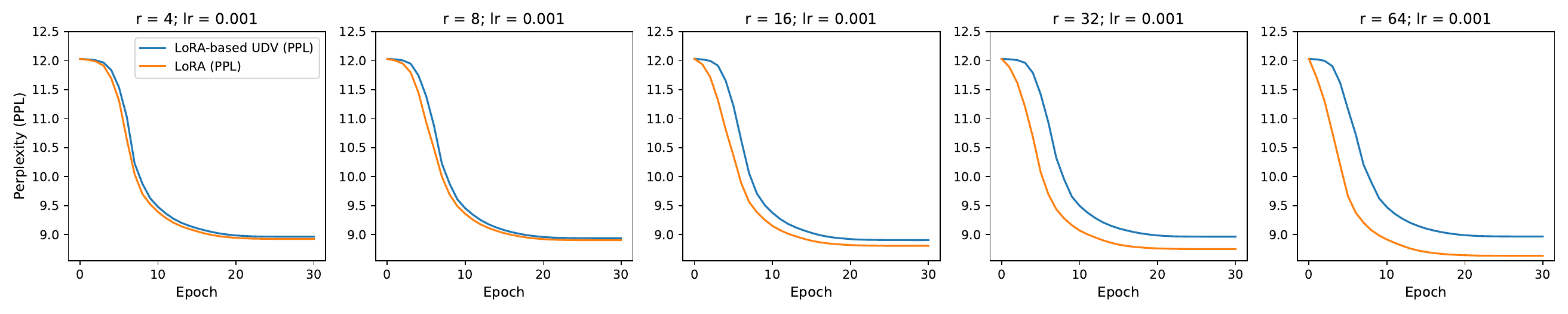}
  \captionsetup{font=small}
  \caption{Comparison of perplexity between LoRA and LoRA-based UDV during training. The perplexity at epoch $0$ reflects the performance of the pre-trained model before any fine-tuning.}
  \label{appendix-fig:LoRA_ppl_5}
\end{figure}

\begin{figure}[!htbp] 
  \centering
  \includegraphics[clip, trim=0cm 0cm 0cm 0cm, width=0.95\textwidth]{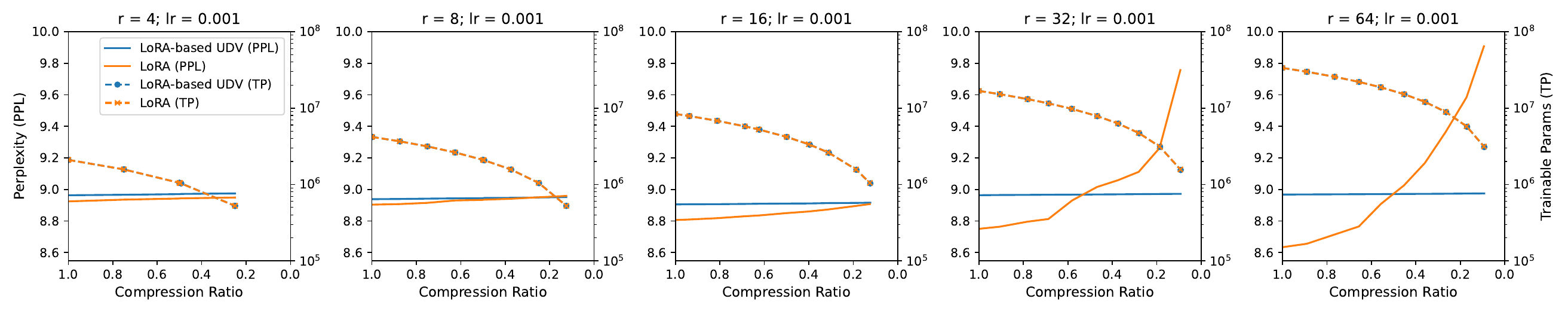}
  \captionsetup{font=small}
  \caption{Comparison of pruning performance between LoRA and LoRA-based UDV during training.}
  \label{appendix-fig:LoRA_pruning_5}
\end{figure}

This toy example is not intended to present a mature pruning method, but rather to highlight the potential of the proposed UDV block, which can be applied to a wide range of models and demonstrates promising pruning performance.

\section{Extended Discussion on the Related Work}

\textbf{Burer-Monteiro factorization.}
BM factorization was proposed for solving semidefinite programs \citep{burer2003nonlinear, 01_BM_first2, boumal2016non, boumal2020deterministic, cifuentes2021burer} and has been recognized for its efficiency in addressing low-rank optimization problems, e.g.,~\citep{sun2016guaranteed, bhojanapalli2016dropping, park2017non, park2018finding, hsieh2018non, sahin2019inexact, 01_BM_solving2, 01_BM_solving}. Building on the connections between training problems for (non-linear) two-layer neural networks and convex (copositive) formulations (see \citep{pilanci2020neural,ergen2020implicit,sahiner2021vector} and the references therein), \citet{01_closeWork} recently introduced BM factorization to solve convex formulations of various architectures, including fully connected networks with ReLU and gated ReLU \citep{01_GaLU} activations, two-layer convnets~\citep{01_CNN}, and self-attention mechanisms based on Transformers \citep{01_attention}. 

\textbf{Implicit regularization.} 
One promising line of research that aims to explain the successful generalization abilities of neural networks follows the concept of `implicit regularization,' which results from the optimization methods and formulations used during training \citep{neyshabur2014search, neyshabur2017exploring, 01_IRandNN}. 
Several studies explore matrix factorization to investigate implicit bias \citep{gunasekar2017implicit, arora2018stronger, razin2020implicit, belabbas2020implicit,li2021towards}. 
\citet{01_IRhelpGD} extend these results to the final linear layers of nonlinear ReLU-activated feedforward networks with fully connected layers and skip connections. 
More recently \citet{01_RelationMFIR} investigated implicit regularization in ReLU networks. 
Much of the existing work focuses on gradient flow dynamics in the limit of infinitesimal learning rates. 
In particular, \citet{gidel2019implicit} examined discrete gradient dynamics in two-layer linear neural networks, showing that the dynamics progressively learn solutions of reduced-rank regression with a gradually increasing rank. 

\textbf{Constrained neural networks.}
Regularizers are frequently used in neural network training to prevent overfitting and improve generalization \citep{01_regularization}, or to achieve structural benefits such as sparse and compact network architectures \citep{scardapane2017group}. 
However, it is conventional to apply these regularizers as penalty functions in the objective rather than as hard constraints, addressing them within gradient-based optimization via their (sub)gradients. 
This approach is likely favored due to the ease of implementation, as pre-built functions are readily available in common neural network packages. 
Several independent studies have also applied proximal methods across different networks and applications, finding that proximal-based methods tend to yield solutions with more pronounced structures \citep{bai2019proxquant, yang2020proxsgd, yun2021adaptive, yurtsever2021three, yang2022pathprox}. 
Structural regularization in the form of hard constraints, however, appears to be rare in neural network training. 
One notable exception is in the context of neural network training with the Frank-Wolfe algorithm \citep{pokutta2020deep, zimmer2022compression, macdonald2022interpretable}.
Recently, \citet{pethick2025training} revealed parallels between Frank-Wolfe on constrained networks and algorithms that post-process update steps, such as Muon \citep{jordan2024muon}, which achieves state-of-the-art results on nanoGPT by orthogonalizing the update directions before applying them. 

There are other examples of constraints in neural networks. For instance, constraints can be applied to ensure adherence to real-world conditions in applications where they are necessary \citep{pathak2015constrained, marquez2017imposing, jia2017constrained, kervadec2019constrained, weber2021constrained}, or to incorporate physical laws in physics-informed neural networks \citep{01_PINN, lu2021physics, patel2022physics}. 
In these cases, the feasible set is typically complex and difficult to project onto; therefore, optimization algorithms like primal-dual methods or augmented Lagrangian techniques are used. 
Constraints are also present in lifted neural networks, a framework where the training problem is reformulated in a higher-dimensional `lifted' space. In this space, conventional activation functions like ReLU, used in the original formulation, are expressed as constraints \citep{askari2018lifted,sahiner2021vector,bartan2021neural,prakhya2024convex}. 

\textbf{Pruning.}
Neural networks are overparameterized, which can enhance generalization and avoid poor local minima. But such models then suffer from excessive memory and computational demands, making them less efficient for deployment in real-world applications \citep{00_overPbenefits}. 
Several compression techniques have been studied in the literature, including parameter quantization \citep{00_quantizing}, knowledge distillation \citep{00_knowledgeDistill}, and pruning.

Pruning reduces the number of parameters, resulting in more compact and efficient models that are easier to deploy. 
The literature on pruning is extensive, with diverse methods proposed with distinct characteristics. 
Various criteria are used to determine which weights to prune, including second-order derivative-based methods \citep{lecun1989optimal,hassibi1992second}, magnitude-based pruning \citep{janowsky1989pruning,han2015learning}, saliency heuristics \citep{mozer1988skeletonization,lee2018snip}, and matrix or tensor factorization-based techniques \citep{00_EarlyMD02,sainath2013low,jaderberg2014speeding,lebedev2015speeding,swaminathan2020sparse}, among others. 
Pruning by singular value thresholding has shown promising results, particularly in natural language processing \citep{chen2021drone}, and is often used along with various enhancements such as importance weights and data whitening for effective compression of large language models \citep{hsu2022language, yuan2023asvd, wang2024svd}. 
A comprehensive review of pruning methods is beyond the scope of this paper due to space limitations and the diversity of approaches. 
For more detailed reviews, we refer to \citep{reed1993pruning,00_purningSurvey,cheng2024survey}, and the references therein. 

\clearpage
\newpage

\begin{table}[]
\centering
\captionsetup{font=small}
\caption{Experiments using Adam optimizer. Not applicable or results with obvious oscillations or divergence are denoted as `--'.}
\label{tab:full_results_adam}
\resizebox{0.95\textwidth}{!}{
\begin{tabular}{clllll}
\hline
Tasks &
  \multicolumn{2}{c}{Regression (Test Loss)} &
  \multicolumn{3}{c}{Classification (Test Accuracy)} \\ \hline
\begin{tabular}[c]{@{}c@{}}Dataset \\ (Transferred model)\\ \textnormal{ }\end{tabular} &
  \multicolumn{1}{c}{\begin{tabular}[c]{@{}c@{}}HPART \\ --\\ ($\times 10^{-3}$)\end{tabular}} &
  \multicolumn{1}{c}{\begin{tabular}[c]{@{}c@{}}NYCTTD \\ --\\ ($\times 10^{-6}$)\end{tabular}} &
  \multicolumn{3}{c}{\begin{tabular}[c]{@{}c@{}}MNIST\\  (MaxVit-T [M] $|$ EfficientNet-B0 [E] $|$ RegNetX-32GF [R]) \\ ($\times 100\%$)\end{tabular}} \\ \hline
LR: $10^{-6}$ &
  \multicolumn{1}{c}{--} &
  \multicolumn{1}{c}{--} &
  \begin{tabular}[c]{@{}l@{}}UDV: --\\ UDV-s: --\\ UDV-v1: 99.56\\ UDV-v2: 99.53\\ UDV-ReLU: --\\ UDV(unconstrained): 99.53\\ UFV: 99.53\\ UV-ReLU: 99.59 \\ UV: 99.54\\ M: 99.42\end{tabular} &
  \begin{tabular}[c]{@{}l@{}}UDV: --\\ UDV-s: --\\ UDV-v1: --\\ UDV-v2: --\\ UDV-ReLU: --\\ UDV(unconstrained): --\\ UFV: 99.25\\ UV-ReLU: 99.29\\ UV: 99.24\\ E: 98.67\end{tabular} &
  \begin{tabular}[c]{@{}l@{}}UDV: --\\ UDV-s: --\\ UDV-v1: 99.46 \\ UDV-v2: 99.48\\ UDV-ReLU: --\\ UDV(unconstrained): 99.48\\ UFV: 99.38\\ UV-ReLU: 99.41\\ UV: 99.36\\ R: 99.32\end{tabular} \\ \hline
LR: $10^{-5}$ &
  \multicolumn{1}{c}{--} &
  \multicolumn{1}{c}{--} &
  \begin{tabular}[c]{@{}l@{}}UDV: --\\ UDV-s: --\\ UDV-v1: 99.61\\ UDV-v2: 99.58\\ UDV-ReLU: --\\ UDV(unconstrained): 99.62\\ UFV: 99.60\\ UV-ReLU: 99.57 \\ UV: 99.61\\ M: 99.59\end{tabular} &
  \begin{tabular}[c]{@{}l@{}}UDV: --\\ UDV-s: --\\ UDV-v1: 99.49\\ UDV-v2: 99.53\\ UDV-ReLU: --\\ UDV(unconstrained): 99.54\\ UFV: 99.43\\ UV-ReLU: 99.52\\ UV: 99.52\\ E: 99.51\end{tabular} &
  \begin{tabular}[c]{@{}l@{}}UDV: --\\ UDV-s: --\\ UDV-v1: \textbf{99.58}\\ UDV-v2: 99.50\\ UDV-ReLU: --\\ UDV(unconstrained): 99.50\\ UFV: 99.58\\ UV-ReLU: 99.59\\ UV: 99.57\\ R: \textbf{99.59}\end{tabular} \\ \hline
LR: $10^{-4}$ &
  \begin{tabular}[c]{@{}l@{}}UDV: 2.304\\ UDV-s: 1.912\\ UDV-v1: 1.823\\ UDV-v2: 1.738\\ UDV-ReLU: 2.731\\ UDV(unconstrained): 1.738\\ UFV: 1.351\\ UV-ReLU: 1.376\\ UV: 1.475\end{tabular} &
  \begin{tabular}[c]{@{}l@{}}UDV: \textbf{5.248}\\ UDV-s: 5.248\\ UDV-v1: \textbf{5.248}\\ UDV-v2: 5.250\\ UDV-ReLU: \textbf{5.251}\\ UDV(unconstrained): 5.250\\ UFV: 5.254\\ UV-ReLU: \textbf{5.263}\\ UV: 5.275\end{tabular} &
  \begin{tabular}[c]{@{}l@{}}UDV: \textbf{99.66}\\ UDV-s: 99.58\\ UDV-v1: \textbf{99.65}\\ UDV-v2: \textbf{99.69}\\ UDV-ReLU: \textbf{99.63}\\ UDV(unconstrained): \textbf{99.66}\\ UFV: \textbf{99.69}\\ UV-ReLU: \textbf{99.63}\\ UV: \textbf{99.64}\\ M: \textbf{99.65}\end{tabular} &
  \begin{tabular}[c]{@{}l@{}}UDV: --\\ UDV-s: --\\ UDV-v1: \textbf{99.63}\\ UDV-v2: \textbf{99.58}\\ UDV-ReLU: --\\ UDV(unconstrained): \textbf{99.63}\\ UFV: \textbf{99.59}\\ UV-ReLU: \textbf{99.54}\\ UV: 99.56\\ E: 99.60\end{tabular} &
  \begin{tabular}[c]{@{}l@{}}UDV: \textbf{99.55}\\ UDV-s: \textbf{99.55}\\ UDV-v1: 99.56\\ UDV-v2: \textbf{99.68}\\ UDV-ReLU: \textbf{99.55}\\ UDV(unconstrained): \textbf{99.59}\\ UFV: \textbf{99.67}\\ UV-ReLU: 99.64\\ UV:\textbf{99.60}\\ R: 99.59\end{tabular} \\ \hline
LR: $10^{-3}$ &
  \begin{tabular}[c]{@{}l@{}}UDV: \textbf{1.304}\\ UDV-s: \textbf{1.316}\\ UDV-v1: \textbf{1.267}\\ UDV-v2: \textbf{1.268}\\ UDV-ReLU: \textbf{1.320}\\ UDV(unconstrained): \textbf{1.268}\\ UFV: \textbf{1.318}\\ UV-ReLU: \textbf{1.167}\\ UV: \textbf{1.333}\end{tabular} &
  \begin{tabular}[c]{@{}l@{}}UDV: 5.248\\ UDV-s: \textbf{5.248}\\ UDV-v1: 5.248\\ UDV-v2: \textbf{5.248}\\ UDV-ReLU: 5.251\\ UDV(unconstrained): \textbf{5.248}\\ UFV: \textbf{5.248}\\ UV-ReLU: 5.306\\ UV: \textbf{5.251}\end{tabular} &
  \begin{tabular}[c]{@{}l@{}}UDV: 99.57\\ UDV-s: \textbf{99.59}\\ UDV-v1: 99.57\\ UDV-v2: 99.58\\ UDV-ReLU: 99.51\\ UDV(unconstrained): 99.52\\ UFV: 99.59\\ UV-ReLU: 99.60\\ UV: 99.60\\ M: 99.63\end{tabular} &
  \begin{tabular}[c]{@{}l@{}}UDV: \textbf{99.55}\\ UDV-s: \textbf{99.59}\\ UDV-v1: 99.59\\ UDV-v2: 99.57\\ UDV-ReLU: \textbf{99.54}\\ UDV(unconstrained): 99.57\\ UFV: 99.54\\ UV-ReLU: 99.49\\ UV: \textbf{99.60}\\ E: \textbf{99.61}\end{tabular} &
  \begin{tabular}[c]{@{}l@{}}UDV: 99.50\\ UDV-s: 99.49\\ UDV-v1: 99.47\\ UDV-v2: 99.47\\ UDV-ReLU: 99.42\\ UDV(unconstrained): 99.58\\ UFV: 99.41\\ UV-ReLU: \textbf{99.66}\\ UV:99.46\\ R: 99.49\end{tabular} \\ \hline
LR: $10^{-2}$ &
  \begin{tabular}[c]{@{}l@{}}UDV: 1.877\\ UDV-s: 1.998\\ UDV-v1: 1.409\\ UDV-v2: 1.500\\ UDV-ReLU: 1.699\\ UDV(unconstrained): 1.402\\ UFV: 1.483\\ UV-ReLU: 1.467\\ UV: 1.430\end{tabular} &
  \begin{tabular}[c]{@{}l@{}}UDV: 5.257\\ UDV-s:5.258\\ UDV-v1: 5.256\\ UDV-v2: 5.257\\ UDV-ReLU: 5.323\\ UDV(unconstrained): 5.263\\ UFV: 7.048\\ UV-ReLU: 5.323\\ UV: 7.369\end{tabular} &
  \begin{tabular}[c]{@{}l@{}}UDV: --\\ UDV-s: --\\ UDV-v1: --\\ UDV-v2: --\\ UDV-ReLU: --\\ UDV(unconstrained): --\\ UFV: --\\ UV-ReLU: --\\ UV: --\\ M: --\end{tabular} &
  \begin{tabular}[c]{@{}l@{}}UDV: 99.34\\ UDV-s: 99.56\\ UDV-v1: 99.48\\ UDV-v2: 99.44\\ UDV-ReLU: 96.95\\ UDV(unconstrained): 99.52\\ UFV: --\\ UV-ReLU: 99.37\\ UV: 99.32\\ E: 98.42\end{tabular} &
  \begin{tabular}[c]{@{}l@{}}UDV: 99.53\\ UDV-s: 99.43\\ UDV-v1: 99.35\\ UDV-v2: 99.28\\ UDV-ReLU: 99.37\\ UDV(unconstrained): 99.10\\ UFV: --\\ UV-ReLU: 99.32\\ UV: 98.90\\ R: 99.39\end{tabular} \\ \hline
LR: $10^{-1}$ &
  \begin{tabular}[c]{@{}l@{}}UDV: 4.188\\ UDV-s: --\\ UDV-v1: --\\ UDV-v2: --\\ UDV-ReLU: 18.26\\ UDV(unconstrained): --\\ UFV: 1.745\\ UV-ReLU: 42.01\\ UV: 1.614\end{tabular} &
  \begin{tabular}[c]{@{}l@{}}UDV: 5.321\\ UDV-s: 114.6\\ UDV-v1: 23.12\\ UDV-v2: 21.22\\ UDV-ReLU: 5.323\\ UDV(unconstrained): 126.1\\ UFV: --\\ UV-ReLU: 5.323\\ UV: --\end{tabular} &
  \begin{tabular}[c]{@{}l@{}}UDV: --\\ UDV-s: --\\ UDV-v1: --\\ UDV-v2: --\\ UDV-ReLU: --\\ UDV(unconstrained): --\\ UFV: --\\ UV-ReLU: --\\ UV: --\\ M: --\end{tabular} &
  \begin{tabular}[c]{@{}l@{}}UDV: 99.05\\ UDV-s: 95.62\\ UDV-v1: --\\ UDV-v2: --\\ UDV-ReLU: --\\ UDV(unconstrained): --\\ UFV: --\\ UV-ReLU:  --\\ UV: 97.69\\ E: 99.10\end{tabular} &
  \begin{tabular}[c]{@{}l@{}}UDV: 97.54\\ UDV-s: 99.31\\ UDV-v1: --\\ UDV-v2: --\\ UDV-ReLU: --\\ UDV(unconstrained): --\\ UFV: --\\ UV-ReLU: --\\ UV: --\\ R: 99.22\end{tabular} \\ \hline
LR: $10^{0}$ &
  \begin{tabular}[c]{@{}l@{}}UDV: 38.71\\ UDV-s: --\\ UDV-v1: 2.413\\ UDV-v2: 4.633\\ UDV-ReLU: 48.62\\ UDV(unconstrained): 14.05\\ UFV: --\\ UV-ReLU: 48.24\\ UV: --\end{tabular} &
  \begin{tabular}[c]{@{}l@{}}UDV: 5.327\\ UDV-s: --\\ UDV-v1: 16.19\\ UDV-v2: --\\ UDV-ReLU: 5.323\\ UDV(unconstrained): --\\ UFV: --\\ UV-ReLU: 5.323\\ UV: --\end{tabular} &
  \begin{tabular}[c]{@{}l@{}}UDV: --\\ UDV-s: --\\ UDV-v1: --\\ UDV-v2: --\\ UDV-ReLU: --\\ UDV(unconstrained): --\\ UFV: --\\ UV-ReLU: --\\ UV: --\\ M: --\end{tabular} &
  \begin{tabular}[c]{@{}l@{}}UDV: --\\ UDV-s: --\\ UDV-v1: --\\ UDV-v2: --UDV-ReLU: --\\ UDV(unconstrained): --\\ UFV: --\\ UV-ReLU: --\\ UV: --\\ E: --\end{tabular} &
  \begin{tabular}[c]{@{}l@{}}UDV: --\\ UDV-s: --\\ UDV-v1: --\\ UDV-v2: --\\ UDV-ReLU: --\\ UDV(unconstrained): --\\ UFV: --\\ UV-ReLU: --\\ UV: --\\ R: 95.65\end{tabular} \\ \hline
LR: $2 \times 10^{0}$ &
  \begin{tabular}[c]{@{}l@{}}UDV: 60.46\\ UDV-s: --\\ UDV-v1: 6.651\\ UDV-v2: 7.366\\ UDV-ReLU: 48.62\\ UDV(unconstrained): --\\ UFV: --\\ UV-ReLU: 49.43\\ UV: --\end{tabular} &
  \begin{tabular}[c]{@{}l@{}}UDV: 5.322\\ UDV-s: --\\ UDV-v1: --\\ UDV-v2: --\\ UDV-ReLU: 5.323\\ UDV(unconstrained): --\\ UFV: --\\ UV-ReLU: 5.323\\ UV: --\end{tabular} &
  \multicolumn{1}{c}{--} &
  \multicolumn{1}{c}{--} &
  \multicolumn{1}{c}{--} \\ \hline
LR: $3 \times 10^{0}$ &
  \begin{tabular}[c]{@{}l@{}}UDV: 106.9\\ UDV-s: --\\ UDV-v1: 6.606\\ UDV-v2: 7.398\\ UDV-ReLU: 48.62\\ UDV(unconstrained): --\\ UFV: --\\ UV-ReLU: 48.50\\ UV: --\end{tabular} &
  \begin{tabular}[c]{@{}l@{}}UDV: 5.335\\ UDV-s: --\\ UDV-v1: 6.380\\ UDV-v2: --\\ UDV-ReLU: 5.323\\ UDV(unconstrained): --\\ UFV: --\\ UV-ReLU: 5.323\\ UV: --\end{tabular} &
  \multicolumn{1}{c}{--} &
  \multicolumn{1}{c}{--} &
  \multicolumn{1}{c}{--} \\ \hline
LR: $5\times 10^{0}$ &
  \multicolumn{1}{c}{--} &
  \multicolumn{1}{c}{--} &
  \multicolumn{1}{c}{--} &
  \multicolumn{1}{c}{--} &
  \multicolumn{1}{c}{--} \\ \hline
\end{tabular}
}
\end{table}

\begin{table}[]
\centering
\captionsetup{font=small}
\caption{Experiments using NAdam optimizer. Not applicable or results with obvious oscillations or divergence are denoted as `--'.}
\label{tab:full_results_nadam}
\resizebox{0.95\textwidth}{!}{
\begin{tabular}{clllll}
\hline
Tasks &
  \multicolumn{2}{c}{Regression (Test Loss)} &
  \multicolumn{3}{c}{Classification (Test Accuracy)} \\ \hline
\begin{tabular}[c]{@{}c@{}}Dataset \\ (Transferred model)\\ \textnormal{ }\end{tabular} &
  \multicolumn{1}{c}{\begin{tabular}[c]{@{}c@{}}HPART \\ --\\ ($\times 10^{-3}$)\end{tabular}} &
  \multicolumn{1}{c}{\begin{tabular}[c]{@{}c@{}}NYCTTD \\ --\\ ($\times 10^{-6}$)\end{tabular}} &
  \multicolumn{3}{c}{\begin{tabular}[c]{@{}c@{}}MNIST \\ (MaxVit-T [M] $|$ EfficientNet-B0 [E] $|$ RegNetX-32GF [R]) \\ ($\times 100\%$)\end{tabular}} \\ \hline
LR: $10^{-6}$ &
  \multicolumn{1}{c}{--} &
  \multicolumn{1}{c}{--} &
  \begin{tabular}[c]{@{}l@{}}UDV: --\\ UDV-s: --\\ UDV-v1: 99.56\\ UDV-v2: 99.53\\ UDV-ReLU: --\\ UDV(unconstrained): 99.53\\ UFV: 99.54\\ UV-ReLU: 99.59\\ UV: 99.53\\ M: 99.41\end{tabular} &
  \begin{tabular}[c]{@{}l@{}}UDV: --\\ UDV-s: --\\ UDV-v1: --\\ UDV-v2: --\\ UDV-ReLU: --\\ UDV(unconstrained): --\\ UFV: 99.26\\ UV-ReLU: 99.30\\ UV: 99.24\\ E: 98.67\end{tabular} &
  \begin{tabular}[c]{@{}l@{}}UDV: --\\ UDV-s: --\\ UDV-v1: 99.49\\ UDV-v2: 99.46\\ UDV-ReLU: --\\ UDV(unconstrained): 99.46\\ UFV: 99.32\\ UV-ReLU: 99.40\\ UV: 99.32\\ R: 99.37\end{tabular} \\ \hline
LR: $10^{-5}$ &
  \multicolumn{1}{c}{--} &
  \multicolumn{1}{c}{--} &
  \begin{tabular}[c]{@{}l@{}}UDV: --\\ UDV-s: --\\ UDV-v1: 99.63\\ UDV-v2: 99.59\\ UDV-ReLU: --\\ UDV(unconstrained): 99.60\\ UFV: 99.62\\ UV-ReLU: 99.61 \\ UV: 99.63\\ M: \textbf{99.62}\end{tabular} &
  \begin{tabular}[c]{@{}l@{}}UDV: --\\ UDV-s: --\\ UDV-v1: 99.50\\ UDV-v2: 99.52\\ UDV-ReLU: --\\ UDV(unconstrained): 99.52\\ UFV: 99.43\\ UV-ReLU: 99.51\\ UV: 99.52\\ E: 99.50\end{tabular} &
  \begin{tabular}[c]{@{}l@{}}UDV: --\\ UDV-s: --\\ UDV-v1: 99.66\\ UDV-v2: 99.56\\ UDV-ReLU: --\\ UDV(unconstrained): 99.56\\ UFV: 99.54\\ UV-ReLU: 99.59\\ UV: 99.61\\ R: \textbf{99.67}\end{tabular} \\ \hline
LR: $10^{-4}$ &
  \begin{tabular}[c]{@{}l@{}}UDV: 2.312\\ UDV-s: 1.916\\ UDV-v1: 1.837\\ UDV-v2: 1.752\\ UDV-ReLU: 2.665\\ UDV(unconstrained): 1.752\\ UFV: \textbf{1.381}\\ UV-ReLU: 1.398\\ UV: \textbf{1.512}\end{tabular} &
  \begin{tabular}[c]{@{}l@{}}UDV: \textbf{5.248}\\ UDV-s: 5.248\\ UDV-v1: 5.248\\ UDV-v2: 5.249\\ UDV-ReLU: \textbf{5.251}\\ UDV(unconstrained): 5.249\\ UFV: 5.256\\ UV-ReLU: 5.258\\ UV: 5.275\end{tabular} &
  \begin{tabular}[c]{@{}l@{}}UDV: \textbf{99.67}\\ UDV-s: \textbf{99.62}\\ UDV-v1: \textbf{99.67}\\ UDV-v2: \textbf{99.61}\\ UDV-ReLU: \textbf{99.63}\\ UDV(unconstrained): 99.62\\ UFV: \textbf{99.68}\\ UV-ReLU: \textbf{99.68}\\ UV: \textbf{99.65}\\ M: 99.59\end{tabular} &
  \begin{tabular}[c]{@{}l@{}}UDV: --\\ UDV-s: --\\ UDV-v1: 99.60\\ UDV-v2: \textbf{99.66}\\ UDV-ReLU: --\\ UDV(unconstrained): \textbf{99.60}\\ UFV: \textbf{99.54}\\ UV-ReLU: 99.53\\ UV: 99.53\\ E: \textbf{99.65}\end{tabular} &
  \begin{tabular}[c]{@{}l@{}}UDV: \textbf{99.52}\\ UDV-s: \textbf{99.55}\\ UDV-v1: \textbf{99.69}\\ UDV-v2: \textbf{99.72}\\ UDV-ReLU: 99.46\\ UDV(unconstrained): \textbf{99.63}\\ UFV: \textbf{99.55}\\ UV-ReLU: \textbf{99.63}\\ UV: \textbf{99.64}\\ R: 99.64\end{tabular} \\ \hline
LR: $10^{-3}$ &
  \begin{tabular}[c]{@{}l@{}}UDV: \textbf{1.638}\\ UDV-s: \textbf{1.691}\\ UDV-v1: \textbf{1.418}\\ UDV-v2: \textbf{1.440}\\ UDV-ReLU: \textbf{1.504}\\ UDV(unconstrained): \textbf{1.437}\\ UFV: 1.607\\ UV-ReLU: \textbf{1.367}\\ UV: 1.654\end{tabular} &
  \begin{tabular}[c]{@{}l@{}}UDV: 5.248\\ UDV-s: \textbf{5.248}\\ UDV-v1: \textbf{5.248}\\ UDV-v2: \textbf{5.248}\\ UDV-ReLU: 5.251\\ UDV(unconstrained): \textbf{5.248}\\ UFV: \textbf{5.248}\\ UV-ReLU: \textbf{5.249}\\ UV: \textbf{5.254}\end{tabular} &
  \begin{tabular}[c]{@{}l@{}}UDV: 99.53\\ UDV-s: 99.56\\ UDV-v1: 99.54\\ UDV-v2: 99.61\\ UDV-ReLU: 99.55\\ UDV(unconstrained): \textbf{99.66}\\ UFV: 99.57\\ UV-ReLU: 99.63\\ UV: 99.59\\ M: 99.58\end{tabular} &
  \begin{tabular}[c]{@{}l@{}}UDV: 97.56\\ UDV-s: \textbf{99.61}\\ UDV-v1: \textbf{99.60}\\ UDV-v2: 99.55\\ UDV-ReLU: \textbf{99.50}\\ UDV(unconstrained): 99.55\\ UFV: 99.28\\ UV-ReLU: \textbf{99.55}\\ UV: \textbf{99.55}\\ E: 99.58\end{tabular} &
  \begin{tabular}[c]{@{}l@{}}UDV: 99.45\\ UDV-s: 99.45\\ UDV-v1: 99.43\\ UDV-v2: 99.46\\ UDV-ReLU: 99.47\\ UDV(unconstrained): 99.50\\ UFV: 99.27\\ UV-ReLU: 99.53\\ UV:99.42\\ R: 99.65\end{tabular} \\ \hline
LR: $10^{-2}$ &
  \begin{tabular}[c]{@{}l@{}}UDV: 3.396\\ UDV-s: 3.287\\ UDV-v1: 2.297\\ UDV-v2: 2.315\\ UDV-ReLU: 2.403\\ UDV(unconstrained): 1.884\\ UFV: --\\ UV-ReLU: \textbf{1.954}\\ UV: --\end{tabular} &
  \begin{tabular}[c]{@{}l@{}}UDV: 5.258\\ UDV-s: 6.918\\ UDV-v1: 5.276\\ UDV-v2: 5.253\\ UDV-ReLU: 5.323\\ UDV(unconstrained): 5.253\\ UFV: 25.24\\ UV-ReLU: 5.323\\ UV: 6.262\end{tabular} &
  \begin{tabular}[c]{@{}l@{}}UDV: --\\ UDV-s: --\\ UDV-v1: --\\ UDV-v2: --\\ UDV-ReLU: --\\ UDV(unconstrained): --\\ UFV: --\\ UV-ReLU: --\\ UV: --\\ M: --\end{tabular} &
  \begin{tabular}[c]{@{}l@{}}UDV: \textbf{99.35}\\ UDV-s: --\\ UDV-v1: 99.31\\ UDV-v2: 99.01\\ UDV-ReLU: 99.42\\ UDV(unconstrained): 98.78\\ UFV: --\\ UV-ReLU: 98.87\\ UV: 98.60\\ E: 99.26\end{tabular} &
  \begin{tabular}[c]{@{}l@{}}UDV: 97.12\\ UDV-s: 99.28\\ UDV-v1: 99.37\\ UDV-v2: 99.38\\ UDV-ReLU: \textbf{99.49}\\ UDV(unconstrained): 99.38\\ UFV: --\\ UV-ReLU: 99.28\\ UV: 98.91\\ R: 99.59\end{tabular} \\ \hline
LR: $10^{-1}$ &
  \begin{tabular}[c]{@{}l@{}}UDV: 13.69\\ UDV-s: --\\ UDV-v1: --\\ UDV-v2: --\\ UDV-ReLU: 48.62\\ UDV(unconstrained): --\\ UFV: 4.918\\ UV-ReLU: 41.61\\ UV: \textbf{1.863}\end{tabular} &
  \begin{tabular}[c]{@{}l@{}}UDV: 5.323\\ UDV-s: --\\ UDV-v1: 6.627\\ UDV-v2: --\\ UDV-ReLU: 5.323\\ UDV(unconstrained): --\\ UFV: --\\ UV-ReLU: 5.323\\ UV: --\end{tabular} &
  \begin{tabular}[c]{@{}l@{}}UDV: --\\ UDV-s: --\\ UDV-v1: --\\ UDV-v2: --\\ UDV-ReLU: --\\ UDV(unconstrained): --\\ UFV: --\\ UV-ReLU: --\\ UV: --\\ M: --\end{tabular} &
  \begin{tabular}[c]{@{}l@{}}UDV: --\\ UDV-s: --\\ UDV-v1: --\\ UDV-v2: --\\ UDV-ReLU: --\\ UDV(unconstrained): --\\ UFV: --\\ UV-ReLU:  --\\ UV: 97.54\\ E: 98.75\end{tabular} &
  \begin{tabular}[c]{@{}l@{}}UDV: --\\ UDV-s: --\\ UDV-v1: --\\ UDV-v2: --\\ UDV-ReLU: --\\ UDV(unconstrained): --\\ UFV: --\\ UV-ReLU: --\\ UV: --\\ R: 99.19\end{tabular} \\ \hline
LR: $10^{0}$ &
  \begin{tabular}[c]{@{}l@{}}UDV: --\\ UDV-s: --\\ UDV-v1: 5.649\\ UDV-v2: \textbf{2.317}\\ UDV-ReLU: 48.62\\ UDV(unconstrained): --\\ UFV: --\\ UV-ReLU: 45.77\\ UV: --\end{tabular} &
  \begin{tabular}[c]{@{}l@{}}UDV: 5.328\\ UDV-s: 53.70\\ UDV-v1: 19.52\\ UDV-v2: --\\ UDV-ReLU: 5.323\\ UDV(unconstrained): --\\ UFV: --\\ UV-ReLU: 5.323\\ UV: --\end{tabular} &
  \begin{tabular}[c]{@{}l@{}}UDV: --\\ UDV-s: --\\ UDV-v1: --\\ UDV-v2: --\\ UDV-ReLU: --\\ UDV(unconstrained): --\\ UFV: --\\ UV-ReLU: --\\ UV: --\\ M: --\end{tabular} &
  \begin{tabular}[c]{@{}l@{}}UDV: --\\ UDV-s: --\\ UDV-v1: --\\ UDV-v2: --\\ UDV-ReLU: --\\ UDV(unconstrained): --\\ UFV: --\\ UV-ReLU: --\\ UV: --\\ E: --\end{tabular} &
  \begin{tabular}[c]{@{}l@{}}UDV: --\\ UDV-s: --\\ UDV-v1: --\\ UDV-v2: --\\ UDV-ReLU: --\\ UDV(unconstrained): --\\ UFV: --\\ UV-ReLU: --\\ UV: --\\ R: --\end{tabular} \\ \hline
LR: $2 \times 10^{0}$ &
  \begin{tabular}[c]{@{}l@{}}UDV: --\\ UDV-s: --\\ UDV-v1: 3.075\\ UDV-v2: \textbf{2.828}\\ UDV-ReLU: 48.62\\ UDV(unconstrained): --\\ UFV: --\\ UV-ReLU: 59.82\\ UV: --\end{tabular} &
  \begin{tabular}[c]{@{}l@{}}UDV:5.377\\ UDV-s: --\\ UDV-v1: 6.011\\ UDV-v2: --\\ UDV-ReLU: 5.323\\ UDV(unconstrained): --\\ UFV: --\\ UV-ReLU: 5.323\\ UV: --\end{tabular} &
  \multicolumn{1}{c}{--} &
  \multicolumn{1}{c}{--} &
  \multicolumn{1}{c}{--} \\ \hline
LR: $3 \times 10^{0}$ &
  \begin{tabular}[c]{@{}l@{}}UDV: --\\ UDV-s: --\\ UDV-v1: 3.882\\ UDV-v2: \textbf{3.128}\\ UDV-ReLU: 48.62\\ UDV(unconstrained): --\\ UFV: --\\ UV-ReLU: --\\ UV: --\end{tabular} &
  \begin{tabular}[c]{@{}l@{}}UDV: 5.401\\ UDV-s: --\\ UDV-v1: 15.27\\ UDV-v2: --\\ UDV-ReLU: 5.323\\ UDV(unconstrained): --\\ UFV: --\\ UV-ReLU: 5.323\\ UV: --\end{tabular} &
  \multicolumn{1}{c}{--} &
  \multicolumn{1}{c}{--} &
  \multicolumn{1}{c}{--} \\ \hline
LR: $5\times 10^{0}$ &
  \multicolumn{1}{c}{--} &
  \multicolumn{1}{c}{--} &
  \multicolumn{1}{c}{--} &
  \multicolumn{1}{c}{--} &
  \multicolumn{1}{c}{--} \\ \hline
\end{tabular}
}
\end{table}

\begin{table}[]
\centering
\captionsetup{font=small}
\caption{Experiments using MBGD optimizer. Not applicable or results with obvious oscillations or divergence are denoted as `--'.}
\label{tab:full_results_MBGD}
\resizebox{\textwidth}{!}{
\begin{tabular}{clllll}
\hline
Tasks &
  \multicolumn{2}{c}{Regression (Test Loss)} &
  \multicolumn{3}{c}{Classification (Test Accuracy)} \\ \hline
\begin{tabular}[c]{@{}c@{}}Dataset \\ (Transferred model)\\ \textnormal{ }\end{tabular} &
  \multicolumn{1}{c}{\begin{tabular}[c]{@{}c@{}}HPART \\ --\\ ($\times 10^{-3}$)\end{tabular}} &
  \multicolumn{1}{c}{\begin{tabular}[c]{@{}c@{}}NYCTTD \\ --\\ ($\times 10^{-6}$)\end{tabular}} &
  \multicolumn{3}{c}{\begin{tabular}[c]{@{}c@{}}MNIST \\ (MaxVit-T [M] $|$ EfficientNet-B0 [E] $|$ RegNetX-32GF [R]) \\ ($\times 100\%$)\end{tabular}} \\ \hline
LR: $10^{-6}$ &
  \multicolumn{1}{c}{--} &
  \multicolumn{1}{c}{--} &
  \multicolumn{1}{c}{--} &
  \multicolumn{1}{c}{--} &
  \multicolumn{1}{c}{--} \\ \hline
LR: $10^{-5}$ &
  \multicolumn{1}{c}{--} &
  \multicolumn{1}{c}{--} &
  \multicolumn{1}{c}{--} &
  \multicolumn{1}{c}{--} &
  \multicolumn{1}{c}{--} \\ \hline
LR: $10^{-4}$ &
  \begin{tabular}[c]{@{}l@{}}UDV: 46.90\\ UDV-s: 45.43\\ UDV-v1: 46.01\\ UDV-v2: 44.35\\ UDV-ReLU: 47.88\\ UDV(unconstrained): 44.35\\ UFV: 11.77\\ UV-ReLU: 12.59\\ UV: 11.98\end{tabular} &
  \begin{tabular}[c]{@{}l@{}}UDV: 26.60\\ UDV-s: 40.06\\ UDV-v1: 49.35\\ UDV-v2: 83.91\\ UDV-ReLU: 20.56\\ UDV(unconstrained): 83.91\\ UFV: 86.68\\ UV-ReLU: --\\ UV: 71.72\end{tabular} &
  \multicolumn{1}{c}{--} &
  \multicolumn{1}{c}{--} &
  \multicolumn{1}{c}{--} \\ \hline
LR: $10^{-3}$ &
  \begin{tabular}[c]{@{}l@{}}UDV: 30.38\\ UDV-s: 21.39\\ UDV-v1: 23.65\\ UDV-v2: 15.77\\ UDV-ReLU: 39.89\\ UDV(unconstrained): 15.77\\ UFV: 6.719\\ UV-ReLU: 6.478\\ UV: 6.345\end{tabular} &
  \begin{tabular}[c]{@{}l@{}}UDV: 9.407\\ UDV-s: 10.29\\ UDV-v1: 10.83\\ UDV-v2: 12.56\\ UDV-ReLU: 7.231\\ UDV(unconstrained): 12.56\\ UFV: 8.236\\ UV-ReLU: 12.67\\ UV: 6.763\end{tabular} &
  \begin{tabular}[c]{@{}l@{}}UDV: --\\ UDV-s: --\\ UDV-v1: 98.44\\ UDV-v2: 98.32\\ UDV-ReLU: --\\ UDV(unconstrained): 98.27\\ UFV: 99.34\\ UV-ReLU: 99.24\\ UV: 99.19\\ M: 99.29\end{tabular} &
  \begin{tabular}[c]{@{}l@{}}UDV: --\\ UDV-s: --\\ UDV-v1: --\\ UDV-v2: --\\ UDV-ReLU: --\\ UDV(unconstrained): --\\ UFV: 98.15\\ UV-ReLU: 98.48\\ UV: 98.55\\ E: 98.74\end{tabular} &
  \begin{tabular}[c]{@{}l@{}}UDV: --\\ UDV-s: --\\ UDV-v1: --\\ UDV-v2: --\\ UDV-ReLU: --\\ UDV(unconstrained): --\\ UFV: 99.20\\ UV-ReLU: 99.16\\ UV: 99.17\\ R: 99.16\end{tabular} \\ \hline
LR: $10^{-2}$ &
  \begin{tabular}[c]{@{}l@{}}UDV: 6.030\\ UDV-s: 6.014\\ UDV-v1: 6.125\\ UDV-v2: 5.877\\ UDV-ReLU: 6.234\\ UDV(unconstrained): 5.877\\ UFV: 5.575\\ UV-ReLU: 2.253\\ UV: 2.251\end{tabular} &
  \begin{tabular}[c]{@{}l@{}}UDV: 5.465\\ UDV-s: 5.421\\ UDV-v1: 5.514\\ UDV-v2: 5.532\\ UDV-ReLU: 5.431\\ UDV(unconstrained): 5.532\\ UFV: 5.293\\ UV-ReLU: 5.521\\ UV: 5.296\end{tabular} &
  \begin{tabular}[c]{@{}l@{}}UDV: 99.38\\ UDV-s: 99.32\\ UDV-v1: 99.47\\ UDV-v2: 99.50\\ UDV-ReLU: 99.53\\ UDV(unconstrained): 99.58\\ UFV: 99.53\\ UV-ReLU: 99.53\\ UV: 99.50\\ M: 99.59\end{tabular} &
  \begin{tabular}[c]{@{}l@{}}UDV: --\\ UDV-s: --\\ UDV-v1: 98.62\\ UDV-v2: 98.97\\ UDV-ReLU: --\\ UDV(unconstrained): 98.40\\ UFV: 99.05\\ UV-ReLU: 99.31\\ UV: 99.24\\ E: 99.43\end{tabular} &
  \begin{tabular}[c]{@{}l@{}}UDV: 99.40\\ UDV-s: 99.46\\ UDV-v1: 99.37\\ UDV-v2: 99.34\\ UDV-ReLU: \textbf{99.47}\\ UDV(unconstrained): 99.34\\ UFV: 99.36\\ UV-ReLU: 99.31\\ UV:99.38\\ R: 99.44\end{tabular} \\ \hline
LR: $10^{-1}$ &
  \begin{tabular}[c]{@{}l@{}}UDV: \textbf{1.398}\\ UDV-s: \textbf{1.407}\\ UDV-v1: \textbf{1.556}\\ UDV-v2: \textbf{1.565}\\ UDV-ReLU: \textbf{1.379}\\ UDV(unconstrained): \textbf{1.565}\\ UFV: \textbf{1.548}\\ UV-ReLU: \textbf{1.493}\\ UV: \textbf{1.583}\end{tabular} &
  \begin{tabular}[c]{@{}l@{}}UDV: 5.253\\ UDV-s: 5.250\\ UDV-v1: 5.256\\ UDV-v2: 5.252\\ UDV-ReLU: 5.277\\ UDV(unconstrained): 5.252\\ UFV: \textbf{5.255}\\ UV-ReLU: 5.323\\ UV: \textbf{5.291}\end{tabular} &
  \begin{tabular}[c]{@{}l@{}}UDV: \textbf{99.59}\\ UDV-s: \textbf{99.38}\\ UDV-v1: \textbf{99.61}\\ UDV-v2: \textbf{99.60}\\ UDV-ReLU: \textbf{99.65}\\ UDV(unconstrained): \textbf{99.59}\\ UFV: \textbf{99.61}\\ UV-ReLU: \textbf{99.60}\\ UV: \textbf{99.63}\\ M: \textbf{99.62}\end{tabular} &
  \begin{tabular}[c]{@{}l@{}}UDV: --\\ UDV-s: \textbf{99.36}\\ UDV-v1: 99.43\\ UDV-v2: 99.53\\ UDV-ReLU: 95.21\\ UDV(unconstrained): \textbf{95.29}\\ UFV: \textbf{99.55}\\ UV-ReLU: 99.52\\ UV: \textbf{99.55}\\ E: \textbf{99.67}\end{tabular} &
  \begin{tabular}[c]{@{}l@{}}UDV: \textbf{99.51}\\ UDV-s: \textbf{99.57}\\ UDV-v1: 99.60\\ UDV-v2: 99.62\\ UDV-ReLU: --\\ UDV(unconstrained): \textbf{99.61}\\ UFV: \textbf{99.59}\\ UV-ReLU: 99.59\\ UV: 98.56\\ R: 99.63\end{tabular} \\ \hline
LR: $10^{0}$ &
  \begin{tabular}[c]{@{}l@{}}UDV: 3.076\\ UDV-s: 2.870\\ UDV-v1: 1.935\\ UDV-v2: 1.830\\ UDV-ReLU: 4.573\\ UDV(unconstrained): 1.810\\ UFV: --\\ UV-ReLU: 48.62\\ UV: --\end{tabular} &
  \begin{tabular}[c]{@{}l@{}}UDV: \textbf{5.248}\\ UDV-s: \textbf{5.248}\\ UDV-v1: \textbf{5.249}\\ UDV-v2: \textbf{5.248}\\ UDV-ReLU: 5.261\\ UDV(unconstrained): \textbf{5.248}\\ UFV: 5.256\\ UV-ReLU: \textbf{5.271}\\ UV: --\end{tabular} &
  \begin{tabular}[c]{@{}l@{}}UDV: --\\ UDV-s: --\\ UDV-v1: --\\ UDV-v2: --\\ UDV-ReLU: --\\ UDV(unconstrained): --\\ UFV: --\\ UV-ReLU: --\\ UV: --\\ M: --\end{tabular} &
  \begin{tabular}[c]{@{}l@{}}UDV: --\\ UDV-s: --\\ UDV-v1: \textbf{99.65}\\ UDV-v2: 99.59\\ UDV-ReLU: --\\ UDV(unconstrained): \textbf{99.62}\\ UFV: --\\ UV-ReLU: \textbf{99.64}\\ UV: --\\ E: 99.55\end{tabular} &
  \begin{tabular}[c]{@{}l@{}}UDV: --\\ UDV-s: --\\ UDV-v1: \textbf{99.67}\\ UDV-v2: \textbf{99.69}\\ UDV-ReLU: --\\ UDV(unconstrained): 99.57\\ UFV: --\\ UV-ReLU: \textbf{99.73}\\ UV: \textbf{99.66}\\ R: \textbf{99.66}\end{tabular} \\ \hline
LR: $2 \times 10^{0}$ &
  \begin{tabular}[c]{@{}l@{}}UDV: 6.244\\ UDV-s: 48.63\\ UDV-v1: 2.559\\ UDV-v2: --\\ UDV-ReLU: 47.50\\ UDV(unconstrained): --\\ UFV: --\\ UV-ReLU: 48.63\\ UV: --\end{tabular} &
  \begin{tabular}[c]{@{}l@{}}UDV: 5.248\\ UDV-s: 5.248\\ UDV-v1: 5.249\\ UDV-v2: 5.249\\ UDV-ReLU: 5.259\\ UDV(unconstrained): 5.249\\ UFV: --\\ UV-ReLU: 5.293\\ UV: --\end{tabular} &
  \begin{tabular}[c]{@{}l@{}}UDV: --\\ UDV-s: --\\ UDV-v1: --\\ UDV-v2: --\\ UDV-ReLU: --\\ UDV(unconstrained): --\\ UFV: --\\ UV-ReLU: --\\ UV: --\\ M: --\end{tabular} &
  \begin{tabular}[c]{@{}l@{}}UDV: --\\ UDV-s: --\\ UDV-v1: 98.37\\ UDV-v2: \textbf{99.66}\\ UDV-ReLU: --\\ UDV(unconstrained): --\\ UFV: --\\ UV-ReLU: --\\ UV: --\\ E: 98.88\end{tabular} &
  \begin{tabular}[c]{@{}l@{}}UDV: --\\ UDV-s: --\\ UDV-v1: 99.66\\ UDV-v2: 99.55\\ UDV-ReLU: --\\ UDV(unconstrained): --\\ UFV: --\\ UV-ReLU: 99.58\\ UV: 99.57\\ R: 99.56\end{tabular} \\ \hline
LR: $3 \times 10^{0}$ &
  \begin{tabular}[c]{@{}l@{}}UDV: --\\ UDV-s: --\\ UDV-v1: --\\ UDV-v2: --\\ UDV-ReLU: 48.62\\ UDV(unconstrained): --\\ UFV: --\\ UV-ReLU: 48.62\\ UV: --\end{tabular} &
  \begin{tabular}[c]{@{}l@{}}UDV: 5.248\\ UDV-s: 5.248\\ UDV-v1: 5.249\\ UDV-v2: 5.249\\ UDV-ReLU: \textbf{5.257}\\ UDV(unconstrained): 5.250\\ UFV: --\\ UV-ReLU: 5.296\\ UV: --\end{tabular} &
  \begin{tabular}[c]{@{}l@{}}UDV: --\\ UDV-s: --\\ UDV-v1: --\\ UDV-v2: --\\ UDV-ReLU: --\\ UDV(unconstrained): --\\ UFV: --\\ UV-ReLU: --\\ UV: --\\ M: --\end{tabular} &
  \begin{tabular}[c]{@{}l@{}}UDV: --\\ UDV-s: --\\ UDV-v1: 99.07\\ UDV-v2: --\\ UDV-ReLU: --\\ UDV(unconstrained): --\\ UFV: --\\ UV-ReLU: --\\ UV: --\\ E: 99.26\end{tabular} &
  \begin{tabular}[c]{@{}l@{}}UDV: --\\ UDV-s: --\\ UDV-v1: 99.48\\ UDV-v2: 99.65\\ UDV-ReLU: --\\ UDV(unconstrained): --\\ UFV: --\\ UV-ReLU: 99.57\\ UV: --\\ R: 99.57\end{tabular} \\ \hline
LR: $5\times 10^{0}$ &
  \multicolumn{1}{c}{--} &
  \multicolumn{1}{c}{--} &
  \begin{tabular}[c]{@{}l@{}}UDV: --\\ UDV-s: --\\ UDV-v1: --\\ UDV-v2: --\\ UDV-ReLU: --\\ UDV(unconstrained): --\\ UFV: --\\ UV-ReLU: --\\ UV: --\\ M: --\end{tabular} &
  \begin{tabular}[c]{@{}l@{}}UDV: --\\ UDV-s: --\\ UDV-v1: --\\ UDV-v2: --\\ UDV-ReLU: --\\ UDV(unconstrained): --\\ UFV: --\\ UV-ReLU: --\\ UV: --\\ E: 99.33\end{tabular} &
  \begin{tabular}[c]{@{}l@{}}UDV: --\\ UDV-s: --\\ UDV-v1: 99.55\\ UDV-v2: 99.41\\ UDV-ReLU: --\\ UDV(unconstrained): --\\ UFV: --\\ UV-ReLU: 99.42\\ UV: --\\ R: 99.55\end{tabular} \\ \hline
\end{tabular}
}
\end{table}

\begin{table}[]
\centering
\captionsetup{font=small}
\caption{Experiments using MBGDM optimizer. Not applicable or results with obvious oscillations or divergence are denoted as `--'.}
\label{tab:full_results_mbgdm}
\resizebox{\textwidth}{!}{
\begin{tabular}{clllll}
\hline
Tasks &
  \multicolumn{2}{c}{Regression (Test Loss)} &
  \multicolumn{3}{c}{Classification (Test Accuracy)} \\ \hline
\begin{tabular}[c]{@{}c@{}}Dataset \\ (Transferred model)\\ \textnormal{ }\end{tabular} &
  \multicolumn{1}{c}{\begin{tabular}[c]{@{}c@{}}HPART \\ --\\ ($\times 10^{-3}$)\end{tabular}} &
  \multicolumn{1}{c}{\begin{tabular}[c]{@{}c@{}}NYCTTD \\ --\\ ($\times 10^{-6}$)\end{tabular}} &
  \multicolumn{3}{c}{\begin{tabular}[c]{@{}c@{}}MNIST \\ (MaxVit-T [M] $|$ EfficientNet-B0 [E] $|$ RegNetX-32GF [R]) \\ ($\times 100\%$)\end{tabular}} \\ \hline
LR: $10^{-6}$ &
  \multicolumn{1}{c}{--} &
  \multicolumn{1}{c}{--} &
  \multicolumn{1}{c}{--} &
  \multicolumn{1}{c}{--} &
  \multicolumn{1}{c}{--} \\ \hline
LR: $10^{-5}$ &
  \multicolumn{1}{c}{--} &
  \multicolumn{1}{c}{--} &
  \multicolumn{1}{c}{--} &
  \multicolumn{1}{c}{--} &
  \multicolumn{1}{c}{--} \\ \hline
LR: $10^{-4}$ &
  \begin{tabular}[c]{@{}l@{}}UDV: 30.52\\ UDV-s: 21.50\\ UDV-v1: 23.79\\ UDV-v2: 15.84\\ UDV-ReLU: 40.00\\ UDV(unconstrained): 15.84\\ UFV: 6.729\\ UV-ReLU: 6.488\\ UV: 6.348\end{tabular} &
  \begin{tabular}[c]{@{}l@{}}UDV: 9.413\\ UDV-s: 10.29\\ UDV-v1: 10.85\\ UDV-v2: 12.59\\ UDV-ReLU: 7.224\\ UDV(unconstrained): 12.59\\ UFV: 8.288\\ UV-ReLU: 12.68\\ UV: 6.776\end{tabular} &
  \multicolumn{1}{c}{--} &
  \multicolumn{1}{c}{--} &
  \multicolumn{1}{c}{--} \\ \hline
LR: $10^{-3}$ &
  \begin{tabular}[c]{@{}l@{}}UDV: 6.105\\ UDV-s: 6.080\\ UDV-v1: 6.178\\ UDV-v2: 5.929\\ UDV-ReLU: 6.341\\ UDV(unconstrained): 5.929\\ UFV: 2.590\\ UV-ReLU: 2.529\\ UV: 2.252\end{tabular} &
  \begin{tabular}[c]{@{}l@{}}UDV: 5.457\\ UDV-s: 5.412\\ UDV-v1: 5.516\\ UDV-v2: 5.534\\ UDV-ReLU: 5.431\\ UDV(unconstrained): 5.534\\ UFV: 5.294\\ UV-ReLU: 5.520\\ UV: 5.296\end{tabular} &
  \begin{tabular}[c]{@{}l@{}}UDV: 99.26\\ UDV-s: 99.38\\ UDV-v1: 99.39\\ UDV-v2: 99.52\\ UDV-ReLU: --\\ UDV(unconstrained): 99.51\\ UFV: 99.56\\ UV-ReLU: 99.46\\ UV: 99.54\\ M: 99.58\end{tabular} &
  \begin{tabular}[c]{@{}l@{}}UDV: --\\ UDV-s: --\\ UDV-v1: 99.00\\ UDV-v2: 99.05\\ UDV-ReLU: --\\ UDV(unconstrained): 99.08\\ UFV: 99.18\\ UV-ReLU: 99.37\\ UV: 99.36\\ E: 99.45\end{tabular} &
  \begin{tabular}[c]{@{}l@{}}UDV: 99.50\\ UDV-s: 99.43\\ UDV-v1: 99.39\\ UDV-v2: 99.37\\ UDV-ReLU: --\\ UDV(unconstrained): 99.40\\ UFV: 99.38\\ UV-ReLU: 99.36\\ UV: 99.30\\ R: 99.41\end{tabular} \\ \hline
LR: $10^{-2}$ &
  \begin{tabular}[c]{@{}l@{}}UDV: 1.357\\ UDV-s: 1.388\\ UDV-v1: 1.554\\ UDV-v2: 1.569\\ UDV-ReLU: \textbf{1.312}\\ UDV(unconstrained): 1.569\\ UFV: \textbf{1.357}\\ UV-ReLU: 1.314\\ UV: \textbf{1.337}\end{tabular} &
  \begin{tabular}[c]{@{}l@{}}UDV: 5.253\\ UDV-s: 5.250\\ UDV-v1: 5.256\\ UDV-v2: 5.252\\ UDV-ReLU: 5.276\\ UDV(unconstrained): 5.252\\ UFV: 5.254\\ UV-ReLU: 5.267\\ UV: 5.279\end{tabular} &
  \begin{tabular}[c]{@{}l@{}}UDV: \textbf{99.67}\\ UDV-s: 99.38\\ UDV-v1: 99.63\\ UDV-v2: 99.60\\ UDV-ReLU: \textbf{99.62}\\ UDV(unconstrained): 99.59\\ UFV: \textbf{99.65}\\ UV-ReLU: \textbf{99.63}\\ UV: \textbf{99.69}\\ M: \textbf{99.64}\end{tabular} &
  \begin{tabular}[c]{@{}l@{}}UDV: 99.48\\ UDV-s: 99.60\\ UDV-v1: 99.54\\ UDV-v2: 99.59\\ UDV-ReLU: 99.54\\ UDV(unconstrained): 99.55\\ UFV: 99.50\\ UV-ReLU: 99.53\\ UV: 99.50\\ E: 99.59\end{tabular} &
  \begin{tabular}[c]{@{}l@{}}UDV: \textbf{99.74}\\ UDV-s: \textbf{99.72}\\ UDV-v1: 99.61\\ UDV-v2: \textbf{99.67}\\ UDV-ReLU: \textbf{99.65}\\ UDV(unconstrained): 99.63\\ UFV: \textbf{99.66}\\ UV-ReLU: 99.61\\ UV:99.53\\ R: 99.60\end{tabular} \\ \hline
LR: $10^{-1}$ &
  \begin{tabular}[c]{@{}l@{}}UDV: \textbf{1.345}\\ UDV-s: \textbf{1.339}\\ UDV-v1: \textbf{1.313}\\ UDV-v2: \textbf{1.302}\\ UDV-ReLU: 1.318\\ UDV(unconstrained): \textbf{1.302}\\ UFV: 1.358\\ UV-ReLU: \textbf{1.244}\\ UV: --\end{tabular} &
  \begin{tabular}[c]{@{}l@{}}UDV: \textbf{5.248}\\ UDV-s: 5.248\\ UDV-v1: 5.249\\ UDV-v2: 5.248\\ UDV-ReLU: 5.259\\ UDV(unconstrained): 2.248\\ UFV: \textbf{5.251}\\ UV-ReLU: \textbf{5.264}\\ UV: \textbf{5.259}\end{tabular} &
  \begin{tabular}[c]{@{}l@{}}UDV: 99.61\\ UDV-s: \textbf{99.60}\\ UDV-v1: \textbf{99.65}\\ UDV-v2: \textbf{99.68}\\ UDV-ReLU: --\\ UDV(unconstrained): \textbf{99.70}\\ UFV: --\\ UV-ReLU: 99.63\\ UV: --\\ M: --\end{tabular} &
  \begin{tabular}[c]{@{}l@{}}UDV: \textbf{99.63}\\ UDV-s: \textbf{99.63}\\ UDV-v1: \textbf{99.64}\\ UDV-v2: \textbf{99.66}\\ UDV-ReLU: \textbf{99.61}\\ UDV(unconstrained): \textbf{99.59}\\ UFV: \textbf{99.68}\\ UV-ReLU: \textbf{99.68}\\ UV: \textbf{99.59}\\ E: \textbf{99.63}\end{tabular} &
  \begin{tabular}[c]{@{}l@{}}UDV: 99.60\\ UDV-s: 99.57\\ UDV-v1: \textbf{99.71}\\ UDV-v2: 99.67\\ UDV-ReLU: --\\ UDV(unconstrained): \textbf{99.67}\\ UFV: 99.63\\ UV-ReLU: \textbf{99.66}\\ UV: \textbf{99.56}\\ R: \textbf{99.67}\end{tabular} \\ \hline
LR: $10^{0}$ &
  \begin{tabular}[c]{@{}l@{}}UDV: 123.6\\ UDV-s: --\\ UDV-v1: --\\ UDV-v2: --\\ UDV-ReLU: 47.03\\ UDV(unconstrained): --\\ UFV: --\\ UV-ReLU: 48.62\\ UV: --\end{tabular} &
  \begin{tabular}[c]{@{}l@{}}UDV: 5.248\\ UDV-s: 5.248\\ UDV-v1: 5.249\\ UDV-v2: 5.249\\ UDV-ReLU: 5.251\\ UDV(unconstrained): 5.249\\ UFV: --\\ UV-ReLU: 5.281\\ UV: --\end{tabular} &
  \begin{tabular}[c]{@{}l@{}}UDV: --\\ UDV-s: --\\ UDV-v1: --\\ UDV-v2: --\\ UDV-ReLU: --\\ UDV(unconstrained): --\\ UFV: --\\ UV-ReLU: --\\ UV: --\\ M: --\end{tabular} &
  \begin{tabular}[c]{@{}l@{}}UDV: --\\ UDV-s: --\\ UDV-v1: 99.54\\ UDV-v2: --\\ UDV-ReLU: --\\ UDV(unconstrained): --\\ UFV: --\\ UV-ReLU: --\\ UV: --\\ E: 99.24\end{tabular} &
  \begin{tabular}[c]{@{}l@{}}UDV: --\\ UDV-s: --\\ UDV-v1: 99.52\\ UDV-v2: 99.58\\ UDV-ReLU: --\\ UDV(unconstrained): --\\ UFV: --\\ UV-ReLU: 99.53\\ UV: --\\ R: 99.34\end{tabular} \\ \hline
LR: $2 \times 10^{0}$ &
  \begin{tabular}[c]{@{}l@{}}UDV: --\\ UDV-s: --\\ UDV-v1: --\\ UDV-v2: --\\ UDV-ReLU: 48.62\\ UDV(unconstrained): --\\ UFV: --\\ UV-ReLU: 48.62\\ UV: --\end{tabular} &
  \begin{tabular}[c]{@{}l@{}}UDV: 5.248\\ UDV-s: 5.248\\ UDV-v1: 5.249\\ UDV-v2: 5.248\\ UDV-ReLU: \textbf{5.249}\\ UDV(unconstrained): 5.248\\ UFV: --\\ UV-ReLU: 5.289\\ UV: --\end{tabular} &
  \begin{tabular}[c]{@{}l@{}}UDV: --\\ UDV-s: --\\ UDV-v1: --\\ UDV-v2: --\\ UDV-ReLU: --\\ UDV(unconstrained): --\\ UFV: --\\ UV-ReLU: --\\ UV: --\\ M: --\end{tabular} &
  \begin{tabular}[c]{@{}l@{}}UDV: --\\ UDV-s: --\\ UDV-v1: --\\ UDV-v2: --\\ UDV-ReLU: --\\ UDV(unconstrained): --\\ UFV: --\\ UV-ReLU: --\\ UV: --\\ E: 98.95\end{tabular} &
  \begin{tabular}[c]{@{}l@{}}UDV: --\\ UDV-s: --\\ UDV-v1: 99.31\\ UDV-v2: 99.37\\ UDV-ReLU: --\\ UDV(unconstrained): --\\ UFV: --\\ UV-ReLU: 99.28\\ UV: --\\ R: 99.31\end{tabular} \\ \hline
LR: $3 \times 10^{0}$ &
  \begin{tabular}[c]{@{}l@{}}UDV: --\\ UDV-s: --\\ UDV-v1: --\\ UDV-v2: --\\ UDV-ReLU: 48.62\\ UDV(unconstrained): --\\ UFV: --\\ UV-ReLU: 48.62\\ UV: --\end{tabular} &
  \begin{tabular}[c]{@{}l@{}}UDV: 5.248\\ UDV-s: \textbf{5.248}\\ UDV-v1: \textbf{5.248}\\ UDV-v2: \textbf{5.248}\\ UDV-ReLU: 5.249\\ UDV(unconstrained): \textbf{5.248}\\ UFV: --\\ UV-ReLU: 5.292\\ UV: --\end{tabular} &
  \begin{tabular}[c]{@{}l@{}}UDV: --\\ UDV-s: --\\ UDV-v1: --\\ UDV-v2: --\\ UDV-ReLU: --\\ UDV(unconstrained): --\\ UFV: --\\ UV-ReLU: --\\ UV: --\\ M: --\end{tabular} &
  \begin{tabular}[c]{@{}l@{}}UDV: --\\ UDV-s: --\\ UDV-v1: --\\ UDV-v2: --\\ UDV-ReLU: --\\ UDV(unconstrained): --\\ UFV: --\\ UV-ReLU: --\\ UV: --\\ E: 98.70\end{tabular} &
  \begin{tabular}[c]{@{}l@{}}UDV: --\\ UDV-s: --\\ UDV-v1: --\\ UDV-v2: --\\ UDV-ReLU: --\\ UDV(unconstrained): --\\ UFV: --\\ UV-ReLU: --\\ UV: --\\ R: 99.12\end{tabular} \\ \hline
LR: $5\times 10^{0}$ &
  \multicolumn{1}{c}{--} &
  \multicolumn{1}{c}{--} &
  \begin{tabular}[c]{@{}l@{}}UDV: --\\ UDV-s: --\\ UDV-v1: --\\ UDV-v2: --\\ UDV-ReLU: --\\ UDV(unconstrained): --\\ UFV: --\\ UV-ReLU: --\\ UV: --\\ M: --\end{tabular} &
  \begin{tabular}[c]{@{}l@{}}UDV: --\\ UDV-s: --\\ UDV-v1: --\\ UDV-v2: --\\ UDV-ReLU: --\\ UDV(unconstrained): --\\ UFV: --\\ UV-ReLU: --\\ UV: --\\ E: --\end{tabular} &
  \begin{tabular}[c]{@{}l@{}}UDV: --\\ UDV-s: --\\ UDV-v1: 99.55\\ UDV-v2: 99.41\\ UDV-ReLU: --\\ UDV(unconstrained): --\\ UFV: --\\ UV-ReLU: 99.42\\ UV: --\\ R: 99.55\end{tabular} \\ \hline
\end{tabular}
}
\end{table}

\clearpage

\end{document}